\title{HVAdam: A Full-Dimension Adaptive Optimizer}
\author{
    Yiheng Zhang\textsuperscript{\rm 1}\equalcontrib, Shaowu Wu\textsuperscript{\rm 1}\equalcontrib, Yuanzhuo Xu\textsuperscript{\rm 1}, Jiajun Wu\textsuperscript{\rm 2}, Shang Xu\textsuperscript{\rm 3}, STEVE DREW\textsuperscript{\rm 2}, Xiaoguang Niu\textsuperscript{\rm 1} \thanks{Corresponding Author}
}
\begin{document}

\maketitle

\begin{abstract}
Adaptive optimizers such as Adam and RMSProp have gained attraction in complex neural networks, including generative adversarial networks (GANs) and Transformers, thanks to their stable performance and fast convergence compared to non-adaptive optimizers. A frequently overlooked limitation of adaptive optimizers is that adjusting the learning rate of each dimension individually would ignore the knowledge of the whole loss landscape, resulting in slow updates of parameters, invalidating the learning rate adjustment strategy and eventually leading to widespread insufficient convergence of parameters. In this paper, we propose HVAdam, a novel optimizer that associates all dimensions of the parameters to find a new parameter update direction, leading to a refined parameter update strategy for an increased convergence rate.
We validated HVAdam in extensive experiments, showing its faster convergence, higher accuracy, and more stable performance on image classification, image generation, and natural language processing tasks. Particularly, HVAdam achieves a significant improvement on GANs compared with other state-of-the-art methods, especially in Wasserstein-GAN (WGAN) and its improved version with gradient penalty (WGAN-GP). Code is available at \url{https://github.com/ChihayaAnn/HVAdam}.
\end{abstract}

%

\section{Introduction}
\label{problem}

\begin{figure}
    \centering
    \includegraphics[width=\linewidth]{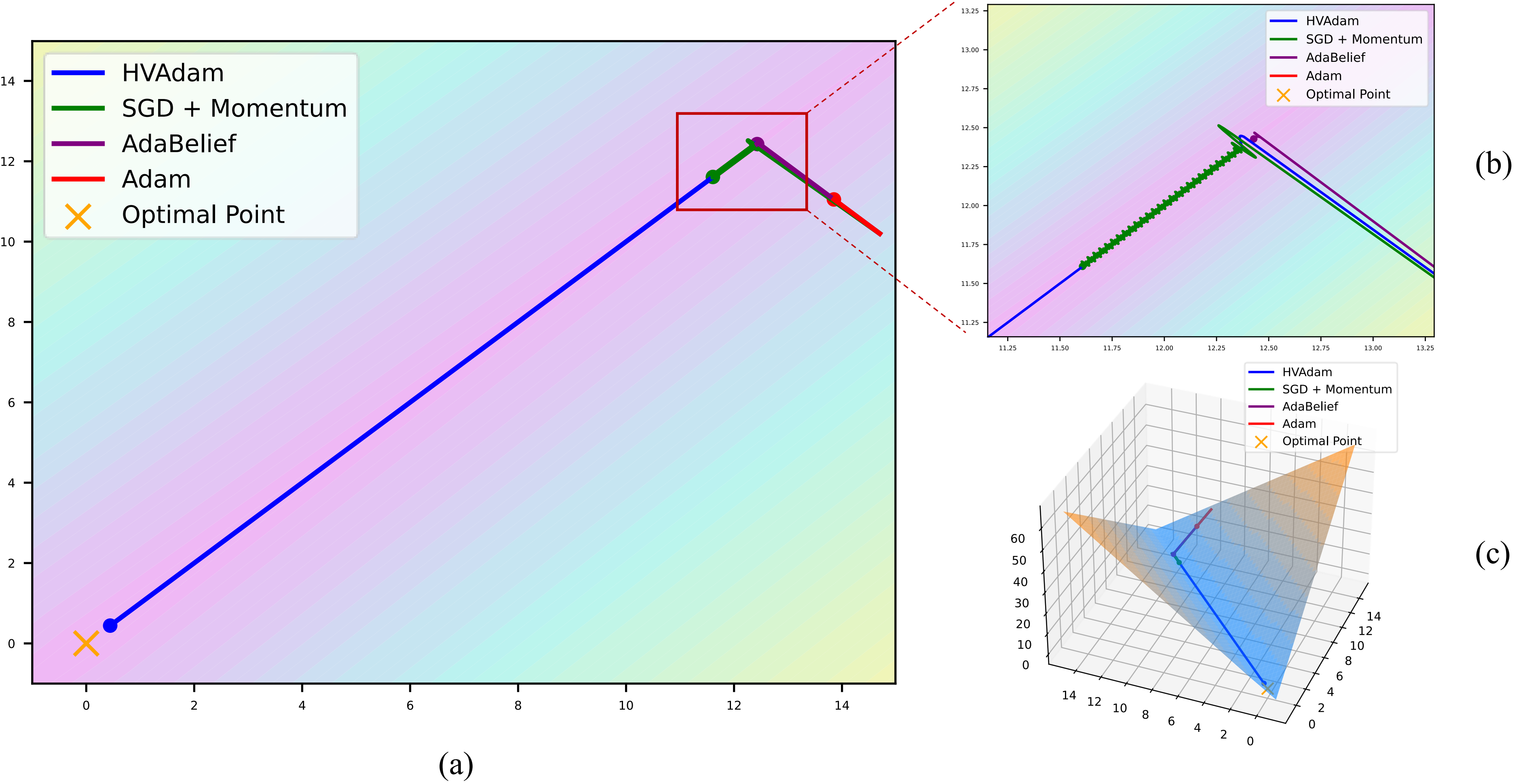}
    \caption{A typical example of the valley dilemma. (a) (c) depict the trajectories of {SGD}, {Adam}, {AdaBelief} and {HVAdam} in both 2D and 3D plots.  (b) is a close-up of the red box area in (a), showing the slow convergence and zigzagging behavior of Adam, AdaBelief, and SGD. However, HVAdam demonstrates rapid convergence along the hidden vector direction.}
    \label{fig:valley-dilemma}
\end{figure}
 \begin{figure*}[ht]
    \centering
    \includegraphics[width=\linewidth]{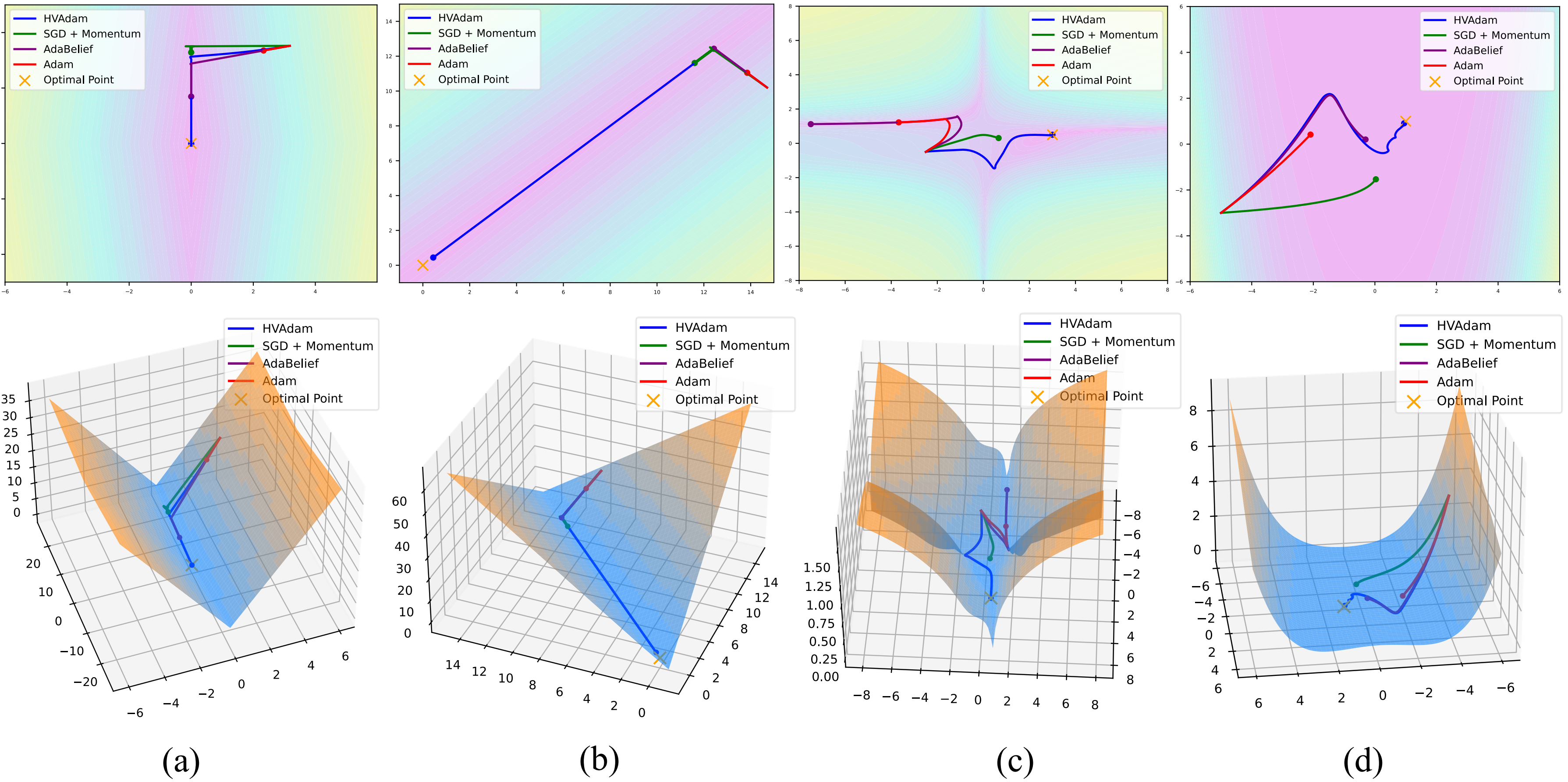}
    \caption{Trajectories of {SGD}, {Adam}, {AdaBelief} and {HVAdam}. The functions are $f_1$, $f_2$, $f_3$ and $f_4$ from (a) to (d). The functions are mentioned in the supplementary material's Sec.E. HVAdam reaches the optimal point (marked as orange cross in 2D and 3D plots) the fastest in all cases.}
    \label{fig:toy}
\end{figure*}

Optimizers play a crucial role in machine learning, efficiently minimizing loss and achieving generalization. There are two main types of state-of-the-art optimizers: Nonadaptive optimizers, for instance, stochastic gradient descent (SGD) \cite{robbins1951stochastic}, use a global learning rate for all parameters. In contrast, adaptive optimizers, such as RMSProp \cite{graves2013generating} and Adam \cite{kingma2014adam}, use the partial derivative of each parameter to adjust their learning rates separately. When applied correctly, adaptive optimizers can significantly accelerate the convergence of parameters, even in cases where partial derivatives are minimal. These adaptive optimizers are crucial for overcoming saddle points or navigating regions characterized by small partial derivatives, which can otherwise hinder the convergence process \cite{xie2022adaptive}.

 In deep learning, model parameters often change significantly during training, and these changes frequently reverse direction, especially when the model is getting close to its optimal performance (near the minimum of the loss function). Adam dynamically adjusts the learning rate according to the magnitude of the gradients, while SGD incorporates momentum to smooth the gradient and does not directly modify the learning rate. AdaBelief \cite{zhuang2020adabelief}, a variant of Adam, adjusts the learning rate based on the deviation of the gradient from the moving average.
However, these approaches may be problematic in complex landscapes, such as valley-like regions that are quite common within the loss functions of deep learning models \cite{zhuang2020adabelief}. In such regions, the true direction of optimization often has a small gradient, making it difficult for traditional optimizers to identify and efficiently optimize in these directions. Traditional optimizers tend to oscillate along valley sides, where gradients are larger, leading to slow and inefficient convergence. We refer to this phenomenon as the  \textit{Valley Dilemma}, which occurs when optimization techniques fail to converge effectively due to the challenging landscape as shown in Figure~\ref{fig:valley-dilemma}.
 
 To further demonstrate the \textit{Valley Dilemma}, we illustrate its effect on optimizers using two more common loss functions (e.g., (c) and (d) in Figure~\ref{fig:toy}). In valley-like regions, traditional optimizers face several key challenges: \textbf{1. Identifying Update Direction:} In valley-like regions, the true optimal direction (along the valley floor) often has a very small gradient, making it difficult for traditional optimizers to detect and follow this direction, as shown in Figure~\ref{fig:valley-dilemma}. \textbf{2. Changing Landscapes:} As training progresses, the direction of optimization can change, as shown in Figure~\ref{fig:toy}(b)(c), especially in complex and dynamic landscapes. \textbf{3. Adjusting Learning Rate:} Since this stable update direction incorporates trend information, the learning rate needs to be adjusted accordingly. To help convergence in other directions, we should design a more reasonable learning rate adjustment strategy based on the information of the trend for them.

To address the aforementioned challenges, we propose the hidden-vector-based adaptive gradient method (HVAdam), an adaptive optimizer that considers the full dimensions of the parameters. With the help of the \textit{hidden vector}, HVAdam can determine the stable update direction, which enables faster convergence along the descending trajectory of the loss function through an increased learning rate compared to the traditional zigzag approach. Additionally, HVAdam employs a restart strategy to adapt the change of the hidden vector, namely the changing direction of the landscapes.
Lastly, HVAdam incorporates information from all dimensions to adjust the learning rate for each dimension using a new preconditioning matrix based on the hidden vector. Contributions of our paper can be summarized as follows:

\begin{itemize}
    \item We propose HVAdam, which constructs a vector that approximates the invariant components within the gradients, namely the \textit{ hidden vector}, to more effectively guide parameter updates as a solution to the \textit{Valley Dilemma}.
    \item We demonstrate HVAdam's convergence properties under online convex and non-convex stochastic optimization, emphasizing its efficacy and robustness.
    \item We empirically evaluate the performance of HVAdam, demonstrating its significant improvements across image classification, NLP, and GANs tasks.
\end{itemize}

\section{Background and Motivation}
\paragraph{Notations} 
\begin{itemize}[topsep=0pt,parsep=0pt,partopsep=0pt]
    \item $f(\theta) \in \mathbb{R}, \theta \in \mathbb{R}^d$: $f(\theta)$ is the scalar-valued function to minimize, $\theta$ is the parameter in $\mathbb{R}^d$ to be optimal.
    \item  $\prod_{\mathcal{F},S}(y) = \mathrm{argmin}_{x \in \mathcal{F}} \vert \vert S^{1/2} (x-y) \vert \vert$:
    The projection of $y$ onto convex feasible set $\mathcal{F}$.
    \item $g_t\in \mathbb{R}^d$: The gradient at step $t$.
    \item $m_t\in \mathbb{R}^d$: The exponential moving average (EMA) of $g_t$.
    \item $v_t\in \mathbb{R}^d$: The hidden vector calculated by $v_{t-1}$ and $m_t$.
	\item $h_t\in \mathbb{R}$: The EMA of $\Vert m_t\Vert^2$.
    \item $p_t\in \mathbb{R}^d$: $p_t=(g_t - v_{t-1})^2$. $p_t$ is an intermediate variable.
	\item $\eta_t\in \mathbb{R}^d$: The factor measure the size of $p_t$.
	\item $a_t, s_t\in \mathbb{R}^d$: The EMA of $g_t^2$ and $\eta_t p_t$.
    \item $\alpha_1, \alpha_2 ,\gamma\in \mathbb{R}$: $\alpha_1$ is the unadjusted learning rate for $m_t$; $\alpha_2$ is the unadjusted learning rate for $v_t$ ; $\gamma$ is a constant to limit the value of $\eta_t$, which is usually set as $0$. These are hyperparameters.
	\item $\delta_t\in \mathbb{R}$: The factor used to adjust $\alpha_{2t}$.
	\item $\epsilon \in \mathbb{R}$: The hyperparameter $\epsilon$ is a small number, used for avoiding division by 0.
\item $lr(\delta_{t_2},\widehat{\delta_{t_2}})\in \mathbb{R}\times \mathbb{R} \rightarrow \mathbb{R}$: $\left\{
        \begin{array}{ll}
            10^{\delta_{t_2}\cdot6-3} ,& \text{if }\widehat{\delta_{t_2}} \ge 0.1 \\
            0,   & \text{otherwise}
        \end{array}
    \right.
$. The function can be set as other more suitable choices; we will not change it in the rest of the paper unless otherwise specified.
    \item $\beta_1, \beta_2\in \mathbb{R}$: The hyperparameter for EMA, $0\le \beta_1,\beta_2 < 1$, typically set as 0.9 and 0.999.
\end{itemize}

\subsection{Adaptive Moment Estimation (Adam)}

\begin{algorithm}[H]
\textbf{Input:} $\alpha_1$, $\beta_1$ , $\beta_2$, $\epsilon$\\
\textbf{Initialize} $\theta_0$, $m_0 \leftarrow 0$ , $v_0 \leftarrow 0$, $t \leftarrow 0$ 
\begin{algorithmic}[1]
\WHILE{$\theta_t$ not converged}
\STATE $t \leftarrow t + 1 $ 
\STATE $g_t \leftarrow \nabla_{\theta}f_t(\theta_{t-1})$ 
\STATE $m_t \leftarrow \beta_1 m_{t-1} + (1 - \beta_1) g_t$ 
\STATE $a_t \leftarrow \beta_2 a_{t-1} + (1 - \beta_2) g_t^2$ 
\STATE \textbf{Bias Correction} 
\STATE \hspace{3mm} $\widehat{m_t} \leftarrow \frac{m_t}{1-\beta_1^t}$, $\widehat{a_t} \leftarrow \frac{a_t}{1-\beta_2^t}$ 
\STATE  \textbf{Update} 
\STATE  \hspace{3mm} $\theta_t \leftarrow \prod_{\mathcal{F},\sqrt{\widehat{a_t}}} \Big( \theta_{t-1} - \frac{\alpha_1 \widehat{ m_t}} { \sqrt{ \widehat{ a_t}} +\epsilon } \Big)$
\ENDWHILE
\caption{Adam Optimizer}
\label{algo:adam}
\end{algorithmic}
\end{algorithm}

Adam integrates the merits of the adaptive gradient algorithm (AdaGrad) and root mean square propagation (RMSProp) by adaptively adjusting the learning rates based on estimates of the first and second moments of the gradients, making it exceptionally suitable for large-scale and complex machine learning problems. The algorithm is summarized in Algorithm ~\ref{algo:adam}, and all operations are element-wise.

In Algorithm.~\ref{algo:adam}, $\beta_1$ and $\beta_2$ are hyperparameters that represent the exponential decay rates of the moving averages of the past gradients and squared gradients, respectively. The learning rate is denoted by $\alpha$, and $\epsilon$ is a small constant added to the denominator to ensure numerical stability.

Adam adapts the learning rate for each parameter. Its straightforward implementation has made it a popular choice in the field of deep learning, particularly when working with large datasets or high-dimensional spaces.

\subsection{Problems and Motivation}
Traditional Adam and its variants do not effectively solve the valley dilemma. Adam adjusts the learning rate for each parameter. In regions with small partial derivatives or saddle points, the gradient $g_{t,i}$ at step $t$ is small, leading to a small $a_{t,i}$, since it is the EMA of $g_{t,i}^2$. As $\sqrt{a_{t,i}}$ is in the denominator, Adam takes a relatively large step in the $\theta_{t,i}$ direction due to this small denominator. This adjustment strategy is effective in escaping saddle points or regions with small partial derivatives because it enables larger updates in such scenarios, which is critical for maintaining the momentum of the optimizer \cite{xie2022adaptive}. However, for the valley dilemma case, all related $g_{t, i}$ can be large, so the corresponding $a_{t, i}$ will also be large. This results in small update step sizes for all parameters, including in directions where acceleration is needed for effective updating. Therefore, this direction is ``hidden'' from Adam. Adam adjusts the learning rates for each parameter separately, which allows it to perform well when dealing with problems where parameter updates are primarily aligned with the coordinate axes. However, this approach is less effective for problems that require significant updates in non-axis-aligned directions. This limitation highlights that Adam’s learning rate adjustment strategy is most effective for problems that are “close to axis-aligned”, as discussed in \cite{balles2018dissecting}. Most other first-order adaptive optimizers face similar challenges. Therefore, in the context of the non-axis-aligned valley dilemma, the critical challenge is to identify this ``hidden'' direction, which we refer to as the \textit{hidden vector}. 

We provide specific examples to clarify our intuitive explanations regarding the valley dilemma and hidden vector. As shown in Figure~\ref{fig:toy}, panels (a) and (b) depict two typical valley functions. The hidden vector, corresponding to the gradient at the bottom of the valley, represents the intersection line of two planes. The HVadam converges most rapidly towards the direction of the hidden vector and achieves the fastest convergence. Figure~\ref{fig:toy}(a) illustrates a typical coordinate-aligned valley problem, where AdaBelief also converges quickly. However, for the non-axis-aligned valley problem represented in Figure~\ref{fig:toy}(b), Adam’s convergence speed is relatively slow. We have further generalized the concept of the hidden vector to functions where the hidden vector frequently changes using a restart strategy. Furthermore, we have verified the convergence performance of HVAdam in more general cases, as shown in Figure~\ref{fig:toy}(c)(d), where HVAdam continues to demonstrate good convergence. These examples provide insight into the local behavior of optimizers in deep learning. Their behavior reflects common patterns seen in deep networks, such as ReLU activation, neuron connections, cross-entropy loss, and smooth activations \cite{zhuang2020adabelief}. Further details on the analysis of these examples are available in Sec.E of the supplementary material.

\section{HVAdam}
\begin{figure*}[ht]
\centering
\noindent
\scalebox{1.0}{
\begin{tabular}{c|ccccc}
\hline
    Step & 1  & 2 & 3  & 4 & 5  \\ \hline
    $g_x$   & 5  & -3 & 5  & -3 & 5  \\ \cline{2-6} 
    $g_y$  & -3 & 5 & -3 & 5 & -3 \\ \hline
    $\widehat{m_x}$   & 5  & 0.7895 & 2.3432  & 0.7895 & 1.8177  \\ \cline{2-6} 
                           $\widehat{m_y}$ & -3  & -1.2105 & -0.3432  & 1.2105 & 0.1823  \\ \hline
    $v_x$  & 5  & 1 & 1  & 1 & 1  \\ \cline{2-6} 
                           $v_y$   & -3  & 1 & 1  & 1 & 1  \\ \hline
    $v^*_x$  & 1  & 1 & 1  & 1 & 1  \\ \cline{2-6} 
                           $v^*_y$   & 1  & 1 & 1  & 1 & 1  \\ \hline
\end{tabular}
}
\begin{tabular}{@{}c@{}} 
\includegraphics[width=0.25\linewidth]{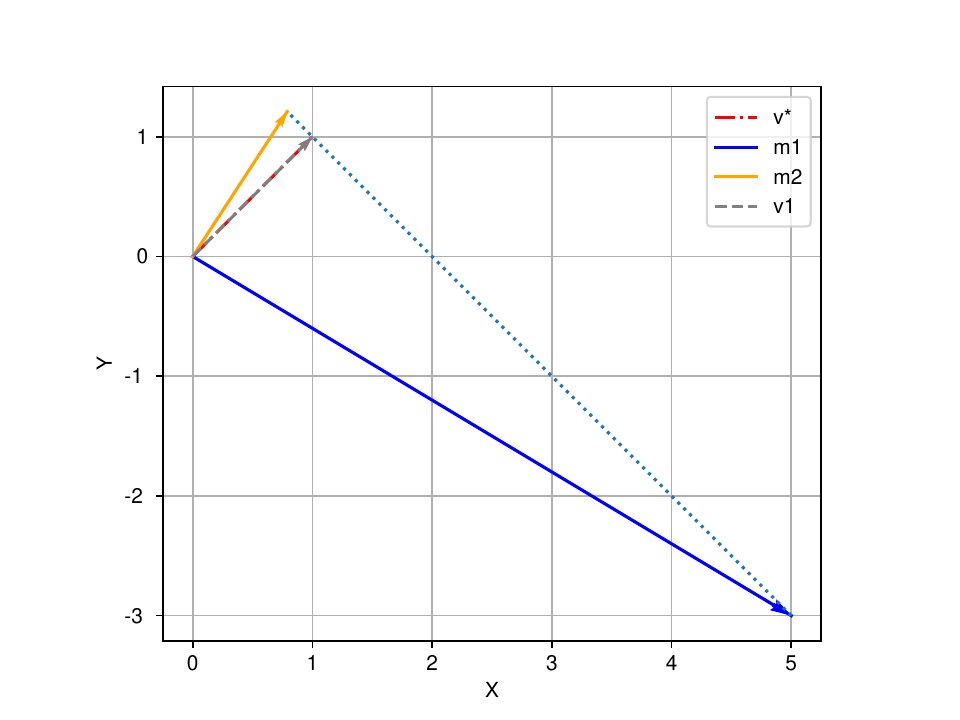}
\end{tabular}

\captionof{figure}{ Consider $f(x,y)=4\vert x-y \vert + \vert x+y \vert$. \textit{Left: } 
Optimization process for the example function. Our algorithm uses only two steps to get the hidden vector $v^*$. \textit{Right: } the figure shows how we make $v_t$ approximate $v^*$ for the function.}
\label{fig:hd}
\end{figure*}

\begin{algorithm}
\textbf{Input:} $\alpha_1$, $\beta_1$ , $\beta_2$, $\epsilon$, $\gamma$\\
\textbf{Initialize} $\theta_0$, $\alpha_1 \leftarrow \alpha_2$, $m_0 \leftarrow 0$ , $s_0 \leftarrow 0$, $v_0 \leftarrow 0$, $t \leftarrow 0$, $t_2 \leftarrow -1$, $\delta_0 \leftarrow 0$ 
\begin{algorithmic}[1]
\WHILE{$\theta_t$ not converged}
\STATE $t \leftarrow t + 1 $
\STATE $t_2 \leftarrow t_2 + 1 $
\STATE $g_t \leftarrow \nabla_{\theta}f_t(\theta_{t-1})$
\STATE $m_t \leftarrow \beta_1 m_{t-1} + (1 - \beta_1) g_t$
\STATE {$p_t \leftarrow {(g_t - v_{t-1})^2}$}
\STATE {$\eta_t \leftarrow \frac{p_t}{(g_t - m_t)^2 + \gamma p_t + \epsilon}$}
\STATE {$s_{t} \leftarrow \beta_2 s_{t-1} + (1 - \beta_2) {\eta_t p_t}+\epsilon$}
\STATE \textbf{Bias Correction}
\STATE \hspace{3mm} $\widehat{m_t} \leftarrow \frac{m_t} {1-\beta_1^t}$,  $\widehat{s_t} \leftarrow \frac{s_t} {1-\beta_2^t}$
\IF{$t_2$ \textbf{not} $0$}
\IF{$v_{t-1} = \widehat{m_t}$}
\STATE {$k_t=0$}
\ELSE
\STATE {$k_{t} \leftarrow \frac{\langle v_{t_2-1}-\widehat{m_t} , v_{t_2-1} \rangle}{\Vert v_{t_2-1}-\widehat{m_t} \Vert^2}$} 
\ENDIF
\STATE {$v_{t_2} \leftarrow {k_t \widehat{m_t}+(1-k_{t})v_{t_2-1}}$}
\STATE {$\delta_{t_2} \leftarrow \beta_2\delta_{t_2-1} + (1-\beta_2)\text{cos}\langle v_{t_2},\widehat{m_t}\rangle $}
\STATE {$\widehat{\delta_{t_2}} \leftarrow \frac{\delta_{t_2}}{1-\beta_2^{t_2}}$}
\STATE {$b_{t} \leftarrow lr(\delta_{t_2},\widehat{\delta_{t_2}})$}
\IF{$b_t = 0$}
\STATE{$t_2=-1$}
\ENDIF
\ELSE
\STATE $v_{0} \leftarrow \widehat{m_t}$, $\delta_0 \leftarrow 0$
\ENDIF
\STATE \textbf{Update} \\
\STATE \hspace{3mm} $\theta_t \leftarrow \prod_{\mathcal{F},\sqrt{ \widehat{s_t}}} \Big( \theta_{t-1} -  \frac{\alpha_1 \widehat{m_t}} { \sqrt{{\widehat{s_t}}} + {\epsilon}} - \alpha_2 b_t v_t \Big)$
\ENDWHILE
\caption{HVAdam Optimizer}
\label{algo:HVAdam}
\end{algorithmic}
\end{algorithm}
We propose HVAdam, a first-order, full-dimension optimizer designed to address the non-axis-aligned valley dilemma. It not only solves the valley dilemma but also proves effective for general deep learning optimization. Specifically, we obtain a hidden vector using gradient projection, which represents the stable gradient trend of the loss function. By employing the restart strategy, we extend this approach to situations where the hidden vector may change over time. Finally, we enhance HVAdam’s effectiveness through a hidden-vector-based preconditioning matrix adjustment strategy, where the hidden vector is used to adjust the learning rate. The HVAdam algorithm is summarized in Algorithm ~\ref{algo:HVAdam}. All operations are element-wise, except for $\Vert \cdot \Vert$ and $\langle \cdot , \cdot \rangle$. The proof of the optimizer's convergence is shown in the supplementary material's Sec.C and Sec.D.

\paragraph{Hidden Vector}

\begin{align}
\label{v^t's update}
&k_t := \left\{
        \begin{array}{ll}
             \frac{\langle v_{t-1}-\widehat{m_t} , v_{t-1} \rangle}{\Vert v_{t-1}-\widehat{m_t} \Vert^2} ,& \text{if }v_{t-1}\ne \widehat{m_t} \\
            0,   & \text{otherwise}
        \end{array}
    \right. ,\\ \label{v^t's update2}
&v_t := {k_t \widehat{m_t}+(1-k_t)v_{t-1}} ,
\end{align}

To obtain the hidden vector of the loss function, we analyze the relationship between the hidden vector of the bivariate function $f(x,y)=4\vert x-y \vert + \vert x+y \vert$ in its valley region and its corresponding vectors $v_t$ and $\widehat{m_t}$. The process is illustrated in Figure \ref{fig:hd}. We observe that the height of the triangle formed by the edges $v_t$ and $\widehat{m_t}$ can be used to update $v_t$, leading to its convergence to the hidden vector $v^*$. The update process can be formulated as Eq. (\ref{v^t's update}) and Eq. (\ref{v^t's update2}).  We extend the algorithm to higher dimensions and prove its convergence in Sec.B of the Supplementary Material. The update process of the hidden vector is detailed in line 12 \~{} 17 of Algorithm~\ref{algo:HVAdam}.

\paragraph{Restart Strategy}
Now we can calculate $v_t$ through Eq.~(\ref{v^t's update}) and Eq.~(\ref{v^t's update2}), however, $v_t$ is a value that gradually converges to $v^*$ over time. Therefore, it is necessary to measure the current convergence rate of $v_t$. Then, we update the parameters in the direction of $v_t$ according to this convergence rate.
Considering that the moving average of $m_t$ can reflect the region trend of the loss function, we use the cosine similarity between $v_t$ and $\widehat{m_t}$ to measure the convergence rate of $v_t$. Although $\widehat{m_t}$ can roughly reflect the current region trend, it is unstable and therefore cannot replace $v_t$. To make the results more stable, we use the moving average of cosine similarity as the index, which is the 18th line of Algorithm~\ref{algo:HVAdam}, that is
\begin{align}
\label{eq:18}
\delta_{t_2} := \beta_2\delta_{t_2-1} + (1-\beta_2)\text{cos}\langle v_{t_2},\widehat{m_t}\rangle,
\end{align}

Furthermore, $\widehat{m_t}$ is adaptive to any changes between different local trends, while $v_t$ cannot automatically adjust itself. 
Whenever the difference between $\widehat{m_t}$ and $v_t$ becomes too large, it indicates that the trend of the region is changing, which requires the reinitiation of the calculations $v_t$ at time $t$.
Considering that the initialized values of $v_t$ and $\widehat{m_t}$ do not represent an accurate estimate of the current trend, we introduce the unbiased estimate of $\delta_{t_2}$ as a criterion to determine whether a restart is needed,

\begin{align}
\label{eq:19}
\widehat{\delta_{t_2}} := \frac{\delta_{t_2}}{1-\beta_2^{t_2}}.
\end{align}
A larger $\delta_{t_2}$ means that the direction of $v_t$ is closer to the direction of $m_t$, and vice versa. Therefore, a larger step in the direction of $v_t$ can be taken, while a small $\delta_t$ indicates less confidence. When $\delta_t$ is extremely small, it implies that the hidden vector $v^*$  has changed, so we will restart the calculation of $v_t$.
 
Finally, we obtain the step size $b_t$ by ${\delta_{t_2}}$ which represents the convergence rate of $v_t$.
$lr(\cdot)$ should be an increasing function capable of covering a wide range of magnitudes. After some experiments, we empirically select Eq.~(\ref{eq:20}), 
following the same process as in \cite{luo2019adaptive}. Whether to restart, we use $\widehat{\delta_{t_2}}$ as the criterion. When $\widehat{\delta_{t_2}}$is less than the threshold of 0.1, it indicates a significant deviation between $v_t$ and $\widehat{m_t}$. In such cases, we recompute $v_t$ starting from the current step t. The value of 0.1 here is selected empirically. 
\begin{align}
\label{eq:20}
b_t = lr(\delta_{t_2},\widehat{\delta_{t_2}}) := \left\{
        \begin{array}{ll}
            10^{\delta_{t_2}\cdot6-3} ,& \text{if }\widehat{\delta_{t_2}} \ge 0.1 \\
            0,   & \text{otherwise}
        \end{array}
    \right.
\end{align}
The pseudocode for the restart strategy corresponds to lines 18 to 23 of Algorithm~\ref{algo:HVAdam}, where $t_2$ is used to indicate whether a restart is needed.

\paragraph{Hidden-vector-based preconditioning matrix adjustment strategy}
\label{subsec:direction}
A crucial element of adaptive optimizers is the preconditioning matrix, which improves the information about the gradient and controls the step size in each direction of the gradient \cite{yue2023agd}. In order to adjust the learning rate based on the stable trend information obtained from the hidden vector $v_t$, we measure the difference between the gradients at the current position and the trend in the current region,
\begin{align}
\label{eq:6}
p_t := {(g_t - v_{t-1})^2}.
\end{align}
This difference indicates the magnitude of the noise in each direction of the coordinate axis. The magnitude of noise determines the step size of parameter updates in this direction; large noise corresponds to small steps, while small noise corresponds to large steps. 
The $p_t$ only represents the absolute magnitude of the noise. For different orders of magnitude of $g_t$, it is necessary to incorporate the relative magnitude of $p_t$ on each dimension into the design of the learning rate adjustment strategy. Therefore, we introduce Eq.~(\ref{eq:7})  to extract the relative magnitude factor of the noise in each dimension. 
\begin{align}
\label{eq:7}
\eta_t := \frac{p_t}{(g_t - m_t)^2 + \gamma p_t + \epsilon}
\end{align}

We obtain the relative magnitude of the difference between $g_t$ and $v_t$ by comparing it to the difference between $g_t$ and $m_t$. And in order to reduce the impact of abnormal data that make the denominator too small, we add $\gamma p_t$. The $\epsilon$ is used to avoid a zero denominator.

Finally, the relative magnitude of noise $\eta_t p_t$ constitute the adjustment factor of the learning rate,  formalized as
\begin{align}
\label{eq:8}
s_{t} := \beta_2 s_{t-1} + (1 - \beta_2) {\eta_t p_t}+\epsilon .
\end{align}
The hidden vector-based preconditioning matrix adjustment strategy is in the line 6\~{}8 of Algorithm~\ref{algo:HVAdam}.
If $p_t$ is large, it means that there is a significant difference between the projection of the gradient and $v_t$ on the parameters. In this case, as the denominator, $\sqrt{s_t}$ is the EMA of $\eta_tp_t$, making $\alpha_{1}$ small which makes updating more ``cautious''. If $p_t$ is small, this means that we should accelerate the update so the small $\sqrt{s_t}$ makes $\alpha_1$ large. For the denominators of Adam and AdaBelief, their value ranges are narrow. Adam's $\sqrt{a_t}$ is in $(0,\mathrm{max}\vert g\vert)$ and AdaBelief's is in $ (0,\mathrm{max}\vert 2g_t\vert)$. And the range of the denominator determines the adjustment range of $\alpha_1$. For HVAdam, we multiply $p_t$ by $\eta_t$. We use $(g_t-m_t)^2$ to measure $p_t$. If the latter is larger than the former, $\eta_t$ is decreased. Otherwise, $\eta_t$ is increased. The formula shows that the value range of $\eta_t$ is $(0,\frac{1}{\gamma})$, which can make the range wider so that an optimal learning rate can be reached.

\section{Validation on Tasks in Deep Learning}
\begin{figure*}[ht]
\begin{subfigure}[b]{0.33\textwidth}
\includegraphics[width=\linewidth]{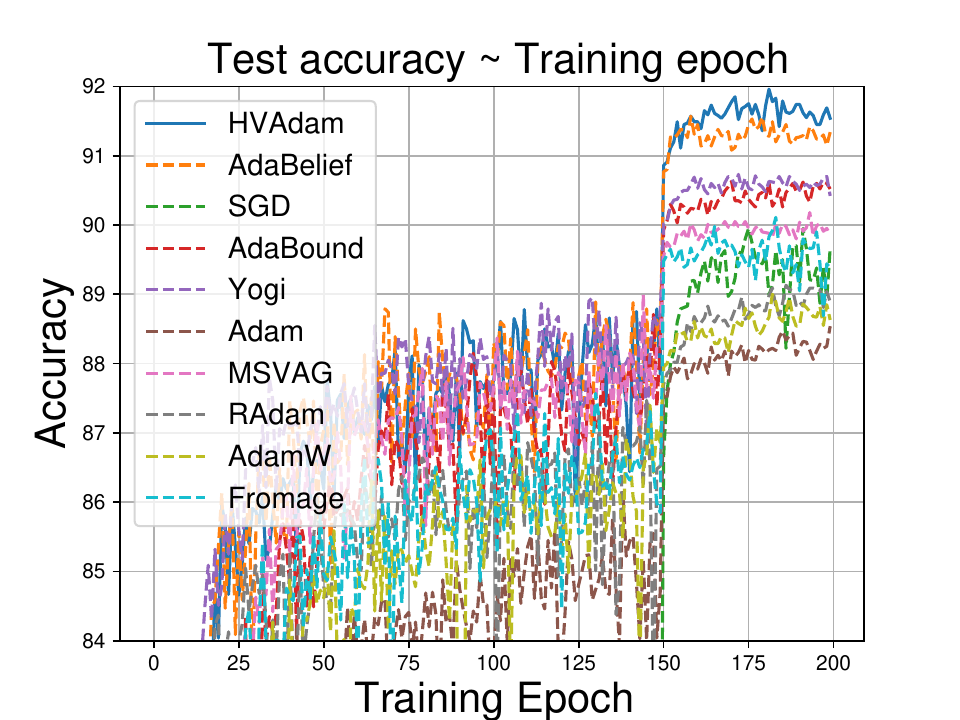}
\caption{\small{
VGG11 on CIFAR-10
}}
\end{subfigure}
\begin{subfigure}[b]{0.33\textwidth}
\includegraphics[width=\linewidth]{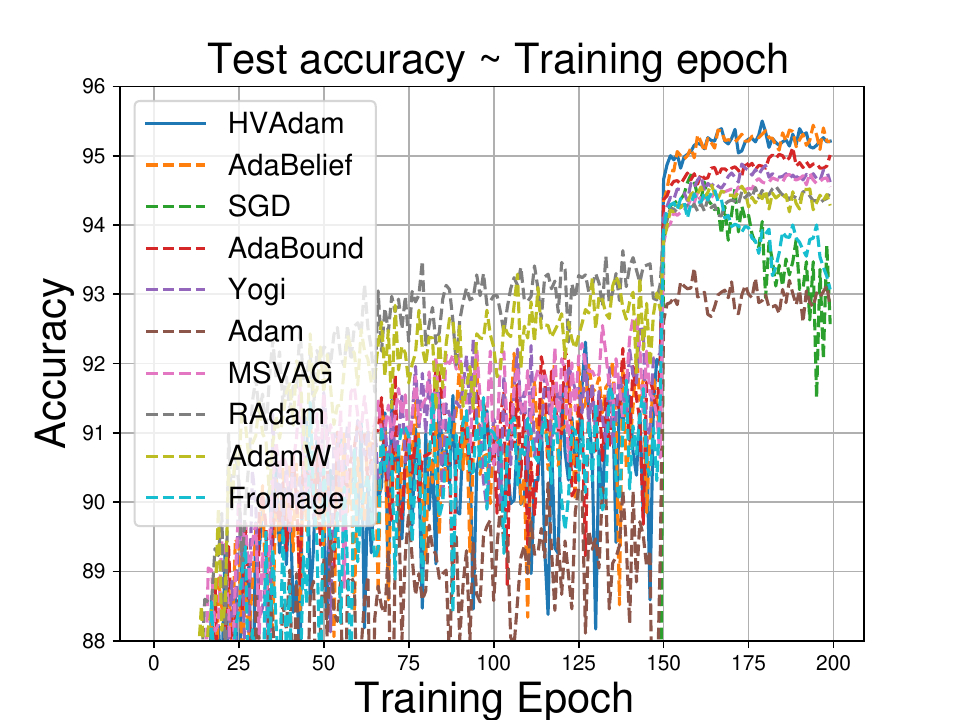}
\caption{\small{
ResNet34 on CIFAR-10
}}
\end{subfigure}
\begin{subfigure}[b]{0.33\textwidth}
\includegraphics[width=\linewidth]{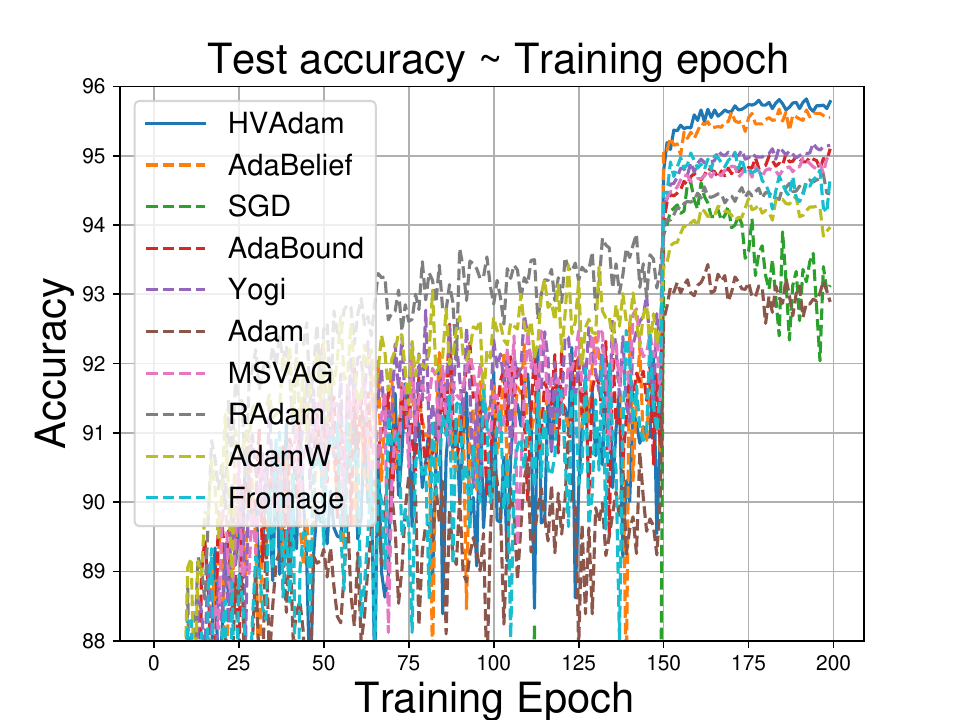}
\caption{\small{
DenseNet121 on CIFAR-10
}}
\end{subfigure}
\\
\begin{subfigure}[b]{0.33\textwidth}
\includegraphics[width=\linewidth]{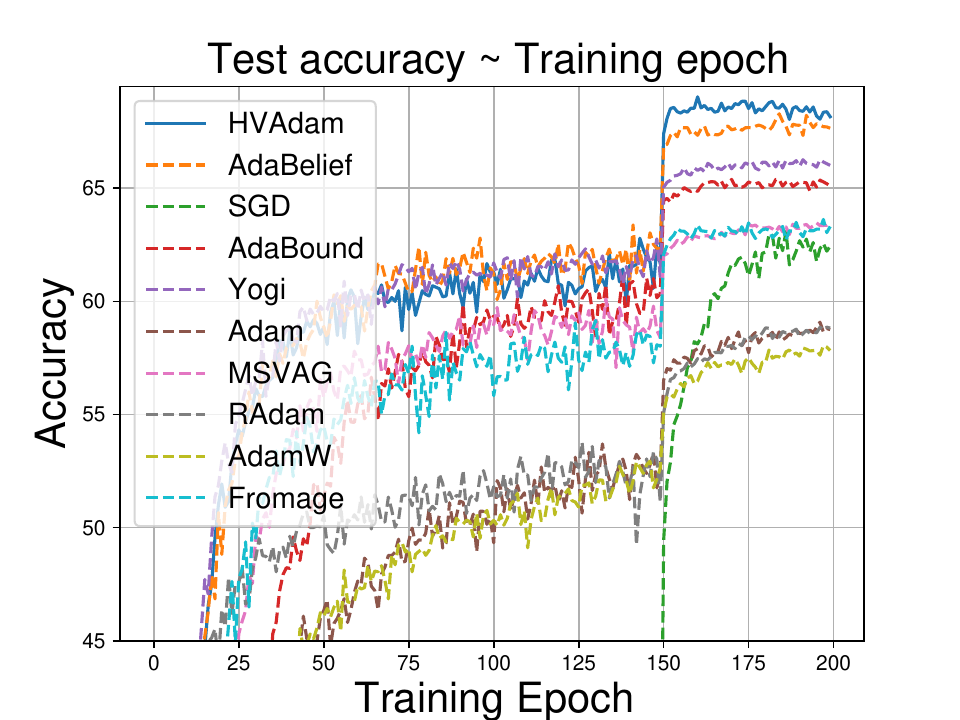}
\caption{\small{
VGG11 on CIFAR-100
}}
\end{subfigure}
\begin{subfigure}[b]{0.33\textwidth}
\includegraphics[width=\linewidth]{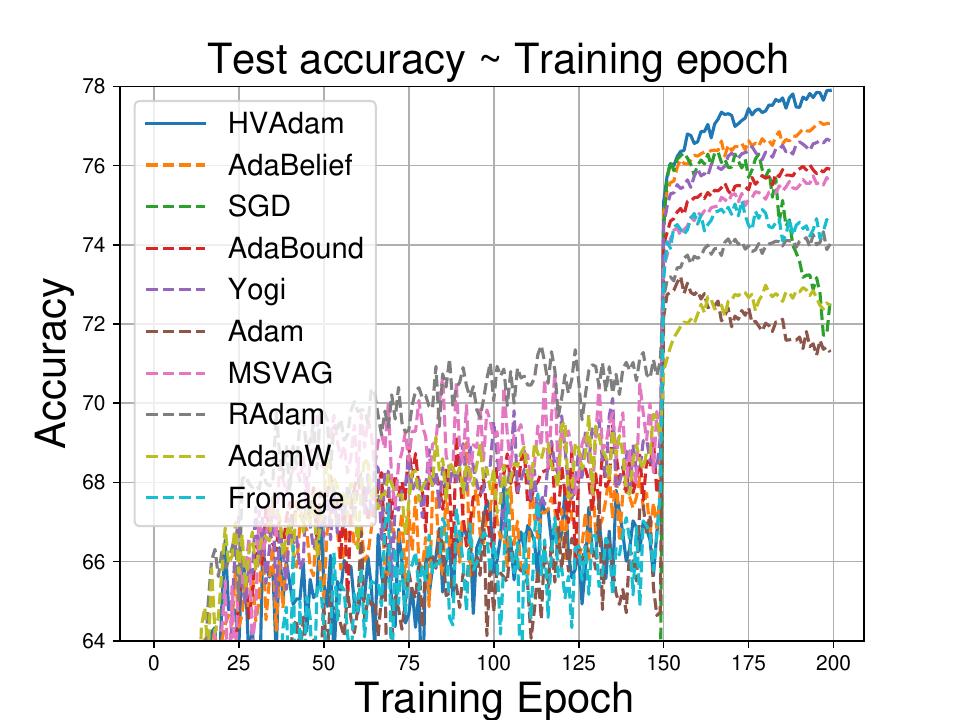}
\caption{\small{
ResNet34 on CIFAR-100
}}
\end{subfigure}
\begin{subfigure}[b]{0.33\textwidth}
\includegraphics[width=\linewidth]{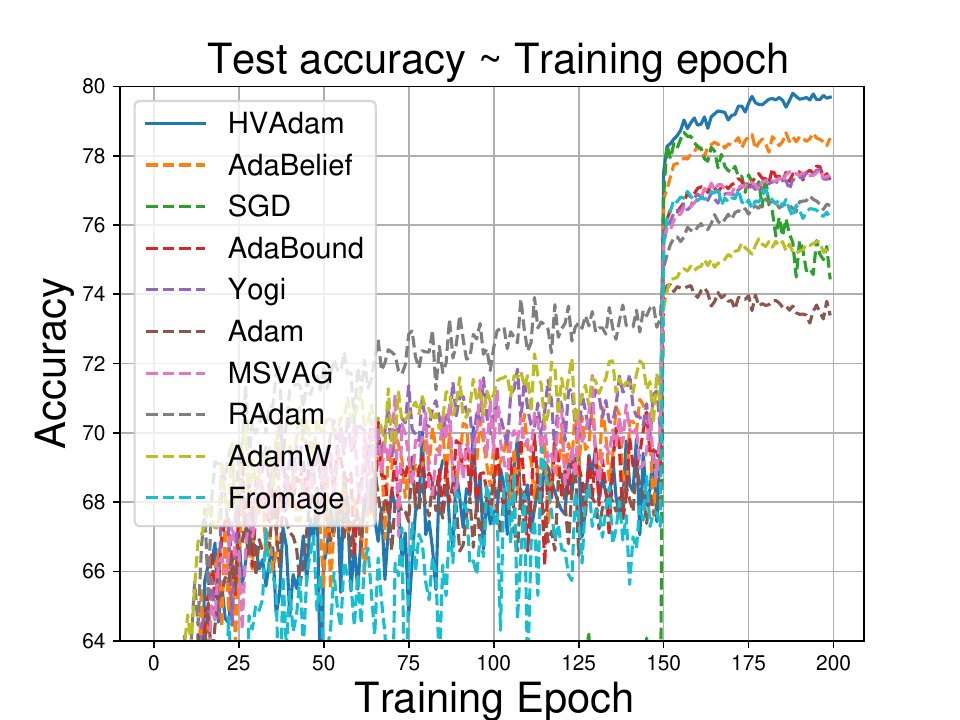}
\caption{\small{
DenseNet121 on CIFAR-100
}}
\end{subfigure}
\caption{Test accuracies with three models using different optimizers on CIFAR-10 and CIFAR-100.}
\label{fig:cifartext}
\end{figure*}

We compare HVAdam with 15 baseline optimizers in various experiments, including SGD \cite{sutskever2013importance}, Adam, MSVAG \cite{balles2018dissecting}, AdamW \cite{loshchilov2017decoupled}, Yogi \cite{zaheer2018adaptive}, AdaBound, RAdam \cite{liu2019variance}, Fromage \cite{bernstein2020distance}, RMSProp, SWA\cite{PavelIzmailov2018AveragingWL}, Lookahead\cite{zhang2019lookahead}, AdaBelief, Adai \cite{xie2022adaptive} and Lookaournd\cite{zhang2023Lookaround}. The experiments include: (a) image classification in the CIFAR-10 dataset and CIFAR-100 dataset \cite{krizhevsky2009learning} with VGG \cite{simonyan2014very}, ResNet \cite{he2016deep} and DenseNet \cite{huang2017densely}; (b) natural language processing tasks with LSTM \cite{ma2015long} on Penn TreeBank dataset \cite{marcus1993building} and Transformer on IWSLT14 dataset; (c) WGAN \cite{arjovsky2017wasserstein}, WGAN-GP \cite{gulrajani2017improved} and Spectral-Norm GAN (SN-GAN) \cite{miyato2018spectral} on CIFAR-10. The hyperparameter settings and searching are shown in supplementary material.

\subsection{Experiments for Image Classification}
\paragraph{CNNs on image classification}
The experiments are conducted on CIFAR-10 with VGG11, ResNet34, and DenseNet121 on CIFAR-10 using the \textit{ official implementation} of AdaBelief. 
The accuracy of other optimizers is obtained from \cite{buvanesh2021re}. As Figure ~\ref{fig:cifartext} shows, HVAdam achieves a fast convergence comparable to other adaptive methods. When the training accuracy of several optimizers is close to $100\%$, HVAdam achieves the highest test accuracy and outperforms other optimizers. The results show that HVAdam has both fast convergence and high generalization performance. 

\begin{table*}[t]
\centering
\setlength{\tabcolsep}{1mm}
\small
\begin{tabular}{ccccccc}
\hline
    SGDM\dag &Adam\dag &Adai\dag & SWA\ddag & Lookahead\ddag &Lookaround\ddag &HVAdam           \\ \hline
    76.49 &72.87 &76.80 & 76.78 &76.52 &\textbf{77.32} & \underline{77.22}           \\ \hline
\end{tabular}
\caption{Top-1 accuracy of ResNet50 on ImageNet. \dag is reported in \cite{xie2022adaptive}, \ddag is reported in \cite{zhang2023Lookaround}.}
\label{table:imagenet}
\end{table*}

We then train a ResNet50 on ImageNet \cite{deng2009imagenet} and report the accuracy on the validation set in Table \ref{table:imagenet}. The experiment is conducted using the \textit{ official implementation} of Lookaround.  For other optimizers, we report the best result in the literature. Due to the heavy computational burden, we were unable to perform an extensive hyperparameter search. But our optimizer still outperforms other adaptive methods and achieves comparable accuracy Lookaround (77.22 vs 77.32), which closes the generalization gap between adaptive methods and non-adaptive methods. Experiments validate the fast convergence and great generalization performance of HVAdam.

\paragraph{Visual Transformer on image classification} Besides validating with the classical model CNNs, we also validate the performance of HVAdam with Visual Transformer (ViT)\cite{dosovitskiy2020vit}. 
As Table \ref{table:Vit} shows, HVAdam outperforms all other optimizers, which means that HVAdam performs well on both classical and advanced models.

\begin{table}[h]
\centering
\setlength{\tabcolsep}{1mm}
\small
\begin{tabular}{c|ccccc}
\hline
             &Adam & SWA & Lookahead & Lookaround &HVAdam           \\ \hline
     CIFAR10 &98.34 & 98.47 & 98.51 & {98.71} &\textbf{99.00}           \\ \hline
     CIFAR100  &91.55 & 91.32 & 91.76 & {92.21} &\textbf{92.38}    \\ \hline
\end{tabular}
\caption{The test set accuracy under optimizers using ViT-B/16. The results of other optimizers are taken from \cite{zhang2023Lookaround}.}
\label{table:Vit}
\end{table}

\subsection{Experiments for Natural Language Processing}
\paragraph{LSTMs on language modeling} We experiment with 1-layer, 2-layer, and 3-layer LSTM models on the Penn TreeBank dataset. The results of the experiments are shown in Figure ~\ref{fig:LSTMtext}. Except for HVAdam, the score data for the other optimizers is provided by \cite{buvanesh2021re}. 
For all LSTM models, AdaBelief achieves the lowest perplexity or the best performance. The experiments resonate with both the fast convergence and the excellent accuracy of HVAdam. 

\begin{figure*}[t]
    \begin{subfigure}[b]{0.32\textwidth}
    \includegraphics[width=\linewidth]{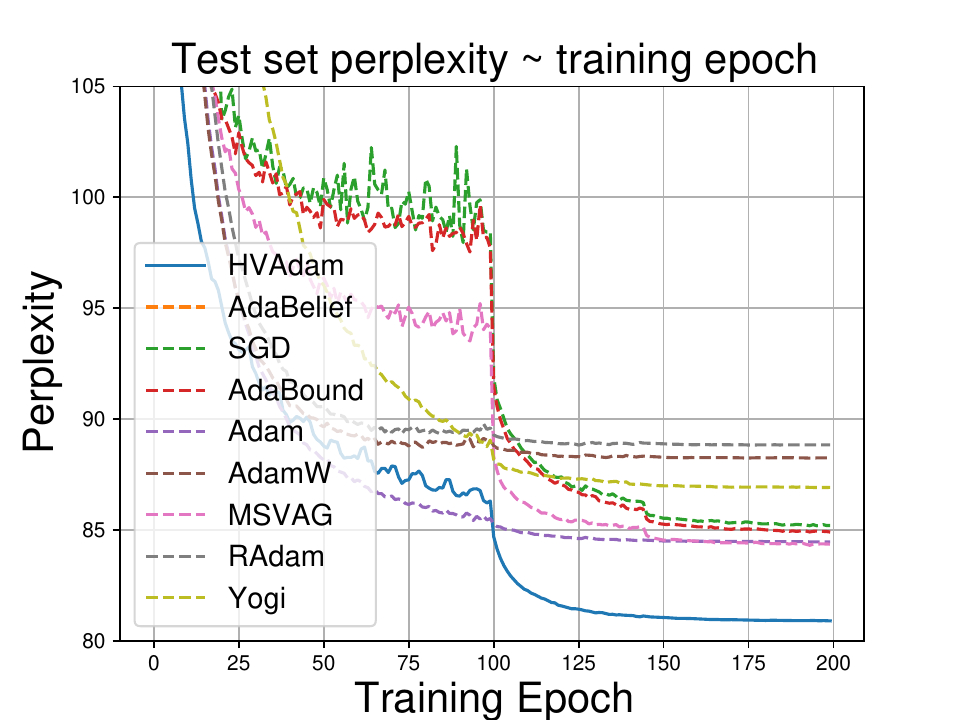}
    \end{subfigure}
    \begin{subfigure}[b]{0.32\textwidth}
    \includegraphics[width=\linewidth]{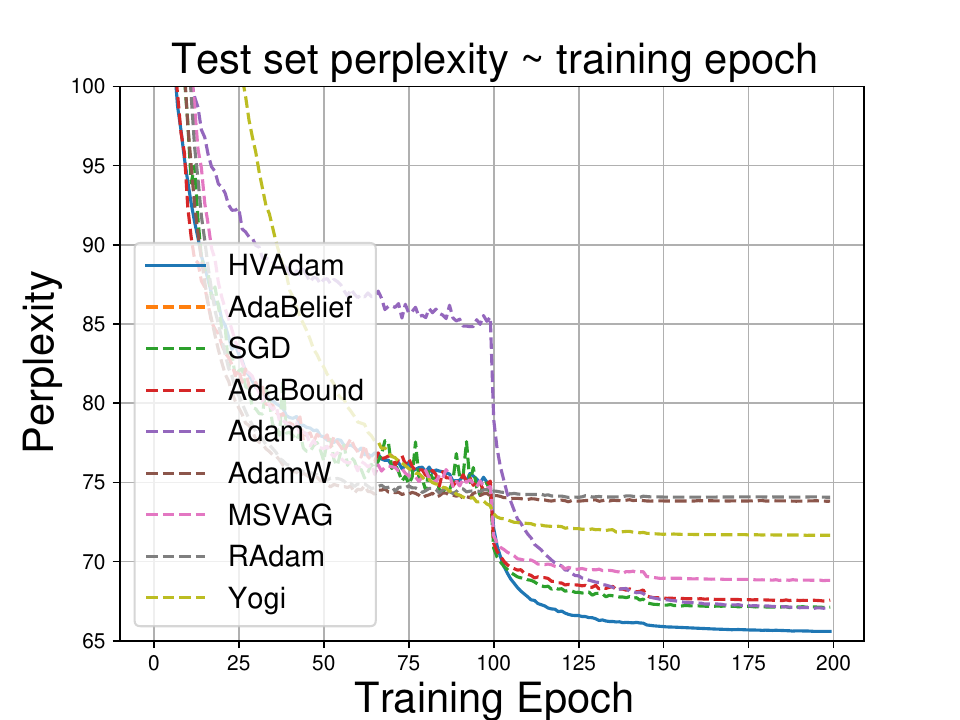}
    \end{subfigure}
    \begin{subfigure}[b]{0.32\textwidth}
    \includegraphics[width=\linewidth]{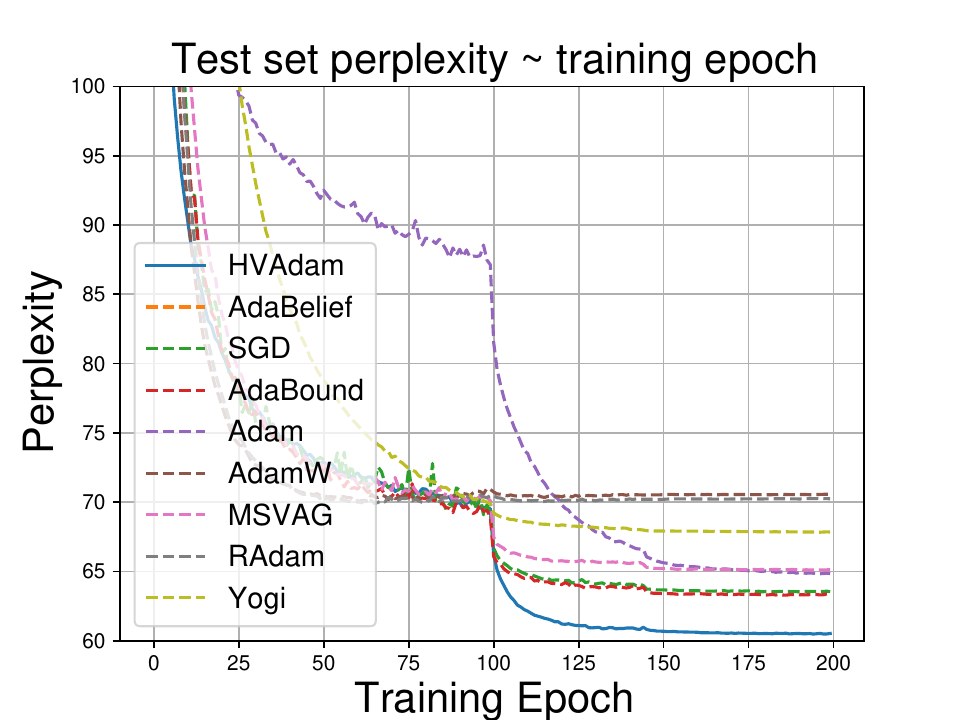}
    \end{subfigure}
    \caption{The perplexity on Penn Treebank for 1,2,3-layer LSTM from left to right. \textbf{Lower} is better.}
     \label{fig:LSTMtext}
\end{figure*}

\begin{table*}
\centering
\small

\centering

\begin{tabular}{c|c|c|c|c|c|c|c|c}
\hline
HVAdam               & AdaBelief               & RAdam          & RMSProp        & Adam           & Fromage        & Yogi           & SGD         & AdaBound                            \\ \hline
\textbf{12.72}$\pm$\textbf{0.21} & 12.98$\pm$0.22 & 13.10$\pm$0.20 & \underline{12.86$\pm$0.08} & 13.01$\pm$0.15 & 46.31$\pm$0.86 & 14.16$\pm$0.05 & 48.94$\pm$2.88  & \multicolumn{1}{c}{16.84$\pm$0.10} \\ \hline
\end{tabular}

\caption{FID values ($[\mu \pm \sigma]$) of a SN-GAN with ResNet generator on CIFAR-10. A lower FID value is better.}
\label{table:sngan_fid}



\end{table*}

\subsection{Experiments for Image Generation}
\begin{figure}
    \begin{subfigure}[b]{0.45\textwidth}
	\includegraphics[width=1\linewidth]{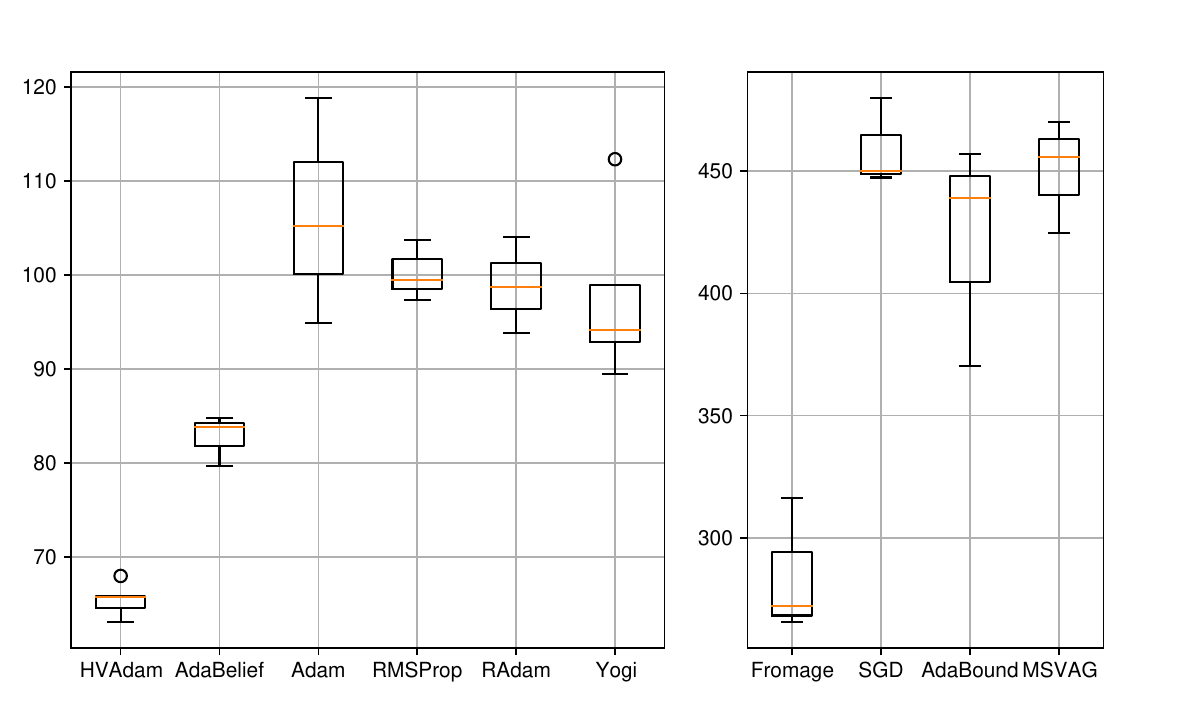}
	\caption{\small{FID scores of WGAN.}}	
	\end{subfigure}
    \begin{subfigure}[b]{0.45\textwidth}
	\includegraphics[width=1\linewidth]{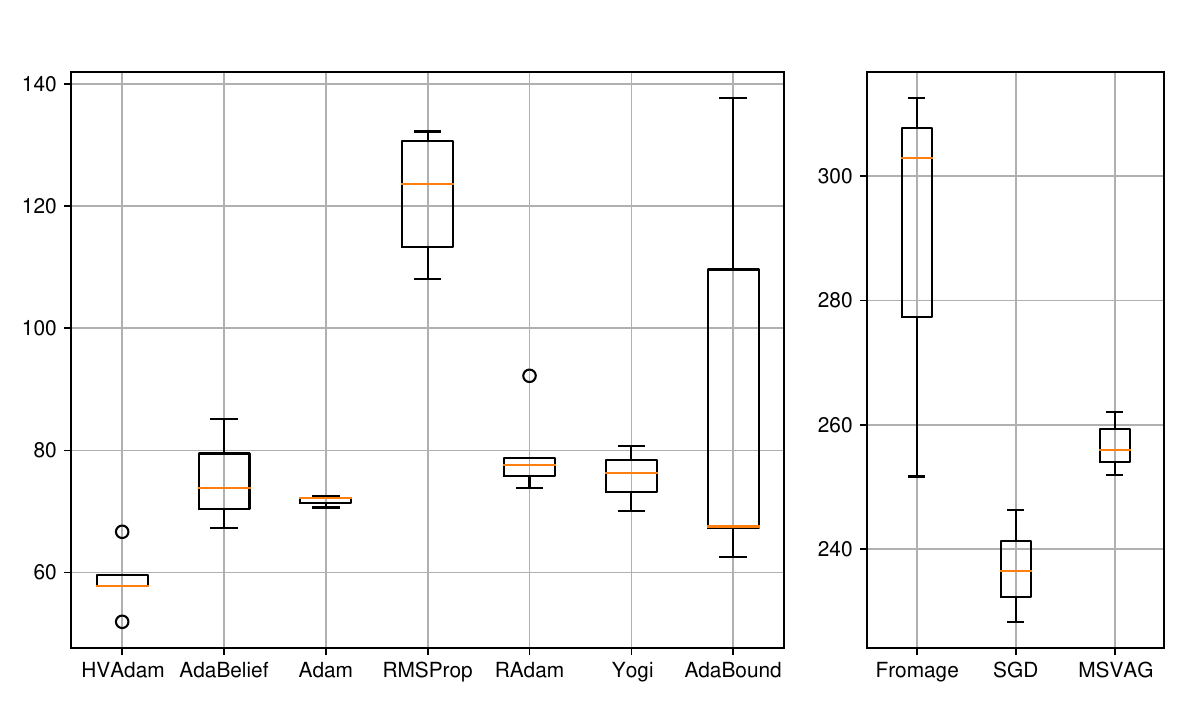}
	\caption{\small{FID scores of WGAN-GP.}}
	\end{subfigure}
    \caption{FID scores of different optimizers on WGAN and WGAN-GP (using the vanilla CNN generator on CIFAR-10). Lower FID indicates better performance. For every model, the successful and failed optimizers are displayed on the left and right sides, with different ranges of y values.}
    \label{fig:GAN}
\end{figure}

\paragraph{Generative adversarial networks (GANs) on image generation} 
Stability is also important for optimizers. 
And, as mentioned in \cite{salimans2016improved}, mode collapse and numerical instability can easily impact GAN training. So, training GANs can reflect the stability of optimizers. For example, SGD often fails when training GANs, whereas Adam can effectively train GANs. To assess the robustness of HVAdam, we performed experiments with WGAN, WGAN-GP, and SN-GAN using the CIFAR-10 dataset. The results were generated using the code from the \textit{official implementation} of AdaBelief and compared with the findings in \cite{buvanesh2021re}, where hyperparameters were explored more extensively.
We perform 5 runs of experiments, and The comparison results on WGAN, WGAN-GP, and SN-GAN are reported in Figure ~\ref{fig:GAN}, 
and Table~\ref{table:sngan_fid} . And we also add the great results of expriments on diffusion model \cite{nichol2021improved} in the supplementary material. We can see that HVAdam gets the lowest FID \cite{heusel2017gans} scores with all GANs, where the lower the FID is, the better the quality and diversity of the generated images. Furthermore, HVAdam's FID score with WGAN is even better than the other optimizers' FID scores with WGAN-GP. So, the stability of HVAdam is fully validated.



\subsection{Ablation Study} 
We perform the ablation study to demonstrate that every step of improvement plays important roles. We use three methods to train a 1-layer-LSTM. (a) Use Adam (HVAdam$^0$). (b) Use Adam which we introduce $v_t$ (HVAdam$^1$). (c) After introducing $v_t$, we can change Adam's adjustment strategy of learning rate by using $v_t$ to calculate $s_t$ (HVAdam$^2$). (d) Based on HVAdam$^2$, we introduce the restart strategy (HVAdam). As Table.~\ref{table:Ablation Study} shows, all the steps can help the optimizer perform better.

\begin{table}[h]
\centering
\setlength{\tabcolsep}{1mm}
\small
\begin{tabular}{c|c|c|c|c}
\hline
            &HVAdam$^0$ &HVAdam$^1$ &HVAdam$^2$ &HVAdam\\ \hline
valid ppl	&85.04	&84.54	&83.42	&83.31\\ \hline
\end{tabular}
\caption{The valid perplexity in ablation study.}
\label{table:Ablation Study}
\end{table}

\section{Conclusion}
We propose the HVAdam optimizer, which obtains the trend of the loss function through a hidden vector and restart strategy and utilizes the trend to update the learning rate, thereby accelerating the model's convergence speed. We validate HVAdam's advantages with four simple but representative functions and prove its convergence in both convex and non-convex cases. Experimental results show that HVAdam outperforms almost all other optimizers in tasks that cover a comprehensive category of deep learning tasks. 
Although our optimizer achieves excellent results and faster convergence speed in these experiments, the algorithm’s calculation process is relatively complex and incurs extra memory overhead. We recognize that these factors may limit the algorithm’s practicality in memory-constrained environments. In future work, we plan to optimize the algorithm’s computational efficiency and reduce its memory requirements to enhance its applicability.

\section{Acknowledgements}
This project was supported by the Key Research and Development Project of Hubei Province under grant number 2022BCA057. The numerical calculations in this paper have been done on the supercomputing system in the Supercomputing Center of Wuhan University.

\bibliographystyle{aaai25}
\bibliography{aaai25}

\begin{thebibliography}{38}
\providecommand{\natexlab}[1]{#1}

\bibitem[{Arjovsky, Chintala, and Bottou(2017)}]{arjovsky2017wasserstein}
Arjovsky, M.; Chintala, S.; and Bottou, L. 2017.
\newblock Wasserstein generative adversarial networks.
\newblock In \emph{International conference on machine learning}, 214--223.
  PMLR.

\bibitem[{Balles and Hennig(2018)}]{balles2018dissecting}
Balles, L.; and Hennig, P. 2018.
\newblock Dissecting adam: The sign, magnitude and variance of stochastic
  gradients.
\newblock In \emph{International Conference on Machine Learning}, 404--413.
  PMLR.

\bibitem[{Beale(1955)}]{beale1955minimizing}
Beale, E.~M. 1955.
\newblock On minimizing a convex function subject to linear inequalities.
\newblock \emph{Journal of the Royal Statistical Society: Series B
  (Methodological)}, 17(2): 173--184.

\bibitem[{Bernstein et~al.(2020)Bernstein, Vahdat, Yue, and
  Liu}]{bernstein2020distance}
Bernstein, J.; Vahdat, A.; Yue, Y.; and Liu, M.-Y. 2020.
\newblock On the distance between two neural networks and the stability of
  learning.
\newblock \emph{Advances in Neural Information Processing Systems}, 33:
  21370--21381.

\bibitem[{Buvanesh and Panwar(2021)}]{buvanesh2021re}
Buvanesh, A.; and Panwar, M. 2021.
\newblock [Re] AdaBelief Optimizer: Adapting Stepsizes by the Belief in
  Observed Gradients.
\newblock In \emph{ML Reproducibility Challenge 2021 (Fall Edition)}.

\bibitem[{Chen et~al.(2018)Chen, Liu, Sun, and Hong}]{chen2018convergence}
Chen, X.; Liu, S.; Sun, R.; and Hong, M. 2018.
\newblock On the convergence of a class of adam-type algorithms for non-convex
  optimization.
\newblock \emph{arXiv preprint arXiv:1808.02941}.

\bibitem[{Deng et~al.(2009)Deng, Dong, Socher, Li, Li, and
  Fei-Fei}]{deng2009imagenet}
Deng, J.; Dong, W.; Socher, R.; Li, L.-J.; Li, K.; and Fei-Fei, L. 2009.
\newblock Imagenet: A large-scale hierarchical image database.
\newblock In \emph{2009 IEEE conference on computer vision and pattern
  recognition}, 248--255. Ieee.

\bibitem[{Dosovitskiy et~al.(2021)Dosovitskiy, Beyer, Kolesnikov, Weissenborn,
  Zhai, Unterthiner, Dehghani, Minderer, Heigold, Gelly, Uszkoreit, and
  Houlsby}]{dosovitskiy2020vit}
Dosovitskiy, A.; Beyer, L.; Kolesnikov, A.; Weissenborn, D.; Zhai, X.;
  Unterthiner, T.; Dehghani, M.; Minderer, M.; Heigold, G.; Gelly, S.;
  Uszkoreit, J.; and Houlsby, N. 2021.
\newblock An Image is Worth 16x16 Words: Transformers for Image Recognition at
  Scale.
\newblock \emph{ICLR}.

\bibitem[{Glorot, Bordes, and Bengio(2011)}]{glorot2011deep}
Glorot, X.; Bordes, A.; and Bengio, Y. 2011.
\newblock Deep sparse rectifier neural networks.
\newblock In \emph{Proceedings of the fourteenth international conference on
  artificial intelligence and statistics}, 315--323.

\bibitem[{Graves(2013)}]{graves2013generating}
Graves, A. 2013.
\newblock Generating sequences with recurrent neural networks.
\newblock \emph{arXiv preprint arXiv:1308.0850}.

\bibitem[{Gulrajani et~al.(2017)Gulrajani, Ahmed, Arjovsky, Dumoulin, and
  Courville}]{gulrajani2017improved}
Gulrajani, I.; Ahmed, F.; Arjovsky, M.; Dumoulin, V.; and Courville, A.~C.
  2017.
\newblock Improved training of wasserstein gans.
\newblock In \emph{Advances in neural information processing systems},
  5767--5777.

\bibitem[{He et~al.(2016)He, Zhang, Ren, and Sun}]{he2016deep}
He, K.; Zhang, X.; Ren, S.; and Sun, J. 2016.
\newblock Deep residual learning for image recognition.
\newblock In \emph{Proceedings of the IEEE conference on computer vision and
  pattern recognition}, 770--778.

\bibitem[{Heusel et~al.(2017)Heusel, Ramsauer, Unterthiner, Nessler, and
  Hochreiter}]{heusel2017gans}
Heusel, M.; Ramsauer, H.; Unterthiner, T.; Nessler, B.; and Hochreiter, S.
  2017.
\newblock Gans trained by a two time-scale update rule converge to a local nash
  equilibrium.
\newblock In \emph{Advances in neural information processing systems},
  6626--6637.

\bibitem[{Huang et~al.(2017)Huang, Liu, Van Der~Maaten, and
  Weinberger}]{huang2017densely}
Huang, G.; Liu, Z.; Van Der~Maaten, L.; and Weinberger, K.~Q. 2017.
\newblock Densely connected convolutional networks.
\newblock In \emph{Proceedings of the IEEE conference on computer vision and
  pattern recognition}, 4700--4708.

\bibitem[{Izmailov et~al.(2018)Izmailov, Podoprikhin, Garipov, Vetrov, and
  Wilson}]{PavelIzmailov2018AveragingWL}
Izmailov, P.; Podoprikhin, D.~A.; Garipov, T.; Vetrov, D.; and Wilson, A.~G.
  2018.
\newblock Averaging Weights Leads to Wider Optima and Better Generalization.
\newblock \emph{Conference on Uncertainty in Artificial Intelligence}.

\bibitem[{Kingma and Ba(2014)}]{kingma2014adam}
Kingma, D.~P.; and Ba, J. 2014.
\newblock Adam: A method for stochastic optimization.
\newblock \emph{arXiv preprint arXiv:1412.6980}.

\bibitem[{Krizhevsky, Hinton et~al.(2009)}]{krizhevsky2009learning}
Krizhevsky, A.; Hinton, G.; et~al. 2009.
\newblock Learning multiple layers of features from tiny images.

\bibitem[{Liu et~al.(2019)Liu, Jiang, He, Chen, Liu, Gao, and
  Han}]{liu2019variance}
Liu, L.; Jiang, H.; He, P.; Chen, W.; Liu, X.; Gao, J.; and Han, J. 2019.
\newblock On the variance of the adaptive learning rate and beyond.
\newblock \emph{arXiv preprint arXiv:1908.03265}.

\bibitem[{Loshchilov and Hutter(2017)}]{loshchilov2017decoupled}
Loshchilov, I.; and Hutter, F. 2017.
\newblock Decoupled weight decay regularization.
\newblock \emph{arXiv preprint arXiv:1711.05101}.

\bibitem[{Luo et~al.(2019)Luo, Xiong, Liu, and Sun}]{luo2019adaptive}
Luo, L.; Xiong, Y.; Liu, Y.; and Sun, X. 2019.
\newblock Adaptive gradient methods with dynamic bound of learning rate.
\newblock \emph{arXiv preprint arXiv:1902.09843}.

\bibitem[{Ma et~al.(2015)Ma, Tao, Wang, Yu, and Wang}]{ma2015long}
Ma, X.; Tao, Z.; Wang, Y.; Yu, H.; and Wang, Y. 2015.
\newblock Long short-term memory neural network for traffic speed prediction
  using remote microwave sensor data.
\newblock \emph{Transportation Research Part C: Emerging Technologies}, 54:
  187--197.

\bibitem[{Marcus, Santorini, and Marcinkiewicz(1993)}]{marcus1993building}
Marcus, M.; Santorini, B.; and Marcinkiewicz, M.~A. 1993.
\newblock Building a large annotated corpus of English: The Penn Treebank.

\bibitem[{McMahan and Streeter(2010)}]{mcmahan2010adaptive}
McMahan, H.~B.; and Streeter, M. 2010.
\newblock Adaptive bound optimization for online convex optimization.
\newblock \emph{arXiv preprint arXiv:1002.4908}.

\bibitem[{Miyato et~al.(2018)Miyato, Kataoka, Koyama, and
  Yoshida}]{miyato2018spectral}
Miyato, T.; Kataoka, T.; Koyama, M.; and Yoshida, Y. 2018.
\newblock Spectral normalization for generative adversarial networks.
\newblock \emph{arXiv preprint arXiv:1802.05957}.

\bibitem[{Nichol and Dhariwal(2021)}]{nichol2021improved}
Nichol, A.~Q.; and Dhariwal, P. 2021.
\newblock Improved denoising diffusion probabilistic models.
\newblock In \emph{International conference on machine learning}, 8162--8171.
  PMLR.

\bibitem[{Reddi, Kale, and Kumar(2019)}]{reddi2019convergence}
Reddi, S.~J.; Kale, S.; and Kumar, S. 2019.
\newblock On the convergence of adam and beyond.
\newblock \emph{arXiv preprint arXiv:1904.09237}.

\bibitem[{Robbins and Monro(1951)}]{robbins1951stochastic}
Robbins, H.; and Monro, S. 1951.
\newblock A stochastic approximation method.
\newblock \emph{The annals of mathematical statistics}, 400--407.

\bibitem[{Rosenbrock(1960)}]{rosenbrock1960automatic}
Rosenbrock, H. 1960.
\newblock An automatic method for finding the greatest or least value of a
  function.
\newblock \emph{The Computer Journal}, 3(3): 175--184.

\bibitem[{Salimans et~al.(2016)Salimans, Goodfellow, Zaremba, Cheung, Radford,
  and Chen}]{salimans2016improved}
Salimans, T.; Goodfellow, I.; Zaremba, W.; Cheung, V.; Radford, A.; and Chen,
  X. 2016.
\newblock Improved techniques for training gans.
\newblock In \emph{Advances in neural information processing systems},
  2234--2242.

\bibitem[{Simonyan and Zisserman(2014)}]{simonyan2014very}
Simonyan, K.; and Zisserman, A. 2014.
\newblock Very deep convolutional networks for large-scale image recognition.
\newblock \emph{arXiv preprint arXiv:1409.1556}.

\bibitem[{Sutskever et~al.(2013)Sutskever, Martens, Dahl, and
  Hinton}]{sutskever2013importance}
Sutskever, I.; Martens, J.; Dahl, G.; and Hinton, G. 2013.
\newblock On the importance of initialization and momentum in deep learning.
\newblock In \emph{International conference on machine learning}, 1139--1147.

\bibitem[{Xie et~al.(2022)Xie, Wang, Zhang, Sato, and
  Sugiyama}]{xie2022adaptive}
Xie, Z.; Wang, X.; Zhang, H.; Sato, I.; and Sugiyama, M. 2022.
\newblock Adaptive inertia: Disentangling the effects of adaptive learning rate
  and momentum.
\newblock In \emph{International Conference on Machine Learning}, 24430--24459.
  PMLR.

\bibitem[{Yue et~al.(2023)Yue, Ye, Jiang, Liu, and Zhang}]{yue2023agd}
Yue, Y.; Ye, Z.; Jiang, J.; Liu, Y.; and Zhang, K. 2023.
\newblock AGD: an Auto-switchable Optimizer using Stepwise Gradient Difference
  for Preconditioning Matrix.
\newblock \emph{Advances in Neural Information Processing Systems}, 36:
  45812--45832.

\bibitem[{Zaheer et~al.(2018)Zaheer, Reddi, Sachan, Kale, and
  Kumar}]{zaheer2018adaptive}
Zaheer, M.; Reddi, S.; Sachan, D.; Kale, S.; and Kumar, S. 2018.
\newblock Adaptive methods for nonconvex optimization.
\newblock In \emph{Advances in neural information processing systems},
  9793--9803.

\bibitem[{Zhang et~al.(2023)Zhang, Liu, Song, Zhu, Xu, and
  Song}]{zhang2023Lookaround}
Zhang, J.; Liu, S.; Song, J.; Zhu, T.; Xu, Z.; and Song, M. 2023.
\newblock Lookaround Optimizer: $k$ steps around, 1 step average.
\newblock In \emph{Advances in Neural Information Processing Systems}.

\bibitem[{Zhang et~al.(2019)Zhang, Lucas, Ba, and Hinton}]{zhang2019lookahead}
Zhang, M.; Lucas, J.; Ba, J.; and Hinton, G.~E. 2019.
\newblock Lookahead Optimizer: k steps forward, 1 step back.
\newblock In \emph{Advances in Neural Information Processing Systems},
  9593--9604.

\bibitem[{Zhuang et~al.(2021)Zhuang, Ding, Tang, Dvornek, Tatikonda, and
  Duncan}]{zhuang2021momentum}
Zhuang, J.; Ding, Y.; Tang, T.; Dvornek, N.; Tatikonda, S.~C.; and Duncan, J.
  2021.
\newblock Momentum centering and asynchronous update for adaptive gradient
  methods.
\newblock \emph{Advances in Neural Information Processing Systems}, 34:
  28249--28260.

\bibitem[{Zhuang et~al.(2020)Zhuang, Tang, Ding, Tatikonda, Dvornek,
  Papademetris, and Duncan}]{zhuang2020adabelief}
Zhuang, J.; Tang, T.; Ding, Y.; Tatikonda, S.~C.; Dvornek, N.; Papademetris,
  X.; and Duncan, J. 2020.
\newblock Adabelief optimizer: Adapting stepsizes by the belief in observed
  gradients.
\newblock \emph{Advances in neural information processing systems}, 33:
  18795--18806.

\end{thebibliography}

\clearpage
\onecolumn 


\appendix

\newtheorem{theorem}{Theorem}[section]
\newtheorem{corollary}{Corollary}[theorem]
\newtheorem{lemma}[theorem]{Lemma}
\setcounter{section}{0}
\setcounter{equation}{0}
\setcounter{theorem}{0}
\setcounter{table}{0}
\setcounter{figure}{0}
\allowdisplaybreaks

\section*{{\LARGE Appendix}}
\section{Detailed Algorithm of HVAdam}
\begin{itemize}[topsep=0pt,parsep=0pt,partopsep=0pt]
    \item $f(\theta) \in \mathbb{R}, \theta \in \mathbb{R}^d$: $f(\theta)$ is the scalar-valued function to minimize, $\theta$ is the parameter in $\mathbb{R}^d$ to be optimal.
    \item  $\prod_{\mathcal{F},S}(y) = \mathrm{argmin}_{x \in \mathcal{F}} \vert \vert S^{1/2} (x-y) \vert \vert$:
    The projection of $y$ onto convex feasible set $\mathcal{F}$.
    \item $g_t\in \mathbb{R}^d$: The gradient at step $t$.
    \item $m_t\in \mathbb{R}^d$: The exponential moving average (EMA) of $g_t$.
    \item $v_t\in \mathbb{R}^d$: The hidden vector calculated by $v_{t-1}$ and $m_t$.
	\item $h_t\in \mathbb{R}$: The EMA of $\Vert m_t\Vert^2$.
    \item $p_t\in \mathbb{R}^d$: $p_t=(g_t - v_{t-1})^2$. $p_t$ is an intermediate variable.
	\item $\eta_t\in \mathbb{R}^d$: The factor measure the size of $p_t$.
	\item $a_t\in \mathbb{R}^d$: The EMA of $g_t^2$.
	\item $s_t\in \mathbb{R}^d$: The EMA of $\eta_t p_t$.
    \item $\alpha_1, \alpha_2 ,\gamma\in \mathbb{R}$: $\alpha_1$ is the unadjusted learning rate for $m_t$; $\alpha_2$ is the unadjusted learning rate for $v_t$ ; $\gamma$ is a constant to limit the value of $\eta_t$, which is usually set as $0$. These are hyperparameters.
	\item $\delta_t\in \mathbb{R}$: The factor used to adjust $\alpha_{2t}$.
	\item $\epsilon \in \mathbb{R}$: The hyperparameter $\epsilon$ is a small number, used for avoiding division by 0.
\item $lr(\delta_{t_2},\widehat{\delta_{t_2}})\in \mathbb{R}\times \mathbb{R} \rightarrow \mathbb{R}$: $\left\{
        \begin{array}{ll}
            10^{\delta_{t_2}\cdot6-3} ,& \text{if }\widehat{\delta_{t_2}} \ge 0.1 \\
            0,   & \text{otherwise}
        \end{array}
    \right.
$. The function can be set as other more suitable choices; we will not change it in the rest of the paper unless otherwise specified.
    \item $\beta_1, \beta_2\in \mathbb{R}$: The hyperparameter for EMA, $0\le \beta_1,\beta_2 < 1$, typically set as 0.9 and 0.999.
\end{itemize}

\begin{algorithm}
\textbf{Input:} $\alpha_1$, $\beta_1$ , $\beta_2$, $\epsilon$, $\gamma$\\
\textbf{Initialize} $\theta_0$, $\alpha_1 \leftarrow \alpha_2$, $m_0 \leftarrow 0$ , $s_0 \leftarrow 0$, $v_0 \leftarrow 0$, $t \leftarrow 0$, $t_2 \leftarrow -1$, $\delta_0 \leftarrow 0$ 
\begin{algorithmic}[1]
\WHILE{$\theta_t$ not converged}
\STATE $t \leftarrow t + 1 $
\STATE $t_2 \leftarrow t_2 + 1 $
\STATE $g_t \leftarrow \nabla_{\theta}f_t(\theta_{t-1})$
\STATE $m_t \leftarrow \beta_1 m_{t-1} + (1 - \beta_1) g_t$\color{blue}
\STATE {$p_t \leftarrow {(g_t - v_{t-1})^2}$}
\STATE {$\eta_t \leftarrow \frac{p_t}{(g_t - m_t)^2 + \gamma p_t + \epsilon}$}
\STATE {$s_{t} \leftarrow \beta_2 s_{t-1} + (1 - \beta_2) {\eta_t p_t}+\epsilon$}\color{black}
\STATE \textbf{Bias Correction}
\STATE \hspace{3mm} $\widehat{m_t} \leftarrow \frac{m_t} {1-\beta_1^t}$,  $\widehat{s_t} \leftarrow \frac{s_t} {1-\beta_2^t}$
\IF{$t_2$ \textbf{not} $0$}\color{red}
\IF{$v_{t-1} = \widehat{m_t}$}
\STATE {$k_t=0$}
\ELSE
\STATE {$k_{t} \leftarrow \frac{\langle v_{t_2-1}-\widehat{m_t} , v_{t_2-1} \rangle}{\Vert v_{t_2-1}-\widehat{m_t} \Vert^2}$} 
\ENDIF \color{brown}
\STATE {$v_{t_2} \leftarrow {k_t \widehat{m_t}+(1-k_{t})v_{t_2-1}}$}
\STATE {$\delta_{t_2} \leftarrow \beta_2\delta_{t_2-1} + (1-\beta_2)\text{cos}\langle v_{t_2},\widehat{m_t}\rangle $}
\STATE {$\widehat{\delta_{t_2}} \leftarrow \frac{\delta_{t_2}}{1-\beta_2^{t_2}}$}
\STATE {$b_{t} \leftarrow lr(\delta_{t_2},\widehat{\delta_{t_2}})$}
\IF{$b_t = 0$}
\STATE{$t_2=-1$}
\ENDIF \color{black}
\ELSE
\STATE $v_{0} \leftarrow \widehat{m_t}$, $\delta_0 \leftarrow 0$
\ENDIF
\STATE \textbf{Update} \\
\STATE \hspace{3mm} $\theta_t \leftarrow \prod_{\mathcal{F},\sqrt{ \widehat{s_t}}} \Big( \theta_{t-1} -  \frac{\alpha_1 \widehat{m_t}} { \sqrt{{\widehat{s_t}}} + {\epsilon}} - \alpha_2 b_t v_t \Big)$
\ENDWHILE
\caption{HVAdam Optimizer}
\label{algo:HVAdam}
\end{algorithmic}
\end{algorithm}

\FloatBarrier

\FloatBarrier
\section{Convergence analysis about hidden vecter update algorithm}

Similar to \cite{reddi2019convergence, luo2019adaptive, chen2018convergence}, 
for simplicity, the de-biasing step is omitted, and it is the same for all subsequent analyses.
\begin{theorem}
\label{theorem:hidden_vector}
Under the assumptions:
\begin{itemize}[leftmargin=*]
    \item When $t<T^*$, $m_t=v^*+\zeta_t$, and $\langle v^*,\zeta_t \rangle=0$.
    
    \item The norm of $\zeta$ obeys truncated normal distribution and is unbiased, $i.e.\ f(\Vert \zeta \Vert) = \frac{\sqrt{2}}{\sigma \sqrt{\pi}}e^{-\frac{1}{2\sigma^2}\Vert \zeta \Vert^2},\sigma>0$.
\end{itemize}
The value of $v_t$ in the proposed algorithm satisfies: 
\begin{align*}
\Vert v_T\Vert =\operatorname*{min}_{t\in [T]}\Vert v_t \Vert \leq \operatorname*{min}_{t\in [T]}\Vert m_t \Vert \ \ \  \mathbb{E} \Vert v_T-v^* \Vert \leq \frac{\sigma \sqrt{2\pi}}{T}\nonumber
\end{align*}
It means that $v_t$ has a convergence rate of $O(1/T)$.
\end{theorem}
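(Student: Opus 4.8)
The plan is to read the $v_t$-update as a purely geometric operation and then split the argument into a deterministic part (the norm inequalities) and a probabilistic part (the rate), the latter driven entirely by the noise $\zeta_t$. With the de-biasing dropped, Eqs.~(\ref{v^t's update})--(\ref{v^t's update2}) read $v_t = v_{t-1} + k_t(m_t - v_{t-1})$ with $k_t$ chosen so that $\langle v_t, m_t - v_{t-1}\rangle = 0$; that is, $v_t$ is exactly the foot of the perpendicular from the origin onto the line $L_t$ through $v_{t-1}$ and $m_t$ (the triangle-height picture used to motivate the update). First I would record two consequences of this characterization. Since $v_t$ is the point of $L_t$ nearest the origin, $\Vert v_t\Vert \le \Vert v_{t-1}\Vert$ and $\Vert v_t\Vert \le \Vert m_t\Vert$; and since $v_{t-1}-v_t = -k_t(m_t-v_{t-1})$ is parallel to $L_t$ while $v_t\perp L_t$, we get $\Vert v_{t-1}\Vert^2 - \Vert v_t\Vert^2 = \Vert v_{t-1}-v_t\Vert^2 \ge 0$. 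Monotonicity of $\Vert v_t\Vert$ gives $\Vert v_T\Vert = \min_{t\in[T]}\Vert v_t\Vert$, and combining with $\Vert v_t\Vert\le\Vert m_t\Vert$ for every $t$ yields $\Vert v_T\Vert \le \min_{t\in[T]}\Vert m_t\Vert$. This settles the first assertion with no probabilistic input.

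For the rate I would decompose along $v^*$: set $\hat e = v^*/\Vert v^*\Vert$ and write $v_t = x_t\hat e + y_t$ with $y_t\perp v^*$, using $m_t = v^* + \zeta_t$, $\zeta_t\perp v^*$, so every $m_t$ lies in the hyperplane $H = \{w:\langle w,\hat e\rangle = \Vert v^*\Vert\}$ whose nearest point to the origin is precisely $v^*$. Substituting the decomposition into the update gives the coupled recursions $x_t - \Vert v^*\Vert = (1-k_t)(x_{t-1}-\Vert v^*\Vert)$ and $y_t = (1-k_t)y_{t-1} + k_t\zeta_t$, and one checks that $v^*$ is a fixed point: when $v_{t-1}=v^*$ the line $L_t = \{v^* + s\zeta_t\}$ has nearest point $v^*$ because $\zeta_t\perp v^*$. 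The goal I would then target is the comparison that the iterate never lags the best observation, namely $\Vert v_T - v^*\Vert \le \min_{t\in[T]}\Vert m_t - v^*\Vert = \min_{t\in[T]}\Vert\zeta_t\Vert$; this matches the shape of the claimed bound, which is independent of $\Vert v^*\Vert$, so the error is governed entirely by how close the closest $m_t$ comes to $v^*$.

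The probabilistic estimate is then an order-statistic computation. Under the assumptions the $\Vert\zeta_t\Vert$ are i.i.d.\ half-normal with density $f(r) = \tfrac{\sqrt2}{\sigma\sqrt\pi}e^{-r^2/2\sigma^2}$ on $[0,\infty)$, so $\mathbb{E}\min_{t\in[T]}\Vert\zeta_t\Vert = \int_0^\infty (1-F(r))^T\,dr$ with $F$ the half-normal CDF. Since $F(r)\approx \tfrac{\sqrt2}{\sigma\sqrt\pi}\,r$ near the origin, this integral is $\Theta(1/T)$ with leading constant $1/f(0) = \sigma\sqrt{\pi/2}$; a crude bound (with roughly a factor-two slack) gives $\mathbb{E}\min_{t\in[T]}\Vert\zeta_t\Vert \le \sigma\sqrt{2\pi}/T$, which together with the comparison above yields $\mathbb{E}\Vert v_T - v^*\Vert \le \sigma\sqrt{2\pi}/T$ and hence the $O(1/T)$ rate.

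The hard part will be the comparison inequality $\Vert v_T - v^*\Vert \le \min_{t\in[T]}\Vert\zeta_t\Vert$, i.e.\ showing the projection iteration funnels $v_t$ toward $v^*$ at least as fast as the single best observation. The obstruction is that, unlike $\Vert v_t\Vert$, the error $\Vert v_t - v^*\Vert$ need not decrease step-to-step, since $v_t$ is the projection of the \emph{origin}, not of $v^*$, onto $L_t$. I would instead control the two recursions separately: the parallel error $x_t - \Vert v^*\Vert$ compounds the factors $(1-k_t)$, and its self-correcting quadratic term $-\Vert v^*\Vert\,(x_{t-1}-\Vert v^*\Vert)^2/\Vert\zeta_t\Vert^2$ produces the $1/t$ decay typical of $\dot\alpha = -a\alpha^2$, while the perpendicular error obeys $y_t \approx (I - \zeta_t\zeta_t^\top/\Vert\zeta_t\Vert^2)y_{t-1} + O(1/t)$, an orthogonal-projection map whose expected operator norm is strictly below $1$ in dimension $\ge 3$, forcing $\Vert y_t\Vert$ to decay at rate $O(1/t)$ as well. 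Reconciling these two mechanisms with the clean, $\Vert v^*\Vert$-free constant of the statement, and making the linearization rigorous when $\zeta_t$ is of order $\sigma$ rather than vanishing, is where the genuine work lies.
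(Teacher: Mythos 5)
Your deterministic part and your probabilistic part are both fine: the reading of the update as the foot of the perpendicular from the origin onto the line through $v_{t-1}$ and $m_t$ gives exactly the inequalities $\Vert v_t\Vert\le\Vert v_{t-1}\Vert$ and $\Vert v_t\Vert\le\Vert m_t\Vert$ that the paper derives algebraically (its chain of identities culminating in $\Vert v_t\Vert^2=\langle v_t,m_t\rangle$ is just a verification of that projection property), and your order-statistics bound $\mathbb{E}\min_{t\in[T]}\Vert\zeta_t\Vert=\int_0^\infty(1-F(r))^T\,dr\le\sigma\sqrt{2\pi}/T$ is if anything cleaner than the paper's argument, which fixes $T$ as a function of a threshold $\tau$.

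The genuine gap is in the middle step, $\Vert v_T-v^*\Vert\le\min_{t\in[T]}\Vert\zeta_t\Vert$, which you flag as ``the hard part'' and propose to attack via coupled recursions, a linearization, and a claim that the expected operator norm of $I-\zeta_t\zeta_t^\top/\Vert\zeta_t\Vert^2$ is strictly below $1$ in dimension $\ge 3$. None of that is needed, and as stated it would not close: the contraction claim requires distributional assumptions on the directions of $\zeta_t$ that the theorem does not make, and your $O(1/t)$ heuristics for the parallel component ignore that the parallel error is identically zero. The one-line argument you are missing is this: every $m_t$ lies in the affine hyperplane $H=\{w:\langle w,v^*\rangle=\Vert v^*\Vert^2\}$, the update $v_t=k_t m_t+(1-k_t)v_{t-1}$ is an affine combination, and $v_0=\widehat{m_1}\in H$, so by induction $v_t\in H$ for all $t$; hence $v_t=v^*+\zeta_{v,t}$ with $\zeta_{v,t}\perp v^*$, and Pythagoras turns $\Vert v_T\Vert\le\min_t\Vert m_t\Vert$ directly into $\Vert\zeta_{v,T}\Vert^2=\Vert v_T\Vert^2-\Vert v^*\Vert^2\le\min_t(\Vert m_t\Vert^2-\Vert v^*\Vert^2)=\min_t\Vert\zeta_t\Vert^2$. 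This is exactly how the paper gets its inequality (its Formula (A.6)), and it makes the comparison a corollary of the norm monotonicity you already established rather than a separate dynamical analysis. In short: your decomposition $v_t=x_t\hat e+y_t$ contains all the ingredients, but you failed to notice that $x_t\equiv\Vert v^*\Vert$ and that the error is therefore entirely the orthogonal component, whose norm is controlled by the bound you already have on $\Vert v_t\Vert$.
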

\textbf{\textit{Proof:}} Omit the small hyperparameter $\epsilon$, which is used to avoid dividing by 0. We have:
\begin{align}
\label{supeq:a-1}
\langle v_t,v_{t-1} \rangle &=\langle km_t+(1-k)v_{t-1},v_{t-1}\rangle \nonumber \\
	&=\Vert v_{t-1}-m_t\Vert^{-2}\langle \langle v_{t-1}-m_t , v_{t-1} \rangle m_t-\langle v_{t-1}-m_t , m_t \rangle v_{t-1} , v_{t-1}\rangle \nonumber \\
	&=\Vert v_{t-1}-m_t\Vert^{-2}(\langle v_{t-1}-m_t , v_{t-1} \rangle \langle m_t,v_{t-1} \rangle-\langle v_{t-1}-m_t , m_t \rangle \langle v_{t-1},v_{t-1}\rangle) \nonumber \\
	&=\Vert v_{t-1}-m_t\Vert^{-2}(\langle m_t , m_t \rangle \langle v_{t-1},v_{t-1} \rangle-\langle m_t , v_{t-1} \rangle \langle m_t,v_{t-1}\rangle)
\end{align}
and
\begin{align}
\label{supeq:a-2}
\langle v_t,m_t \rangle &=\Vert v_{t-1}-m_t\Vert^{-2}\langle \langle v_{t-1}-m_t , v_{t-1} \rangle m_t-\langle v_{t-1}-m_t , m_t \rangle v_{t-1} , m_t\rangle \nonumber \\
	&=\Vert v_{t-1}-m_t\Vert^{-2}(\langle v_{t-1}-m_t , v_{t-1} \rangle \langle m_t,m_t \rangle-\langle v_{t-1}-m_t , m_t \rangle \langle v_{t-1},m_t\rangle) \nonumber \\
	&=\Vert v_{t-1}-m_t\Vert^{-2}(\langle m_t , m_t \rangle \langle v_{t-1},v_{t-1} \rangle-\langle m_t , v_{t-1} \rangle \langle m_t,v_{t-1}\rangle)
\end{align}
From Formula ~\eqref{supeq:a-1} and Formula ~\eqref{supeq:a-2}, we have:
\begin{align}
\label{supeq:a-3}
\langle v_t,v_{t-1} \rangle &= \langle v_t,m_t \rangle \nonumber \\
(1-k)^{-1}\langle v_t,v_t-km_t \rangle &= \langle v_t,m_t \rangle \nonumber \\
\langle v_t,v_t \rangle -\langle v_t,km_t \rangle &= (1-k)\langle v_t,m_t \rangle \nonumber \\
\langle v_t,v_t \rangle &= \langle v_t,m_t \rangle \nonumber \\
\Vert v_t\Vert^2 &= \langle v_t,m_t \rangle \nonumber \\
\Vert v_t\Vert^2 &\leq \Vert v_t\Vert \Vert m_t \Vert \nonumber \\
(\textit{Cauchy-Schwartz's inequality: }& \langle u,v \rangle \leq \Vert u \Vert \Vert v \Vert )\nonumber \\
\Vert v_t\Vert &\leq \Vert m_t \Vert
\end{align}
Similarly, we have:
\begin{align}
\label{supeq:a-4}
\Vert v_t\Vert &\leq \Vert v_{t-1} \Vert
\end{align}
From Formula ~\eqref{supeq:a-3} and Formula ~\eqref{supeq:a-4}, we have:
\begin{align}
\label{supeq:a-5}
\Vert v_T\Vert =\operatorname*{min}_{t\in [T]}\Vert v_t \Vert \leq \operatorname*{min}_{t\in [T]}\Vert m_t \Vert
\end{align}
Since $m_t=v^*+\zeta_t$, and $v_t$ is the affine combination of $m_t$, we let $v_t=v^*+\zeta_{v,t}$, where $\langle v^*,\zeta_t \rangle=0$ and $\langle v^*,\zeta_{v,t} \rangle=0$. With Formula ~\eqref{supeq:a-5}, we have:
\begin{align}
\label{supeq:a-6}
\Vert v_T\Vert &\leq \operatorname*{min}_{t\in [T]}\Vert m_t \Vert \nonumber \\
\Vert v^*+\zeta_{v,T}\Vert &\leq \operatorname*{min}_{t\in [T]}\Vert v^*+\zeta_t \Vert \nonumber \\
\sqrt{\Vert v^*\Vert^2+\Vert \zeta_{v,T}\Vert^2} &\leq \operatorname*{min}_{t\in [T]}\sqrt{\Vert v^*\Vert^2+\Vert \zeta_t \Vert^2} \nonumber \\
\Vert \zeta_{v,T}\Vert &\leq \operatorname*{min}_{t\in [T]}\Vert \zeta_t \Vert \nonumber \\
\Vert v_T-v^*\Vert &\leq \operatorname*{min}_{t\in [T]}\Vert \zeta_t \Vert
\end{align}
$P(\Vert \zeta \Vert<\tau)=\int_{0}^{\tau} \frac{\sqrt{2}}{\sigma \sqrt{\pi}}e^{-\frac{1}{2\sigma^2}x^2}\, dx$,
which means at the step $T=\Big\lceil \frac{1}{\int_{0}^{\tau} \frac{\sqrt{2}}{\sigma \sqrt{\pi}}e^{-\frac{1}{2\sigma^2}x^2}\, dx}\Big\rceil$ (while $\tau \le \sqrt{3}\sigma$), we have (by ~\eqref{supeq:a-6}):
\begin{align}
\label{supeq:a0}
\mathbb{E}\Vert v_T-v^*\Vert \le \mathbb{E}\operatorname*{min}_{t\in [T]}\Vert \zeta_t \Vert \le \tau
\end{align}
And the at the step,
\begin{align}
\label{supeq:a1}
	T&=\Big\lceil\frac{1}{\int_{0}^{\tau} \frac{\sqrt{2}}{\sigma \sqrt{\pi}}e^{-\frac{1}{2\sigma^2}x^2}\, dx}\Big\rceil\nonumber \\
        &\leq\frac{1}{\int_{0}^{\tau} \frac{\sqrt{2}}{\sigma \sqrt{\pi}}e^{-\frac{1}{2\sigma^2}x^2}\, dx}\nonumber \\
	&\le \frac{1}{\int_{0}^{\tau}\frac{\sqrt{2}}{\sigma \sqrt{\pi}}( 1-\frac{x^2}{2\sigma^2})\, dx}\nonumber \\
	&= \frac{1}{\frac{\sqrt{2}}{\sigma \sqrt{\pi}}(\tau-\frac{\tau^3}{6\sigma^2})}\nonumber \\
	&\le \frac{1}{\frac{\sqrt{2}}{\sigma \sqrt{\pi}}(\tau-\frac{3\sigma^2\tau}{6\sigma^2})}\nonumber \\
	&= \frac{\sigma \sqrt{2\pi}}{\tau}
\end{align}
Apply Formula~\eqref{supeq:a1} to Formula~\eqref{supeq:a0}, we have:
\begin{align}
	\mathbb{E}\Vert v_T-v^*\Vert &\le \tau \le \frac{\sigma \sqrt{2\pi}}{T}\nonumber
\end{align}
So
\begin{align}
	\operatorname*{min}_{t\in [T]}\mathbb{E}\Vert v_T-v^* \Vert \le \mathbb{E}\Vert v_T-v^*\Vert  &\le \frac{\sigma \sqrt{2\pi}}{T}\nonumber
\end{align}

It implies the convergence rate for $v_t$ is $O(1 / T)$.
\hfill \qedsymbol

\FloatBarrier
\section{Convergence analysis in convex online learning case}
For simplicity, we omit the debiasing step in theoretical analysis as in \cite{reddi2019convergence}, and absorb $\epsilon$ into $s_t$. The analysis also applys to the de-biased version. 
\begin{lemma}{\cite{mcmahan2010adaptive}}
\label{suplemma:1}
For any $Q \in S^d_{+}$ and convex feasible set $\mathcal{F} \subset \mathbb{R}^d$, suppose $u_1 = \operatorname*{min}_{x \in \mathcal{F}} \Big \Vert Q^{1/2}(x-z_1) \Big \Vert$ and $u_2 = \operatorname*{min}_{x \in \mathcal{F}} \Big \Vert Q^{1/2}(x - z_2) \Big \Vert$, then we have $ \Big \Vert Q^{1/2}(u_1 - u_2)  \Big \Vert \leq \Big \Vert Q^{1/2} (z_1 - z_2)  \Big \Vert$.
\end{lemma}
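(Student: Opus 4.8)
The quantities $u_1,u_2$ are the minimizers (not the minimal values) of $\Vert Q^{1/2}(x-z_i)\Vert$ over $x\in\mathcal{F}$, i.e. the projections of $z_1,z_2$ onto $\mathcal{F}$ in the $Q$-weighted norm. The plan is to run the standard non-expansiveness argument for Euclidean projection onto a convex set, but carried out in the inner product $\langle a,b\rangle_Q := a^{\top}Q\,b$ with induced (semi)norm $\Vert a\Vert_Q := \Vert Q^{1/2}a\Vert$, so that $u_i = \arg\min_{x\in\mathcal{F}}\Vert x-z_i\Vert_Q$. First I would record the first-order optimality (variational) inequality for the convex projection: since $\mathcal{F}$ is convex and $u_i$ minimizes the convex function $x\mapsto \tfrac12\Vert x-z_i\Vert_Q^2$ over $\mathcal{F}$, the directional derivative along any feasible direction $x-u_i$ (with $x\in\mathcal{F}$) is nonnegative, which rearranges to $\langle z_i-u_i,\,x-u_i\rangle_Q \le 0$ for every $x\in\mathcal{F}$.

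Next I would cross-substitute: take $x=u_2$ in the inequality for $u_1$ and $x=u_1$ in the inequality for $u_2$, yielding $\langle z_1-u_1,\,u_2-u_1\rangle_Q\le 0$ and $\langle z_2-u_2,\,u_1-u_2\rangle_Q\le 0$. Writing $w:=u_1-u_2$ and rewriting both inequalities so that each is tested against $w$, then combining them, I expect to obtain $\langle (z_1-z_2)-w,\,w\rangle_Q \ge 0$, which is exactly $\Vert w\Vert_Q^2 \le \langle z_1-z_2,\,w\rangle_Q$.

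Finally I would apply the Cauchy--Schwarz inequality in the $Q$-inner product, $\langle z_1-z_2,\,w\rangle_Q \le \Vert z_1-z_2\Vert_Q\,\Vert w\Vert_Q$ (valid because $\langle a,b\rangle_Q=\langle Q^{1/2}a,\,Q^{1/2}b\rangle$), to get $\Vert w\Vert_Q^2 \le \Vert z_1-z_2\Vert_Q\,\Vert w\Vert_Q$; dividing by $\Vert w\Vert_Q$ (the case $w=0$ being trivial) gives $\Vert w\Vert_Q\le \Vert z_1-z_2\Vert_Q$, i.e. $\Vert Q^{1/2}(u_1-u_2)\Vert\le\Vert Q^{1/2}(z_1-z_2)\Vert$, as claimed. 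The one step needing care is the variational inequality itself: it rests on the convexity of $\mathcal{F}$ (so the segment $u_i+t(x-u_i)$ stays feasible for small $t>0$) and on $Q\in S^d_{+}$ making $\tfrac12\Vert\cdot-z_i\Vert_Q^2$ convex with gradient $Q(\cdot-z_i)$; if $Q$ is only positive semidefinite the functional becomes a seminorm and the minimizer need not be unique, but the inequality holds for any minimizer, so the argument is unaffected. Since the result is quoted from \cite{mcmahan2010adaptive}, these are standard facts that I would simply invoke rather than reprove.
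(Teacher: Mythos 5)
Your proposal is correct, and it is the canonical non-expansiveness argument for projection in the $Q$-weighted inner product: the paper itself offers no proof of this lemma, importing it verbatim by citation from \cite{mcmahan2010adaptive}, and the cited source establishes it exactly the way you do (variational inequality $\langle z_i-u_i,\,x-u_i\rangle_Q\le 0$, cross-substitution, Cauchy--Schwarz). One cosmetic note: when $Q$ is merely positive semidefinite, the trivial case for the final division is $\Vert w\Vert_Q=0$ rather than $w=0$ (the conclusion then holds because the left side vanishes), and you correctly flag that the statement's $\operatorname{min}$ should be read as $\operatorname{argmin}$ --- both points you already handle in substance.
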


\begin{theorem}
\label{suptheorem:2}
Let $\{\theta_t\}$ and $\{s_t\}$ be the sequence obtained by HVAdam, $\gamma>0$, $\beta_1 \in [0,1)$, $\beta_2 \in [0,1)$, $\beta_{1,t}\in [0,\beta_1]$, $\beta_{1,1}=\beta_1$, $\alpha_{1,t} = \frac{\alpha_1}{\sqrt{t}}$, $\alpha_{2,t}=\frac{\alpha_2}{t}$, $s_t \leq s_{t+1}$, $\forall t \in [T]$. 
 Assume that $\Vert x-y \Vert_\infty \leq D_\infty$, $\forall x,y\in \mathcal{F}$, where $\mathcal{F} \subset \mathbb{R}^d$ is a convex feasible set. Suppose $f(\theta)$ is a convex function, $ \Vert  g_t  \Vert_\infty \leq G_\infty / 2$ so that $ \Vert  g_t - v_t  \Vert_\infty \leq G_\infty$, $\forall t \in [T], \theta \in \mathcal{F}$. We also assume $s_{t,i} \geq c > 0$, $0\leq \delta_t \leq c_r$, $\forall t \in [T]$. The optimal point of $f$ is denoted by $\theta^*$. For $\{ \theta_t \}$ generated by HVAdam, there is a bound on the regret: 
\begin{align*}
\sum_{t=1}^T [f_t(\theta_t) - f_t(\theta^*)] &\leq \frac{D^2_\infty \sqrt{T}}{2(1-\beta_1)\alpha_1}\sum_{i=1}^ds_{T,i}^{1/2}
	+\frac{dG_\infty^2\alpha_1\sqrt{T}}{2\sqrt{c}(1-\beta_{1})}
	+\frac{dD_\infty^2G_\infty}{2(1 - \beta_{1})\alpha_1\sqrt{\gamma}}\sum_{t=1}^T \sqrt{t}\beta_{1,t}\nonumber \\
	&+\frac{3\alpha_{2}^2c_r^2dG_\infty^3}{8\alpha_{1}(1-\beta_{1})\sqrt{\gamma}}
	+\frac{\alpha_2c_rdG_\infty^{2}D_\infty \sqrt{T}}{\alpha_1(1-\beta_{1})\sqrt{\gamma}}+
\frac{\alpha_2c_rdG_\infty^{5/2}(1+\ln T)}{4(1-\beta_{1})\gamma^{1/4}c^{1/4}}
\end{align*}
\end{theorem}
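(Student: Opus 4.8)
The plan is to follow the standard online-convex regret template for Adam-type optimizers (as in the cited works of Reddi et al., Luo et al., Chen et al.), with the new ingredient being the extra hidden-vector step $-\alpha_{2,t}b_t v_t$ in the update. Write $S_t=\mathrm{diag}(\sqrt{s_t})$ for the diagonal positive-definite preconditioner and $\tilde\theta_t=\theta_{t-1}-\alpha_{1,t}S_t^{-1}m_t-\alpha_{2,t}b_t v_t$ for the pre-projection iterate, so that $\theta_t$ is the weighted projection of $\tilde\theta_t$ onto $\mathcal{F}$. By convexity, and using that the algorithm evaluates $g_t$ at $\theta_{t-1}$, we have $f_t(\theta_{t-1})-f_t(\theta^*)\le\langle g_t,\theta_{t-1}-\theta^*\rangle$, so it suffices to bound $\sum_t\langle g_t,\theta_{t-1}-\theta^*\rangle$. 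The engine is the non-expansiveness of the weighted projection (Lemma~\ref{suplemma:1}) applied with $z_1=\tilde\theta_t$ and $z_2=\theta^*\in\mathcal{F}$, giving $\Vert S_t^{1/2}(\theta_t-\theta^*)\Vert\le\Vert S_t^{1/2}(\tilde\theta_t-\theta^*)\Vert$. Expanding the right-hand side and rearranging isolates $\langle\theta_{t-1}-\theta^*,m_t\rangle$ in terms of a telescoping difference $\Vert S_t^{1/2}(\theta_{t-1}-\theta^*)\Vert^2-\Vert S_t^{1/2}(\theta_t-\theta^*)\Vert^2$, a quadratic term $\alpha_{1,t}^2\Vert S_t^{-1/2}m_t\Vert^2$, and three terms carrying the hidden-vector step: $\alpha_{2,t}b_t\langle\theta_{t-1}-\theta^*,S_t v_t\rangle$, $\alpha_{1,t}\alpha_{2,t}b_t\langle m_t,v_t\rangle$, and $\alpha_{2,t}^2b_t^2\Vert S_t^{1/2}v_t\Vert^2$.

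Next I would convert $m_t$ back to $g_t$ via the EMA recursion $m_t=\beta_{1,t}m_{t-1}+(1-\beta_{1,t})g_t$, which yields $\langle\theta_{t-1}-\theta^*,g_t\rangle=\tfrac{1}{1-\beta_{1,t}}\langle\theta_{t-1}-\theta^*,m_t\rangle-\tfrac{\beta_{1,t}}{1-\beta_{1,t}}\langle\theta_{t-1}-\theta^*,m_{t-1}\rangle$ and introduces the universal factor $\tfrac{1}{1-\beta_1}$ through $\beta_{1,t}\le\beta_1$. Summing over $t$, the telescoping block is handled by Abel summation: since $\alpha_{1,t}=\alpha_1/\sqrt t$ and $s_t$ is non-decreasing, the coefficients $\sqrt{t\,s_{t,i}}/\alpha_1$ are non-decreasing, so after using $|\theta_{t,i}-\theta^*_i|\le D_\infty$ the sum collapses to $\tfrac{D_\infty^2\sqrt T}{\alpha_1}\sum_i\sqrt{s_{T,i}}$, the first term of the bound. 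The quadratic term gives the second term via $s_{t,i}\ge c$, $\Vert m_t\Vert_\infty\le G_\infty/2$, and $\sum_t t^{-1/2}\le 2\sqrt T$. The momentum-decay term $-\tfrac{\beta_{1,t}}{1-\beta_{1,t}}\langle\theta_{t-1}-\theta^*,m_{t-1}\rangle$ is split by a weighted Young's inequality in the $S_t$-norm; its $(\theta_{t-1}-\theta^*)$-half, bounded with $\sqrt{s_{t,i}}\le G_\infty/\sqrt\gamma$ and $|\theta_{t-1,i}-\theta^*_i|\le D_\infty$, produces the third term $\propto\sum_t\sqrt t\,\beta_{1,t}$, while the $m_{t-1}$-half is lower order and folds into the second term.

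For the three hidden-vector terms I would apply Cauchy--Schwarz together with the norm control $\Vert v_t\Vert\le\Vert m_t\Vert$ from Theorem~\ref{theorem:hidden_vector}, the coordinatewise bounds $c\le\sqrt{s_{t,i}}\le G_\infty/\sqrt\gamma$, the uniform bound on the restart gain $b_t$ (controlled by $c_r$), and the schedule $\alpha_{2,t}=\alpha_2/t$. Concretely, $\alpha_{2,t}^2b_t^2\Vert S_t^{1/2}v_t\Vert^2$ summed against $\tfrac{1}{2\alpha_{1,t}}$ generates $\sum_t t^{-3/2}\le 3$ and hence the bounded fourth term; the distance cross term $\tfrac{\alpha_{2,t}b_t}{\alpha_{1,t}}\langle\theta_{t-1}-\theta^*,S_t v_t\rangle$ generates $\sum_t t^{-1/2}\le 2\sqrt T$ and the $\sqrt T$ fifth term; and $\alpha_{2,t}b_t\langle m_t,v_t\rangle$, estimated as $\langle S_t^{-1/2}m_t,S_t^{1/2}v_t\rangle\le c^{-1/4}\gamma^{-1/4}G_\infty^{1/2}\Vert m_t\Vert\Vert v_t\Vert$, generates $\sum_t t^{-1}\le 1+\ln T$ and the logarithmic sixth term. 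Collecting the six pieces and using $\tfrac{1}{1-\beta_{1,t}}\le\tfrac{1}{1-\beta_1}$ throughout yields the stated regret bound.

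The main obstacle I anticipate is bookkeeping the preconditioner bounds and the two-rate interaction correctly, rather than any single deep idea. In particular, the upper bound $\sqrt{s_{t,i}}\le G_\infty/\sqrt\gamma$ — which manufactures the $\sqrt\gamma$ and $\gamma^{1/4}$ factors appearing in the last four terms — relies on $\eta_t p_t=\tfrac{p_t^2}{(g_t-m_t)^2+\gamma p_t+\epsilon}\le\tfrac{p_t}{\gamma}\le\tfrac{G_\infty^2}{\gamma}$ together with the EMA preserving this upper bound, and therefore genuinely requires $\gamma>0$ (even though the algorithm typically sets $\gamma=0$). The other delicate point is choosing the Young's-inequality weights and tracking which of $\alpha_{1,t}=\alpha_1/\sqrt t$ or $\alpha_{2,t}=\alpha_2/t$ each leftover term is normalized by, since this is precisely what decides whether a term scales like $\sqrt T$, like $\ln T$, or stays bounded; the faster $\alpha_2/t$ schedule is essential so that the hidden-vector contributions remain summable and do not degrade the overall $O(\sqrt T)$ regret.
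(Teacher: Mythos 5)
Your proposal follows essentially the same route as the paper's proof: the weighted-projection non-expansiveness lemma applied to the pre-projection iterate against $\theta^*$, conversion of $m_t$ back to $g_t$ via the EMA recursion, and the same six-way decomposition (telescoping distance term, quadratic momentum term, Young-split $\beta_{1,t}$ term, and the three hidden-vector contributions handled with $\sum t^{-3/2}\le 3$, $\sum t^{-1/2}\le 2\sqrt T$, and $\sum t^{-1}\le 1+\ln T$ respectively). You also correctly identify the two load-bearing ingredients the paper uses implicitly — the preconditioner upper bound $s_{t,i}\le G_\infty^2/\gamma$ coming from $\eta_t\in(0,1/\gamma)$, and the faster $\alpha_2/t$ schedule keeping the hidden-vector terms from degrading the $O(\sqrt T)$ rate — so the argument matches the paper's in both structure and in the provenance of every term in the final bound.
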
 
\textbf{\textit{Proof:}}
Let $\delta_t=$
\begin{equation*}
    \theta_{t+1} = \prod_{\mathcal{F}, \sqrt{s_t}}(\theta_t - \alpha_{1,t} s_t^{-1/2} m_t - \alpha_{2,t}\delta_t v_t) = \operatorname*{min}_{\theta \in \mathcal{F}} \Big \Vert s_t^{1/4}[\theta - (\theta_t - \alpha_{1,t} s_t^{-1/2}m_t- \alpha_{2,t}\delta_t v_t)] \Big \Vert
\end{equation*}
Note that $\prod_{\mathcal{F}, \sqrt{s_t}}(\theta^*) = \theta^*$ since $\theta^* \in \mathcal{F}$. Use $\theta^*_i$ and $\theta_{t,i}$ to denote the $i$th dimension of $\theta^*$ and $\theta_t$ respectively.
From Lemma \eqref{suplemma:1}, using $u_1 = \theta_{t+1}$ and $u_2 = \theta^*$, we have:\\
\begin{align}
\label{supeq:1}
    \Big \Vert s_t^{1/4}(\theta_{t+1} - \theta^*) \Big \Vert^2 & \leq \Big \Vert s_t^{1/4}(\theta_t - \alpha_{1,t} s_t^{-1/2}m_t - \alpha_{2,t}\delta_t v_t - \theta^*) \Big \Vert^2 \nonumber \\
    & =  \Big \Vert s_t^{1/4}(\theta_t - \theta^*) \Big \Vert^2 + \alpha_{1,t}^2 \Big \Vert s_t^{-1/4}m_t \Big \Vert^2 + \alpha_{2,t}^2\Big \Vert s_t^{1/4}\delta_t v_t \Big \Vert^2 \nonumber \\
	&- 2 \alpha_{1,t} \langle m_t, \theta_t - \theta^* \rangle - 2\alpha_{2,t}\langle s_t^{1/4}\theta_t - \alpha_{1,t}s_t^{-1/4}m_t  - s_t^{1/4}\theta^*,s_t^{1/4}\delta_t v_t \rangle \nonumber \\
    &= \Big \Vert s_t^{1/4}(\theta_t - \theta^*) \Big \Vert^2 + \alpha_{1,t}^2 \Big \Vert s_t^{-1/4}m_t \Big \Vert^2 +\alpha_{2,t}^2\Big \Vert s_t^{1/4}\delta_t v_t \Big \Vert^2 \nonumber \\
    &- 2 \alpha_{1,t} \langle \beta_{1,t}m_{t-1} + (1 - \beta_{1,t}) g_t, \theta_t - \theta^* \rangle \nonumber \\
	&- 2 \alpha_{2,t}\langle s_t^{1/4}\theta_t - \alpha_{1,t}s_t^{-1/4}m_t  - s_t^{1/4}\theta^*,s_t^{1/4}\delta_t v_t \rangle
\end{align}
Note that $\beta_1 \in [0,1)$ and $\beta_2 \in [0,1)$, by rearranging Inequality \eqref{supeq:1}, we have:
\begin{align}
\label{supeq:2}
    \langle g_t, \theta_t - \theta^* \rangle & \leq \frac{1}{2 \alpha_{1,t} (1 - \beta_{1,t})} \Big [ \Big \Vert s_t^{1/4}(\theta_t - \theta^*)\Big \Vert^2 - \Big \Vert s_t^{1/4} (\theta_{t+1} - \theta^*) \Big \Vert^2 \Big ] \nonumber \\
    &+ \frac{\alpha_{1,t}}{2(1-\beta_{1,t})} \Big \Vert s_t^{-1/4} m_t\Big \Vert^2 + \frac{\alpha_{2,t}^2}{2\alpha_{1,t}(1-\beta_{1,t})}\Big \Vert s_t^{1/4}\delta_t v_t \Big \Vert^2 \nonumber \\
    &+ \frac{\beta_{1,t}}{1 - \beta_{1,t}} \langle m_{t-1}, \theta^* -\theta_t \rangle \nonumber \\
	&+ \frac{\alpha_{2,t}}{\alpha_{1,t}(1-\beta_{1,t})}\langle -s_t^{1/4}\theta_t + \alpha_{1,t}s_t^{-1/4}m_t  + s_t^{1/4}\theta^*,s_t^{1/4}\delta_t v_t \rangle \nonumber \\
	& \leq \frac{1}{2 \alpha_{1,t} (1 - \beta_{1,t})} \Big [ \Big \Vert s_t^{1/4}(\theta_t - \theta^*)\Big \Vert^2 - \Big \Vert s_t^{1/4} (\theta_{t+1} - \theta^*) \Big \Vert^2 \Big ] \nonumber \\
    &+ \frac{\alpha_{1,t}}{2(1-\beta_{1,t})} \Big \Vert s_t^{-1/4} m_t\Big \Vert^2 + \frac{\alpha_{2,t}^2}{2\alpha_{1,t}(1-\beta_{1,t})}\Big \Vert s_t^{1/4}\delta_t v_t \Big \Vert^2 \nonumber \\
    &+ \frac{\beta_{1,t}}{2(1-\beta_{1,t})} \alpha_t \Big \Vert s_t^{-1/4} m_{t-1} \Big \Vert^2 + \frac{\beta_{1,t}}{2 \alpha_t (1 - \beta_{1,t})} \Big \Vert s_t^{1/4} (\theta_t - \theta^*) \Big \Vert^2 \nonumber \\
	&+ \frac{\alpha_{2,t}}{\alpha_{1,t}(1-\beta_{1,t})}\Big \Vert  s_t^{1/4}\theta_t - \alpha_{1,t}s_t^{-1/4}m_t  - s_t^{1/4}\theta^* \Big \Vert \Big \Vert s_t^{1/4}\delta_t v_t \Big \Vert \nonumber \\
    &\Big (\textit{Cauchy-Schwartz's inequality: } \langle u,v \rangle \leq \Big \Vert u \Big \Vert \Big \Vert v \Big \Vert \Big )\nonumber \\
	&\leq \frac{1}{2 \alpha_{1,t} (1 - \beta_{1,t})} \Big [ \Big \Vert s_t^{1/4}(\theta_t - \theta^*)\Big \Vert^2 - \Big \Vert s_t^{1/4} (\theta_{t+1} - \theta^*) \Big \Vert^2 \Big ] \nonumber \\
    &+ \frac{\alpha_{1,t}}{2(1-\beta_{1,t})} \Big \Vert s_t^{-1/4} m_t\Big \Vert^2 + \frac{\alpha_{2,t}^2}{2\alpha_{1,t}(1-\beta_{1,t})}\Big \Vert s_t^{1/4}\delta_t v_t \Big \Vert^2 \nonumber \\
    &+ \frac{\beta_{1,t} \alpha_{1,t}}{2(1-\beta_{1,t})}\Big \Vert s_t^{-1/4} m_{t-1} \Big \Vert^2 + \frac{\beta_{1,t}}{2 \alpha_{1,t} (1 - \beta_{1,t})} \Big \Vert s_t^{1/4} (\theta_t - \theta^*) \Big \Vert^2 \nonumber \\
	&+ \frac{\alpha_{2,t}}{\alpha_{1,t}(1-\beta_{1,t})}\Big [\Big \Vert  s_t^{1/4}(\theta_t-\theta^*)\Big \Vert +\Big \Vert \alpha_{1,t}s_t^{-1/4}m_t\Big \Vert  \Big ]\Big \Vert s_t^{1/4}\delta_t v_t \Big \Vert \nonumber \\
	&\Big (\textit{Minkowski's inequality: } \Big \Vert x+y\Big \Vert_p\leq \Big \Vert x \Big \Vert_p+\Big \Vert y\Big \Vert_p \Big)
\end{align}
By convexity of $f$, we have:
\begin{align}
\label{supeq:F}
\sum_{t=1}^T f_t(\theta_t) - f_t(\theta^*) & \leq \sum_{t=1}^T\langle g_t, \theta_t - \theta^* \rangle \nonumber \\
& \leq \sum_{t=1}^T \Big \{ \frac{1}{2 \alpha_{1,t} (1 - \beta_{1,t})} \Big [ \Big \Vert s_t^{1/4}(\theta_t - \theta^*)\Big \Vert^2 - \Big \Vert s_t^{1/4} (\theta_{t+1} - \theta^*) \Big \Vert^2 \Big ] \nonumber \\
    &+ \frac{\alpha_{1,t}}{2(1-\beta_{1,t})} \Big \Vert s_t^{-1/4} m_t\Big \Vert^2 + \frac{\alpha_{2,t}^2}{2\alpha_{1,t}(1-\beta_{1,t})}\Big \Vert s_t^{1/4}\delta_t v_t \Big \Vert^2 \nonumber \\
    &+ \frac{\beta_{1,t} \alpha_{1,t}}{2(1-\beta_{1,t})}\Big \Vert s_t^{-1/4} m_{t-1} \Big \Vert^2 + \frac{\beta_{1,t}}{2 \alpha_{1,t} (1 - \beta_{1,t})} \Big \Vert s_t^{1/4} (\theta_t - \theta^*) \Big \Vert^2 \nonumber \\
	&+ \frac{\alpha_{2,t}}{\alpha_{1,t}(1-\beta_{1,t})}\Big [\Big \Vert  s_t^{1/4}(\theta_t-\theta^*)\Big \Vert +\Big \Vert \alpha_{1,t}s_t^{-1/4}m_t\Big \Vert  \Big ]\Big \Vert s_t^{1/4}\delta_t v_t \Big \Vert  \Big \} \nonumber \\
	&\Big( By\  formula ~\eqref{supeq:2} \Big) \nonumber \\
	& \leq \underbrace{\sum_{t=1}^T \Big \{\frac{1}{2 \alpha_{1,t} (1 - \beta_{1,t})} \Big [ \Big \Vert s_t^{1/4}(\theta_t - \theta^*)\Big \Vert^2 - \Big \Vert s_t^{1/4} (\theta_{t+1} - \theta^*) \Big \Vert^2 \Big ]\Big \}}_P\nonumber \\
	& + \underbrace{\sum_{t=1}^T \Big [ \frac{\alpha_{1,t}(1+\beta_{1,t})}{2(1-\beta_{1,t})}\Big \Vert s_t^{-1/4} m_t\Big \Vert^2\Big ] }_Q + \underbrace{\sum_{t=1}^T \Big [ \frac{\beta_{1,t}}{2 \alpha_{1,t} (1 - \beta_{1,t})} \Big \Vert s_t^{1/4} (\theta_t - \theta^*) \Big \Vert^2    \Big ] }_R \nonumber \\
	& + \underbrace{\sum_{t=1}^T \Big [
	\frac{\alpha_{2,t}^2}{2\alpha_{1,t}(1-\beta_{1,t})}\Big \Vert s_t^{1/4}\delta_t v_t \Big \Vert^2
	\Big ] }_S \nonumber \\
	& + \underbrace{\sum_{t=1}^T \Big \{
	\frac{\alpha_{2,t}}{\alpha_{1,t}(1-\beta_{1,t})}\Big [\Big \Vert  s_t^{1/4}(\theta_t-\theta^*)\Big \Vert +\Big \Vert \alpha_{1,t}s_t^{-1/4}m_t\Big \Vert  \Big ]\Big \Vert s_t^{1/4}\delta_t v_t \Big \Vert \Big \} }_U
\end{align}
Firstly, bound $P$ in Formula ~\eqref{supeq:F}, assuming $0 < c \leq s_t, \forall t \in [T]$.
\begin{align}
\label{supeq:P}
    P&\leq \frac{1}{2(1-\beta_1)}\sum_{t=1}^T\Big \{ \frac{1}{\alpha_{1,t}}\Big [ \Big \Vert s_t^{1/4}(\theta_t - \theta^*)\Big \Vert^2 - \Big \Vert s_t^{1/4} (\theta_{t+1} - \theta^*) \Big \Vert^2 \Big ]
\Big \}\nonumber \\
	& \Big( \textit{since\ } 0\leq \beta_{1,t}\leq \beta_1<1 \Big)\nonumber \\
	& \leq \frac{1}{2(1-\beta_1)\alpha_1}\Big \Vert s_1^{1/4}(\theta_1 - \theta^*)\Big \Vert^2+ \frac{1}{2(1-\beta_1)}\sum_{t=2}^T\sum_{i=1}^d\Big(\theta_{t,i} - \theta^*_i\Big)^2 \Big [\frac{s_{t,i}^{1/2}}{\alpha_{1,t}} -\frac{\ s_{t-1,i}^{1/2}}{\alpha_{1,t-1}}\Big ] \nonumber \\
	& \leq \frac{D^2_\infty}{2(1-\beta_1)\alpha_1} \sum_{i=1}^ds_{1,i}^{1/2}+ \frac{D^2_\infty}{2(1-\beta_1)}\sum_{t=2}^T\sum_{i=1}^d\Big [\frac{s_{t,i}^{1/2}}{\alpha_{1,t}} -\frac{ s_{t-1,i}^{1/2}}{\alpha_{1,t-1}}\Big ] \nonumber \\
	& \Big( \textit{since\ }\frac{s_{t,i}^{1/2}}{\alpha_{1,t}}\geq \frac{s_{t-1,i}^{1/2}}{\alpha_{1,t-1}} \Big)\nonumber \\
	& = \frac{D^2_\infty}{2(1-\beta_1)\alpha_{1,t}}\sum_{i=1}^d s_{T,i}^{1/2}\nonumber \\
	& = \frac{D^2_\infty \sqrt{T}}{2(1-\beta_1)\alpha_1}\sum_{i=1}^ds_{T,i}^{1/2}
\end{align}
Secondly, bound $Q$ in Formula ~\eqref{supeq:F}.
\begin{align}
\label{supeq:Q}
    Q&\leq \sum_{t=1}^T \Big [ \frac{\alpha_{1,t}(1+\beta_{1,t})}{2\sqrt{c}(1-\beta_{1,t})}\Big \Vert m_t\Big \Vert^2\Big ] \nonumber \\
	&\Big( \textit{since\ } s_t\geq c>0 \Big)\nonumber \\
	&\leq \frac{1}{\sqrt{c}(1-\beta_{1})}\sum_{t=1}^T \alpha_{1,t}\Big \Vert m_t\Big \Vert^2\nonumber \\
	& \Big( \textit{since\ } 0\leq \beta_{1,t}\leq \beta_1<1 \Big)\nonumber \\
	&\leq \frac{dG_\infty^2}{4\sqrt{c}(1-\beta_{1})}\sum_{t=1}^T \frac{\alpha_{1}}{\sqrt{t}}\nonumber \\
	&\leq \frac{dG_\infty^2\alpha_1}{4\sqrt{c}(1-\beta_{1})}\int_{0}^T \frac{1}{\sqrt{t}}\, dt \nonumber \\
	&\leq \frac{dG_\infty^2\alpha_1\sqrt{T}}{2\sqrt{c}(1-\beta_{1})}
\end{align}
Thirdly, bound $R$ in Formula ~\eqref{supeq:F}.
\begin{align}
\label{supeq:R}
	R&\leq \frac{1}{2(1 - \beta_{1})}\sum_{t=1}^T \Big [ \frac{\beta_{1,t}}{\alpha_{1,t}} \Big \Vert s_t^{1/4} (\theta_t - \theta^*) \Big \Vert^2    \Big ] \nonumber \\
	&\leq \frac{D_\infty^2}{2(1 - \beta_{1})}\sum_{t=1}^T\frac{\beta_{1,t}}{\alpha_{1,t}}\sum_{i=1}^d  s_{t,i}^{1/2}  \nonumber \\
	&\leq \frac{D_\infty^2}{2(1 - \beta_{1})\alpha_1}\sum_{t=1}^T\sum_{i=1}^d  \sqrt{t}\beta_{1,t}s_{t,i}^{1/2}  \nonumber \\
	&\leq \frac{dD_\infty^2G_\infty}{2(1 - \beta_{1})\alpha_1\sqrt{\gamma}}\sum_{t=1}^T \sqrt{t}\beta_{1,t}  \nonumber \\
	&\Big(\textit{since\ } s_t \textit{\ is the EMA of\ } \eta_t(g_t-v_t)^2 \textit{\ and \ } \eta_t\in (0,\frac{1}{\gamma}) \Big) 
\end{align}
Fourthly, bound $S$ in Formula ~\eqref{supeq:F}.
\begin{align}
\label{supeq:S}
	S&\leq \sum_{t=1}^T \frac{\alpha_{2,t}^2}{2\alpha_{1,t}(1-\beta_{1,t})}\sum_{i=1}^d \frac{G_\infty^2}{4} s_{t,i}^{1/2}\delta_{t}^2\nonumber \\
	&\leq  \frac{\alpha_{2}^2G_\infty^2}{8\alpha_{1}(1-\beta_{1})}\sum_{t=1}^T\sum_{i=1}^d t^{-3/2} s_{t,i}^{1/2}\delta_{t}^2\nonumber \\
	&\leq  \frac{\alpha_{2}^2c_r^2G_\infty^2}{8\alpha_{1}(1-\beta_{1})}\sum_{t=1}^T\sum_{i=1}^d t^{-3/2} s_{t,i}^{1/2}\nonumber \\
	& \Big( \textit{since\ } 0\leq \delta_{t} \leq c_r \Big)\nonumber \\
	&\leq  \frac{\alpha_{2}^2c_r^2dG_\infty^3}{8\alpha_{1}(1-\beta_{1})\sqrt{\gamma}}\sum_{t=1}^Tt^{-3/2}\nonumber \\
	&=  \frac{\alpha_{2}^2c_r^2dG_\infty^3}{8\alpha_{1}(1-\beta_{1})\sqrt{\gamma}}\Big[1+\sum_{t=2}^Tt^{-3/2}\Big]\nonumber \\
	&\leq  \frac{\alpha_{2}^2c_r^2dG_\infty^3}{8\alpha_{1}(1-\beta_{1})\sqrt{\gamma}}\Big [1+\int_{1}^{T}t^{-3/2}\, dt\Big ]\nonumber \\
	&=  \frac{\alpha_{2}^2c_r^2dG_\infty^3}{8\alpha_{1}(1-\beta_{1})\sqrt{\gamma}}\Big[ 1+2-\frac{2}{\sqrt{T}}\Big]\nonumber \\
	&\leq  \frac{3\alpha_{2}^2c_r^2dG_\infty^3}{8\alpha_{1}(1-\beta_{1})\sqrt{\gamma}}
\end{align}
Fifthly, bound $U$ in Formula ~\eqref{supeq:F}.
\begin{align}
\label{supeq:U}
	U&\leq \frac{\alpha_2}{\alpha_1(1-\beta_{1})}\sum_{t=1}^T \frac{1}{\sqrt{t}}
	\Big [\Big \Vert  s_t^{1/4}(\theta_t-\theta^*)\Big \Vert +\Big \Vert \alpha_{1,t}s_t^{-1/4}m_t\Big \Vert  \Big ]\Big \Vert s_t^{1/4}\delta_t v_t \Big \Vert \nonumber \\
	&\leq \frac{\alpha_2c_r\sqrt{d}G_\infty^{3/2}}{2\alpha_1(1-\beta_{1})\gamma^{1/4}}\sum_{t=1}^T \frac{1}{\sqrt{t}}
	\Big [\Big \Vert  s_t^{1/4}(\theta_t-\theta^*)\Big \Vert +\Big \Vert \alpha_{1,t}s_t^{-1/4}m_t\Big \Vert  \Big ] \nonumber \\
	&\leq \frac{\alpha_2c_rdG_\infty^{3/2}}{2\alpha_1(1-\beta_{1})\gamma^{1/4}}\sum_{t=1}^T \frac{1}{\sqrt{t}}
	\Big [\frac{G_\infty^{1/2}D_\infty}{\gamma^{1/4}} +\frac{\alpha_1G_\infty}{2\sqrt{t}c^{1/4}}  \Big ] \nonumber \\
	&= \frac{\alpha_2c_rdG_\infty^{2}D_\infty}{2\alpha_1(1-\beta_{1})\gamma^{1/2}}\sum_{t=1}^T \frac{1}{\sqrt{t}}+
\frac{\alpha_2c_rdG_\infty^{5/2}}{4(1-\beta_{1})\gamma^{1/4}c^{1/4}}\sum_{t=1}^T \frac{1}{t}  \nonumber \\
	&\leq \frac{\alpha_2c_rdG_\infty^{2}D_\infty}{2\alpha_1(1-\beta_{1})\gamma^{1/2}}\Big[\int_{0}^T\frac{1}{\sqrt{T}}\, dt\Big]+
\frac{\alpha_2c_rdG_\infty^{5/2}}{4(1-\beta_{1})\gamma^{1/4}c^{1/4}}\Big[1+\int_{1}^T \frac{1}{t}\, dt\Big]\nonumber \\
	&\leq \frac{\alpha_2c_rdG_\infty^{2}D_\infty \sqrt{T}}{\alpha_1(1-\beta_{1})\sqrt{\gamma}}+
\frac{\alpha_2c_rdG_\infty^{5/2}(1+\ln T)}{4(1-\beta_{1})\gamma^{1/4}c^{1/4}}
\end{align}

Apply formulas~\eqref{supeq:P},~\eqref{supeq:Q},~\eqref{supeq:R},~\eqref{supeq:S} and~\eqref{supeq:U} to~\eqref{supeq:F}, we have:
\begin{align}
\label{supeq:F1}
	\sum_{t=1}^T f_t(\theta_t) - f_t(\theta^*) & \leq P+Q+R+S+U\nonumber \\
	&\leq \frac{D^2_\infty \sqrt{T}}{2(1-\beta_1)\alpha_1}\sum_{i=1}^ds_{T,i}^{1/2}
	+\frac{dG_\infty^2\alpha_1\sqrt{T}}{2\sqrt{c}(1-\beta_{1})}
	+\frac{dD_\infty^2G_\infty}{2(1 - \beta_{1})\alpha_1\sqrt{\gamma}}\sum_{t=1}^T \sqrt{t}\beta_{1,t}\nonumber \\
	&+\frac{3\alpha_{2}^2c_r^2dG_\infty^3}{8\alpha_{1}(1-\beta_{1})\sqrt{\gamma}}
	+\frac{\alpha_2c_rdG_\infty^{2}D_\infty \sqrt{T}}{\alpha_1(1-\beta_{1})\sqrt{\gamma}}+
\frac{\alpha_2c_rdG_\infty^{5/2}(1+\ln T)}{4(1-\beta_{1})\gamma^{1/4}c^{1/4}}
\end{align} \hfill \qedsymbol

\begin{corollary}
Suppose $\beta_{1,t} = \beta_1 \lambda^t,\ \ 0<\lambda<1$ in Theorem  \eqref{suptheorem:2}, then we have:
\begin{align}
\sum_{t=1}^T f_t(\theta_t) - f_t(\theta^*) & \leq \frac{D^2_\infty \sqrt{T}}{2(1-\beta_1)\alpha_1}\sum_{i=1}^ds_{T,i}^{1/2}
	+\frac{dG_\infty^2\alpha_1\sqrt{T}}{2\sqrt{c}(1-\beta_{1})}
	+\frac{dD_\infty^2G_\infty \beta_1}{2(1 - \beta_{1})\alpha_1\sqrt{\gamma}(1-\lambda)^2}\nonumber \\
	&+\frac{3\alpha_{2}^2c_r^2dG_\infty^3}{8\alpha_{1}(1-\beta_{1})\sqrt{\gamma}}
	+\frac{\alpha_2c_rdG_\infty^{2}D_\infty \sqrt{T}}{\alpha_1(1-\beta_{1})\sqrt{\gamma}}+
\frac{\alpha_2c_rdG_\infty^{5/2}(1+\ln T)}{4(1-\beta_{1})\gamma^{1/4}c^{1/4}}
\end{align}
\end{corollary}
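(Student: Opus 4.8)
The plan is to specialize the regret bound of Theorem~\ref{suptheorem:2} to the decay schedule $\beta_{1,t} = \beta_1 \lambda^t$ with $0<\lambda<1$. Observe that every term in the bound of Theorem~\ref{suptheorem:2} \emph{except} the third is already free of any explicit summation over $\beta_{1,t}$: they depend only on $\beta_1$, $\alpha_1$, $\alpha_2$, the problem constants $D_\infty, G_\infty, c, c_r, \gamma, d$, and powers of $T$. Consequently these five terms carry over verbatim, and the entire task reduces to replacing the factor $\sum_{t=1}^T \sqrt{t}\,\beta_{1,t}$ appearing in the third term by a closed-form bound in terms of $\lambda$.

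First I would substitute the schedule to write $\sum_{t=1}^T \sqrt{t}\,\beta_{1,t} = \beta_1 \sum_{t=1}^T \sqrt{t}\,\lambda^t$. Since $\sqrt{t} \le t$ for every integer $t \ge 1$, I bound the finite sum by the corresponding sum with $t$ in place of $\sqrt{t}$, and then (all summands being nonnegative) extend it to an infinite series: $\sum_{t=1}^T \sqrt{t}\,\lambda^t \le \sum_{t=1}^T t\,\lambda^t \le \sum_{t=1}^\infty t\,\lambda^t$. Invoking the standard identity $\sum_{t=1}^\infty t\,\lambda^t = \lambda/(1-\lambda)^2$, valid for $0<\lambda<1$, and then using $\lambda < 1$ one last time, I obtain $\sum_{t=1}^T \sqrt{t}\,\beta_{1,t} \le \beta_1\lambda/(1-\lambda)^2 \le \beta_1/(1-\lambda)^2$.

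Inserting this into the third term of Theorem~\ref{suptheorem:2} gives
\begin{align*}
\frac{dD_\infty^2 G_\infty}{2(1-\beta_1)\alpha_1\sqrt{\gamma}} \sum_{t=1}^T \sqrt{t}\,\beta_{1,t} \le \frac{dD_\infty^2 G_\infty \beta_1}{2(1-\beta_1)\alpha_1\sqrt{\gamma}(1-\lambda)^2},
\end{align*}
which is precisely the third term in the corollary. Combining this with the five unchanged terms reproduces the claimed inequality.

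This corollary is essentially a one-line specialization of the main theorem, so I do not anticipate a genuine obstacle. The only point requiring a little care is the passage from the finite sum to the infinite geometric-type series and the evaluation $\sum_{t\ge 1} t\lambda^t = \lambda/(1-\lambda)^2$; both are justified because the summands are nonnegative and $0<\lambda<1$ guarantees absolute convergence. If one wished to avoid extending to an infinite series, an alternative would be to keep the finite sum and bound it via the integral $\int_0^\infty x\lambda^x\,dx$ or by a telescoping argument, but the infinite-series route is the cleanest and yields exactly the stated constant.
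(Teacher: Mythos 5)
Your proposal is correct and follows essentially the same route as the paper: the paper likewise bounds $\sqrt{t}\le t$ and invokes the arithmetico-geometric series estimate $\sum_{t=1}^T t\,\lambda^{t-1}\le 1/(1-\lambda)^2$ before plugging the result into the third term of the theorem's bound. The only cosmetic difference is that you evaluate $\sum_{t\ge 1} t\lambda^t = \lambda/(1-\lambda)^2$ and then drop the factor $\lambda<1$, whereas the paper states the bound directly with $\lambda^{t-1}$; the two are equivalent.
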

\textbf{ \textit{Proof:\ }} By sum of arithmetico-geometric series, we have:
\begin{align}
\label{supeq:geometric}
    \sum_{t=1}^T \lambda^{t-1}\sqrt{t} \leq \sum_{t=1}^T\lambda^{t-1} t \leq \frac{1}{(1-\lambda)^2}
\end{align}
Plugging~\eqref{supeq:geometric} into~\eqref{supeq:F1}, we can derive the results above. 
\hfill \qedsymbol

\section{Convergence analysis for non-convex stochastic optimization }
\begin{theorem}The update of $\theta_t$ can be described as $\theta_{t+1}=\theta_t-\alpha_{1,t}A_tm_t-\alpha_{2,t}B_tv_t$, and $m_t=\beta_1m_{t-1}+(1-\beta_1)g_t$. $A_t=\frac{1}{\sqrt{s_t}+\epsilon}$, $B_t=\delta_t$. The hyperparameters are set as: $\alpha_{1,t}=\alpha_0t^{-k}$, $\alpha_{2,t}=\alpha_0t^{-2k}$, $\alpha_0\leq \frac{C_l}{LC_r^2}$, $k\in [0.5,1)$. The bounds are $C_lI\preceq A_t\preceq C_rI$, $0I \preceq B_t \preceq C_rI$ and $0<C_l<C_r$ ($A\preceq B$ means $B-A$ is a positive semi-definite matrix). And the $\epsilon$ and $N$ ensure $C_l$ and $C_r$ exist. Assume $f$ is upper bounded by $M_f$. Denote $\frac{t^{-k}}{1-\beta_1}A_t^{-1}B_tv_t-\frac{{(t-1)}^{-k}\beta_1}{1-\beta_1}A_{t-1}^{-1}B_{t-1}v_{t-1}$ as $H_t$, and let $H_1=\frac{1}{1-\beta_1}A_1^{-1}B_1v_1$\\
Under the assumptions:
\begin{itemize}[leftmargin=*]
    \item $f$ is differentiable and $f^*\leq f \leq F$. $\nabla f(x)$ is L-Lipschitz continuous, $i.e.\ \ \Vert \nabla f(x) - \nabla f(y)  \Vert \leq L  \Vert x-y  \Vert,\ \forall x,y$.
    
    \item The noisy gradient is unbias and its norm is bounded by N which also bounds the gradient's norm. $i.e.\ \ \mathbb{E} g_t = \nabla f(x)$, $\Vert g_t \Vert_\infty \leq N$, $\Vert f_t \Vert_\infty \leq N$.


\end{itemize}
For sequence $\{\theta_t\}$ generated by HVAdam, we have: 
\begin{align*}
\frac{1}{T}\sum \limits^T_{t=1}\Big \Vert \nabla f(\theta) \Big \Vert^2 \leq \frac{1}{\alpha_0 C_l} T^{k -1} \Big[ F - f^* + JN^2\Big(1+\frac{1}{1-\beta_1}C_l^{-1}C_r\Big)^2 \Big]+3T^{-k}\frac{dC_l^{-1}C_rN^2}{(1-\beta_1)(1-k)}
\end{align*}
where
\begin{align*}
J=\frac{\beta_1^2}{4L (1-\beta_1)^2} + \frac{1}{1-\beta_1} \alpha_0 C_r + \Big( \frac{\beta_1}{1-\beta_1}+\frac{1}{2} \Big) L\alpha_0^2C_u^2 \Big(1+ \int_{1}^{T} t^{-2p}\, \mathrm{d}t\Big) \nonumber
\end{align*}
It shows that when $k=0.5$, HVAdam has a convergence rate of $O(\log T/\sqrt{T})$.
\end{theorem}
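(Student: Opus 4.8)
The plan is to follow the standard auxiliary-sequence argument for momentum-based non-convex analysis (as used for Adam/AdaBelief-type methods), adapted to absorb the extra hidden-vector term. First I would introduce the shifted iterate \(z_t := \theta_t + \frac{\beta_1}{1-\beta_1}(\theta_t-\theta_{t-1})\), with \(z_1:=\theta_1\). Its purpose is to ``de-momentumize'' the update: expanding \(z_{t+1}-z_t\) with \(\theta_{t+1}-\theta_t = -\alpha_{1,t}A_t m_t - \alpha_{2,t}B_t v_t\) and \(m_t=\beta_1 m_{t-1}+(1-\beta_1)g_t\) collapses the momentum recursion, leaving a term proportional to the fresh gradient \(g_t\) plus a remainder. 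Because \(\alpha_{2,t}=\alpha_{1,t}t^{-k}\), the hidden-vector contribution can be rewritten as \(\alpha_{1,t}A_t\,(t^{-k}A_t^{-1}B_t v_t)\), and the two consecutive such terms are organized into the quantity \(H_t\) defined in the statement (with boundary case \(H_1\)). The goal is a recursion of the schematic form \(z_{t+1}-z_t = -\alpha_{1,t}A_t g_t - \alpha_{1,t}A_t H_t + E_t\), where \(E_t\) collects the mismatch produced because \(\alpha_{1,t}A_t \ne \alpha_{1,t-1}A_{t-1}\).

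Next I would apply the \(L\)-smoothness descent lemma along \(\{z_t\}\):
\begin{equation*}
f(z_{t+1}) \le f(z_t) + \langle \nabla f(z_t),\, z_{t+1}-z_t\rangle + \frac{L}{2}\Vert z_{t+1}-z_t\Vert^2 .
\end{equation*}
Taking conditional expectation, I would split the inner product as \(\langle \nabla f(\theta_t), z_{t+1}-z_t\rangle + \langle \nabla f(z_t)-\nabla f(\theta_t), z_{t+1}-z_t\rangle\). The first piece contains the main descent term \(-\alpha_{1,t}\langle \nabla f(\theta_t), A_t g_t\rangle\), whose expectation, using unbiasedness \(\mathbb{E}[g_t]=\nabla f(\theta_t)\) and \(A_t\succeq C_l I\), is at most \(-\alpha_{1,t}C_l\Vert \nabla f(\theta_t)\Vert^2\); the second piece is controlled by Cauchy--Schwarz together with \(\Vert \nabla f(z_t)-\nabla f(\theta_t)\Vert \le L\Vert z_t-\theta_t\Vert = \frac{L\beta_1}{1-\beta_1}\Vert\theta_t-\theta_{t-1}\Vert\).

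I would then sum over \(t=1,\dots,T\), telescope \(f(z_{t+1})-f(z_t)\), and use \(f^*\le f\le F\) to bound the telescoped term by \(F-f^*\). The remaining work is to bound every error contribution: the quadratic term \(\tfrac{L}{2}\Vert z_{t+1}-z_t\Vert^2\), the \(H_t\)-sum, and the preconditioner-variation terms \(E_t\). For this I would use \(\Vert g_t\Vert_\infty\le N\), \(C_l I\preceq A_t\preceq C_r I\), \(0\preceq B_t\preceq C_r I\), and crucially \(\Vert v_t\Vert \le \Vert m_t\Vert \le N\), which follows from Theorem~\ref{theorem:hidden_vector}, to keep all \(v_t\)-dependent quantities bounded. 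With \(\alpha_{1,t}=\alpha_0 t^{-k}\), \(\alpha_{2,t}=\alpha_0 t^{-2k}\) the scalar sums reduce to \(\sum_t t^{-k}\), \(\sum_t t^{-2k}\) and \(\sum_t t^{-2p}\); bounding these by integrals yields the factor \(1+\int_1^T t^{-2p}\,dt\) inside \(J\), which equals \(1+\ln T\) precisely when the exponent hits \(1\) (i.e.\ at \(k=1/2\) with \(p=k\)). Finally, using \(\alpha_{1,t}\ge \alpha_{1,T}=\alpha_0 T^{-k}\) to factor the step size out of \(\sum_t \alpha_{1,t}\Vert\nabla f(\theta_t)\Vert^2\) and dividing by \(\alpha_0 C_l T^{1-k}\) converts the accumulated descent into the averaged bound on \(\frac1T\sum_t\Vert\nabla f(\theta_t)\Vert^2\), producing the stated \(T^{k-1}\) and \(T^{-k}\) terms and the \(O(\log T/\sqrt{T})\) rate at \(k=1/2\).

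The hardest part will be the bookkeeping around the time-varying preconditioner: making the cancellation in \(z_{t+1}-z_t\) reproduce exactly \(H_t\) requires carefully factoring \(\alpha_{1,t}A_t\) out of consecutive steps and absorbing the discrepancy \(\alpha_{1,t}A_t-\alpha_{1,t-1}A_{t-1}\) into controllable errors \(E_t\). A related subtlety is that \(A_t=(\sqrt{s_t}+\epsilon)^{-1}\) depends on \(g_t\), so in \(\mathbb{E}\langle\nabla f(\theta_t),A_tg_t\rangle\) the preconditioner and gradient are correlated; I would address this through the uniform spectral bound \(A_t\succeq C_l I\) (guaranteed by the assumption that \(\epsilon,N\) make \(C_l,C_r\) exist) rather than by invoking independence, so that unbiasedness can still be applied to the \(\nabla f(\theta_t)\) factor. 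The remaining estimates are then routine, if tedious, integral comparisons.
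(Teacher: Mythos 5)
Your overall architecture is sound and your treatment of the hidden-vector term is essentially the paper's: the paper likewise rewrites $-\alpha_{2,t}B_tv_t$ as $-\alpha_{1,t}A_t\,(t^{-k}A_t^{-1}B_tv_t)$ and telescopes consecutive terms into $H_t$, so that the update becomes exactly $\theta_{t+1}=\theta_t-\alpha_{1,t}A_tm_{h,t}$ with $m_{h,t}$ the momentum of the perturbed gradient $g_{h,t}=g_t+H_t$. Where you diverge is the descent machinery: the paper does not use the shifted iterate $z_t=\theta_t+\frac{\beta_1}{1-\beta_1}(\theta_t-\theta_{t-1})$; it instead follows the Chen/Zhuang-style summation-by-parts route, setting $Q_t=\alpha_tA_t\nabla f(x_t)$, using the identity of Lemma~\ref{lemma:mt} to convert $\sum_t\langle Q_t,g_{h,t}\rangle$ into sums over $\langle Q_t,m_{h,t}\rangle=\langle\nabla f(x_t),x_t-x_{t+1}\rangle$ and difference terms $\langle Q_t-Q_{t+1},m_{h,t}\rangle$, each bounded by smoothness, H\"older and Young. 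Both routes are standard and lead to the same rate; yours is arguably cleaner for isolating the main descent term, but it leans on an explicit monotonicity of $\alpha_tA_t$ (the paper assumes $A_{t+1}\leq A_t$ element-wise, AMSGrad-style, so that the total variation $\sum_t\Vert\alpha_tA_t-\alpha_{t+1}A_{t+1}\Vert_1$ telescopes); you should state that assumption rather than leaving the $E_t$ terms as ``controllable.''

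The one genuine gap is your handling of the correlation between $A_t$ and $g_t$. You propose to deal with $\mathbb{E}\langle\nabla f(\theta_t),A_tg_t\rangle$ ``through the uniform spectral bound $A_t\succeq C_lI$ rather than by invoking independence, so that unbiasedness can still be applied.'' That does not work as stated: writing $g_t=\nabla f(\theta_t)+\xi_t$ with $\mathbb{E}\xi_t=0$, the cross term $\mathbb{E}\langle\nabla f(\theta_t),A_t\xi_t\rangle$ does not vanish because $A_t$ is a function of $g_t$, and a two-sided spectral bound on $A_t$ gives no sign control on this term, so the lower bound $C_l\Vert\nabla f(\theta_t)\Vert^2$ can fail. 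The standard and necessary fix — which is exactly what the paper does — is to decouple first: replace $\alpha_tA_t$ by the past-measurable $\alpha_{t-1}A_{t-1}$ via $\langle Q_t,g_{h,t}\rangle=\langle\alpha_{t-1}A_{t-1}\nabla f(x_t),g_{h,t}\rangle-\langle(\alpha_{t-1}A_{t-1}-\alpha_tA_t)\nabla f(x_t),g_{h,t}\rangle$, apply unbiasedness only to the first piece, and control the second by H\"older plus the telescoping total variation of $\alpha_tA_t$. Without this step your main descent inequality has a hole. A smaller bookkeeping point: your bound $\Vert v_t\Vert\leq\Vert m_t\Vert\leq N$ mixes norms ($\Vert g_t\Vert_\infty\leq N$ gives $\Vert m_t\Vert\leq\sqrt{d}N$ in the Euclidean norm), which affects the dimensional constants but not the rate.
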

\textbf{ \textit{Proof: }} Let $m_{h,1}=\beta_1m_{h,t-1}+(1-\beta_1)g_{h,t}$, $g_{h,t}=g_t+H_t$, then we have:
\begin{align}
\theta_{t+1}&=\theta_t-\alpha_{1,t}A_tm_t-\alpha_{2,t}B_tv_t \nonumber \\
	&=\theta_t-\alpha_{1,t}A_t(m_t+t^{-k}A_t^{-1}B_tv_t) \nonumber \\
	&\Big( \textit{since\ } \alpha_{1,t}=\alpha_0t^{-k}, \alpha_{2,t}=\alpha_0t^{-2k} \Big)\nonumber \\
	&=\theta_t-\alpha_{1,t}A_t\Big[(1-\beta_1)\sum_{i=1}^t\beta_1^{t-i}g_i+t^{-k}A_t^{-1}B_tv_t\Big] \nonumber \\
	&=\theta_t-\alpha_{1,t}A_t(1-\beta_1)\Big[\sum_{i=1}^t\beta_1^{t-i}g_i+\sum_{i=1}^t\frac{\beta_1^{t-i}i^{-k}}{1-\beta_1}A_i^{-1}B_iv_i - \sum_{i=1}^{t-1}\frac{\beta_1^{t-i}i^{-k}}{1-\beta_1}A_{i}^{-1}B_{i}v_{i}\Big] \nonumber \\
	&=\theta_t-\alpha_{1,t}A_t(1-\beta_1)\Big[\sum_{i=1}^t\beta_1^{t-i}g_i+\sum_{i=1}^t\frac{\beta_1^{t-i}i^{-k}}{1-\beta_1}A_i^{-1}B_iv_i - \sum_{i=2}^{t}\frac{\beta_1^{t-(i-1)}(i-1)^{-k}}{1-\beta_1}A_{i-1}^{-1}B_{i-1}v_{i-1}\Big] \nonumber \\
	&=\theta_t-\alpha_{1,t}A_t(1-\beta_1)\Big[\sum_{i=1}^t\beta_1^{t-i}g_i+\sum_{i=1}^t\frac{\beta_1^{t-i}i^{-k}}{1-\beta_1}A_i^{-1}B_iv_i - \sum_{i=2}^{t}\frac{\beta_1^{t-i}(i-1)^{-k}}{1-\beta_1}\beta_1A_{i-1}^{-1}B_{i-1}v_{i-1}\Big] \nonumber \\
	&=\theta_t-\alpha_{1,t}A_t(1-\beta_1)\Big[\sum_{i=1}^t\beta_1^{t-i}g_i+\sum_{i=2}^t\frac{\beta_1^{t-i}i^{-k}}{1-\beta_1}A_i^{-1}B_iv_i - \sum_{i=2}^{t}\frac{\beta_1^{t-i}(i-1)^{-k}}{1-\beta_1}\beta_1A_{i-1}^{-1}B_{i-1}v_{i-1} \nonumber \\
    &+\frac{\beta_1^{t-1}}{1-\beta_1}A_1^{-1}B_1v_1\Big] \nonumber \\
	&=\theta_t-\alpha_{1,t}A_t(1-\beta_1)\Bigg\{\sum_{i=1}^t\beta_1^{t-i}g_i+\sum_{i=2}^t\beta_1^{t-i}\Big[\frac{i^{-k}}{1-\beta_1}A_i^{-1}B_iv_i - \frac{(i-1)^{-k}}{1-\beta_1}\beta_1A_{i-1}^{-1}B_{i-1}v_{i-1}\Big] \nonumber \\
    &+\frac{\beta_1^{t-1}}{1-\beta_1}A_1^{-1}B_1v_1\Bigg\} \nonumber \\
	&=\theta_t-\alpha_{1,t}A_t(1-\beta_1)\Big[\sum_{i=1}^t\beta_1^{t-i}g_i+\sum_{i=2}^t\beta_1^{t-i}H_i+\beta_1^{t-1}H_1\Big] \nonumber \\
	&=\theta_t-\alpha_{1,t}A_t(1-\beta_1)\sum_{i=1}^t\beta_1^{t-i}\Big[g_i+H_i\Big] \nonumber \\
	&=\theta_t-\alpha_{1,t}A_t(1-\beta_1)\sum_{i=1}^t\beta_1^{t-i}g_{h,i} \nonumber \\
	&=\theta_t-\alpha_{1,t}A_tm_{h,t}
\end{align}
Let $\alpha_t=\alpha_{1,t}$. And under the above assumptions, now we have:
\begin{align}
&\theta_{t+1}=\theta_t-\alpha_{t}A_tm_{h,t}\\
&m_{h,1}=\beta_1m_{h,t-1}+(1-\beta_1)g_{h,t}\\
&\mathbb{E}g_{h,t} =\mathbb{E}[g_t+H_t]
	=\mathbb{E}g_t+H_t
	=\nabla f(x)+\mathbb{E}H_t\\
&\Vert g_{h,t} \Vert_\infty =\Vert g_t+H_t \Vert_\infty \leq N+\frac{2(t-1)^{-k}}{1-\beta_1}C_l^{-1}C_rN=N+P\\
&\Big(\textit{Let }P=\frac{2(t-1)^{-k}}{1-\beta_1}C_l^{-1}C_rN\Big)\nonumber \\
&\mathbb{E}\big[\Vert g_{h,t}-\nabla f(x)\Vert^2\big]=\mathbb{E}\big[\Vert g_t + H_t-\nabla f(x)\Vert^2\big]=2\mathbb{E}\big[\Vert g_t-\nabla f(x)\Vert^2 + \Vert H_t\Vert^2\big]\nonumber \\
	&\leq 2\sigma^2+2P^2
\end{align}
Refer to \cite{zhuang2021momentum} for the rest of the proof. At last we have:
\begin{align}
\frac{1}{T}\sum \limits^T_{t=1}\Big \Vert \nabla f(\theta) \Big \Vert^2 \leq \frac{1}{\alpha_0C_l}T^{k-1}\Big[ F -f^*+J(D+N)^2\Big]
\end{align}
where
\begin{align}
J=\frac{\beta_1^2}{4L(1-\beta_1)^2}+\frac{\alpha_0(D+N)}{1-\beta_1}+\big( \frac{\beta_1}{1-\beta_1} +\frac{1}{2}\big)L\alpha_0^2C_r^2\frac{1}{1-2k}
\end{align}

\begin{lemma}
\label{lemma:mt}
Let $m_t = \beta_1 m_{t-1}+(1-\beta_1)g_t$, let $B_t \in \mathbb{R}^d$, then
\begin{equation}
   \Big \langle B_t, g_t\Big \rangle = \frac{1}{1-\beta_1} \Big(\Big \langle B_t, m_t\Big \rangle -\Big \langle B_{t-1}, m_{t-1}\Big \rangle \Big) +\Big \langle B_{t-1}, m_{t-1}\Big \rangle + \frac{\beta_1}{ 1 - \beta_1}\Big \langle B_{t-1} - B_t, m_{t-1}\Big \rangle
\end{equation}
\end{lemma}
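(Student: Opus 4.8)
The plan is to prove this as a pure algebraic identity driven entirely by the EMA recursion $m_t = \beta_1 m_{t-1} + (1-\beta_1)g_t$, with no analysis required. First I would solve the recursion for the current gradient. Since $\beta_1 \in [0,1)$ we have $1-\beta_1 \neq 0$, so $g_t = \frac{1}{1-\beta_1}\left(m_t - \beta_1 m_{t-1}\right)$. Taking the inner product of both sides with $B_t$ and using bilinearity gives the closed form
\[
\langle B_t, g_t\rangle = \frac{1}{1-\beta_1}\langle B_t, m_t\rangle - \frac{\beta_1}{1-\beta_1}\langle B_t, m_{t-1}\rangle .
\]
This already equals the left-hand side; all remaining work is to show the stated right-hand side reduces to the same expression.

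Next I would expand the claimed right-hand side term by term, using bilinearity to split $\langle B_{t-1}-B_t, m_{t-1}\rangle = \langle B_{t-1}, m_{t-1}\rangle - \langle B_t, m_{t-1}\rangle$. The terms involving $\langle B_t, m_t\rangle$ and $\langle B_t, m_{t-1}\rangle$ immediately match my closed form, so the only thing to check is that the spurious $\langle B_{t-1}, m_{t-1}\rangle$ contributions cancel. Collecting their coefficients yields $-\frac{1}{1-\beta_1} + 1 + \frac{\beta_1}{1-\beta_1} = \frac{-1 + (1-\beta_1) + \beta_1}{1-\beta_1} = 0$, so these terms vanish and the right-hand side collapses exactly onto the expression above, completing the proof.

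There is no genuine obstacle here; the result is a bookkeeping identity and the verification is a few lines of expansion. The only point worth flagging is the \emph{purpose} of the arrangement: the identity is engineered so that $\langle B_t, g_t\rangle$ is written as a telescoping difference $\frac{1}{1-\beta_1}\left(\langle B_t, m_t\rangle - \langle B_{t-1}, m_{t-1}\rangle\right)$ plus lower-order remainder terms $\langle B_{t-1}, m_{t-1}\rangle$ and $\frac{\beta_1}{1-\beta_1}\langle B_{t-1}-B_t, m_{t-1}\rangle$. This form is what makes $\sum_t \langle B_t, g_t\rangle$ summable in the non-convex convergence argument, so the care to take is simply identifying which grouping telescopes and keeping the coefficients consistent throughout the expansion.
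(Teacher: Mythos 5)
Your proof is correct: solving the EMA recursion for $g_t = \frac{1}{1-\beta_1}(m_t - \beta_1 m_{t-1})$, taking the inner product with $B_t$, and checking that the coefficient of $\langle B_{t-1}, m_{t-1}\rangle$ on the right-hand side sums to $\frac{-1+(1-\beta_1)+\beta_1}{1-\beta_1}=0$ is exactly the verification needed. The paper states this lemma without any proof, so there is no alternative argument to compare against; your derivation is the standard bookkeeping one, and your closing remark correctly identifies why the identity is arranged as a telescoping term plus remainders for use in the non-convex analysis.
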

\begin{theorem}
\label{thm:momentum}
Under assumptions 1-4, $\beta_1 < 1, \beta_2 < 1$, also assume $A_{t+1} \leq A_{t}$ element-wise which can be achieved by tracking maximum of $s_t$ as in AMSGrad, $f$ is upper bounded by $F$, $\vert \vert g_t \vert  \vert_{\infty} \leq N$, with learning rate schedule as
\begin{equation}
\label{append_append_eq:lr_schedule}
\alpha_t = \alpha_0 t^{-\eta},\ \ \alpha_0 \leq \frac{C_l}{LC_u^2},\ \  \eta \in (0.5, 1]
\end{equation}
the sequence is generated by 
\begin{equation}
    x_{t+1} = x_t - \alpha_t A_t m_t
\end{equation}
then we have
\begin{equation}
    \frac{1}{T} \sum_{t=1}^T\Big \vert \Big \vert \nabla f(x_t)\Big \vert \Big \vert^2 \leq \frac{1}{\alpha_0 C_l} T^{\eta -1} \Big[ M_f - f^* + E M_g^2 \Big]
\end{equation}
where 
\begin{equation}
    E=  \frac{\beta_1^2}{4L (1-\beta_1)^2} + \frac{1}{1-\beta_1} \alpha_0 M_g + \Big( \frac{\beta_1}{1-\beta_1}+\frac{1}{2} \Big) L\alpha_0^2C_u^2 \frac{1}{1-2\eta}
\end{equation}
\end{theorem}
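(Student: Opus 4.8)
The plan is to run the classical $L$-smoothness descent argument for momentum methods and to use Lemma~\ref{lemma:mt} to exchange the momentum $m_t$ for the stochastic gradient $g_t$. First I would apply the descent inequality to $x_{t+1}=x_t-\alpha_tA_tm_t$, giving
\[
f(x_{t+1})\le f(x_t)-\alpha_t\langle \nabla f(x_t),A_tm_t\rangle+\tfrac{L}{2}\alpha_t^2\Vert A_tm_t\Vert^2,
\]
and sum over $t=1,\dots,T$, so the left side telescopes to $f(x_{T+1})-f(x_1)$, which I later bound below by $f^*-M_f$. The quadratic remainder is controlled at once: using $A_t\preceq C_uI$ and $\Vert m_t\Vert\le M_g$ (from $\Vert g_t\Vert_\infty\le N$), it is at most $\tfrac{L}{2}C_u^2M_g^2\sum_t\alpha_t^2$, and with $\alpha_t=\alpha_0t^{-\eta}$ the sum $\sum_t t^{-2\eta}\le 1+\int_1^T t^{-2\eta}\,\mathrm{d}t$ stays finite for $\eta>1/2$, producing the $\tfrac{1}{1-2\eta}$ factor of $E$ (and an extra $\log T$ at the boundary $\eta=1/2$).

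The heart of the argument is the cross term $\sum_t\alpha_t\langle \nabla f(x_t),A_tm_t\rangle=\sum_t\langle B_t,m_t\rangle$ with $B_t:=\alpha_tA_t\nabla f(x_t)$. Here I would invoke Lemma~\ref{lemma:mt}, which writes $\langle B_t,g_t\rangle$ as a telescoping difference $\tfrac{1}{1-\beta_1}(\langle B_t,m_t\rangle-\langle B_{t-1},m_{t-1}\rangle)$ plus the lagged term $\langle B_{t-1},m_{t-1}\rangle$ and the correction $\tfrac{\beta_1}{1-\beta_1}\langle B_{t-1}-B_t,m_{t-1}\rangle$. Summing the identity collapses the telescope, so that $\sum_t\langle B_t,m_t\rangle$ equals $\sum_t\langle B_t,g_t\rangle$ up to a boundary term proportional to $\langle B_T,m_T\rangle$ and the accumulated corrections. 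Passing to conditional expectations and using $\mathbb{E}g_t=\nabla f(x_t)$ together with $A_t\succeq C_lI$ converts the main contribution into the desired lower bound $C_l\sum_t\alpha_t\Vert \nabla f(x_t)\Vert^2$.

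It then remains to show the correction terms inflate only the constant $E$, not the rate. I would split $B_{t-1}-B_t=\alpha_{t-1}A_{t-1}\nabla f(x_{t-1})-\alpha_tA_t\nabla f(x_t)$ into three contributions: the step-size change $\alpha_{t-1}-\alpha_t$, the preconditioner change (handled by the AMSGrad-style monotonicity $A_{t+1}\preceq A_t$, so that the increments $A_{t-1}-A_t\succeq 0$ sum telescopically and contribute $O(1)$), and the gradient change (bounded by $L$-Lipschitzness and the bounded step $\Vert x_t-x_{t-1}\Vert\le \alpha_{t-1}C_uM_g$). Each piece, paired with $\Vert m_{t-1}\Vert\le M_g$ and the summability of $\alpha_t^2$, yields an $O(1)$ constant that I absorb into the three summands of $E$. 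Finally I collect terms, move $C_l\sum_t\alpha_t\Vert \nabla f(x_t)\Vert^2$ to the left, lower-bound every $\alpha_t\ge\alpha_T=\alpha_0T^{-\eta}$, and divide by $C_l\alpha_0T^{1-\eta}$ to obtain $\tfrac{1}{T}\sum_t\Vert \nabla f(x_t)\Vert^2\le \tfrac{T^{\eta-1}}{\alpha_0C_l}\big[M_f-f^*+EM_g^2\big]$.

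The main obstacle will be the correction bookkeeping in the third step: the residual $\langle B_{t-1}-B_t,m_{t-1}\rangle$ mixes three sources of time-variation, and ensuring the preconditioner contribution telescopes rather than accumulating like $\sum_t\alpha_t$ (which would wreck the rate) is precisely what forces the monotonicity $A_{t+1}\preceq A_t$. A secondary subtlety is the filtration issue, since $A_t$ depends on $g_t$ through $s_t$; I would resolve it as in \cite{zhuang2021momentum} by conditioning on the past and exploiting the uniform bounds $C_lI\preceq A_t\preceq C_uI$ that hold regardless of the realized $g_t$.
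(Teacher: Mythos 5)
Your proposal follows essentially the same route as the paper's proof: both define $Q_t=\alpha_tA_t\nabla f(x_t)$, use the telescoping identity of Lemma~\ref{lemma:mt} to trade $m_t$ for $g_t$, bound the resulting $\langle Q_t,m_t\rangle$ term by $L$-smoothness (the descent inequality), split $Q_t-Q_{t+1}$ into a gradient-change piece and a stepsize/preconditioner piece that telescopes thanks to the AMSGrad-style monotonicity, and finish by lower-bounding $\alpha_t\ge\alpha_0T^{-\eta}$ and using $\mathbb{E}g_t=\nabla f(x_t)$ after shifting to the past-measurable preconditioner $\alpha_{t-1}A_{t-1}$. The filtration subtlety you flag is handled in the paper exactly as you propose, by deferring to the argument of \cite{zhuang2021momentum}.
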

\begin{proof}
Let $Q_t = \alpha_t A_t \nabla f(x_t)$ and let $Q_0 = Q_1$, we have
\begin{align}
    \sum_{t=1}^T\Big \langle Q_t, g_{h,t}\Big \rangle &= \frac{1}{1-\beta_1}\Big \langle Q_T, m_{h,T}\Big \rangle + \sum_{t=1}^{T}\Big \langle Q_{t-1}, m_{h,t-1}\Big \rangle + \frac{\beta_1}{1-\beta_1} \sum_{t=1}^T\Big \langle Q_{t-1}-Q_t, m_{h,t-1}\Big \rangle \\
    &= \frac{\beta_1}{1-\beta_1}\Big \langle Q_T, m_{h,T}\Big \rangle + \sum_{t=1}^{T}\Big \langle Q_{t}, m_{h,t}\Big \rangle + \frac{\beta_1}{1-\beta_1} \sum_{t=0}^{T-1}\Big \langle Q_{t}-Q_{t+1}, m_{h,t}\Big \rangle \label{eq:sum_prod}
\end{align}
First we derive a lower bound for Eq.~\eqref{eq:sum_prod}.
\begin{align}
   \Big \langle Q_t, g_{h,t}\Big \rangle &=\Big \langle \alpha_t A_t \nabla f(x_t), g_{h,t}\Big \rangle \\
    &=\Big \langle \alpha_{t-1} A_{t-1} \nabla f(x_t), g_{h,t}\Big \rangle -\Big \langle (\alpha_{t-1} A_{t-1}-\alpha_t A_t ) \nabla f(x_t), g_{h,t}\Big \rangle \\
    &\geq\Big \langle \alpha_{t-1} A_{t-1} \nabla f(x_t), g_{h,t}\Big \rangle -\Big \vert \Big \vert \nabla f(x_t) \Big \vert \Big  \vert_{\infty}\Big \vert \Big \vert \alpha_{t-1} A_{t-1} - \alpha_t A_t\Big \vert \Big \vert_1\Big \vert \Big \vert g_{h,t}\Big \vert \Big \vert_{\infty} \\
    &\Big( \textit{By H\"older's inequality} \Big) \nonumber \\
    &\geq \Big \langle \alpha_{t-1} A_{t-1} \nabla f(x_t), g_{h,t}\Big \rangle - \Big(N^2+P^2\Big) \Big(\Big \vert \Big \vert \alpha_{t-1}A_{t-1}\Big \vert \Big \vert_1 -\Big \vert \Big \vert \alpha_t A_t\Big \vert \Big \vert_1 \Big) \\
    &\Big( \textit{Since }\Big \vert \Big \vert \nabla f(x)\Big \vert \Big \vert_{\infty} \leq N,\Big \Vert g_{h,t} \Big \Vert_\infty \leq N+P, \alpha_{t-1} \geq \alpha_t > 0, A_{t-1} \geq A_t >0 \textit{ element-wise} \Big)
\end{align}
Perform telescope sum, we have
\begin{equation}
\label{eq:prod_lower_bound}
    \sum_{t=1}^T\Big \langle Q_t, g_{h,t}\Big \rangle \geq \sum_{t=1}^T\Big \langle \alpha_{t-1} A_{t-1} \nabla f(x_t), g_{h,t}\Big \rangle - \Big(N^2+NP\Big) \Big(\Big \vert \Big \vert \alpha_0 A_0\Big \vert \Big \vert_1 -\Big \vert \Big \vert \alpha_T A_t\Big \vert \Big \vert_1 \Big)
\end{equation}
Next, we derive an upper bound for $\sum_{t=1}^T\Big \langle A_t, g_t\Big \rangle$ by deriving an upper-bound for the RHS of Eq.~\eqref{eq:sum_prod}. We derive an upper bound for each part.

\begin{align}
\langle Q_t, m_{h,t}\Big \rangle &=\Big \langle \alpha_t A_t \nabla f(x_t), m_{h,t}\Big \rangle =\Big \langle \nabla f(x_t),  \alpha_t A_t m_{h,t}\Big \rangle \\
&=\Big \langle \nabla f(x_t), x_{t} - x_{t+1}\Big \rangle \\
&\leq f(x_t) - f(x_{t+1}) + \frac{L}{2}\Big \vert \Big \vert x_{t+1} - x_t\Big \vert \Big \vert^2 \Big( \textit{By L-smoothness of }f \Big) \label{eq:bd13}
\end{align}
Perform telescope sum, we have
\begin{align}
\sum_{t=1}^T\Big \langle Q_t, m_{h,t}\Big \rangle \leq f(x_1) - f(x_{T+1}) + \frac{L}{2} \sum_{t=1}^T\Big \vert \Big \vert \alpha_t A_t m_{h,t}\Big \vert \Big \vert^2 
\label{eq:bd14}
\end{align}
\begin{align}
\langle Q_t - Q_{t+1}, m_{h,t}\Big \rangle &=\Big \langle \alpha_t A_t \nabla f(x_t) - \alpha_{t+1} A_{t+1} \nabla f(x_{t+1}), m_{h,t}\Big \rangle \\
&=\Big \langle \alpha_t A_t \nabla f(x_t) - \alpha_t A_t \nabla f(x_{t+1})  , m_{h,t}\rangle \nonumber \\
&+\Big \langle \alpha_t A_t \nabla f(x_{t+1}) - \alpha_{t+1} A_{t+1} \nabla f(x_{t+1}) , m_{h,t}\rangle \\
&=\Big \langle \nabla f(x_t) - \nabla f(x_{t+1}), \alpha_t A_t m_{h,t}\Big \rangle +\Big \langle (\alpha_t A_t - \alpha_{t+1} A_{t+1}) \nabla f(x_t) ,m_{h,t}\Big \rangle \\
&=\Big \langle \nabla f(x_t) - \nabla f(x_{t+1}), x_t - x_{t+1}\Big \rangle +\Big \langle \nabla f(x_t), ( \alpha_t A_t - \alpha_{t+1} A_{t+1}) m_{h,t}\Big \rangle \\
&\leq L\Big \vert \Big \vert x_{t+1} - x_t\Big \vert \Big \vert^2 +\Big \langle \nabla f(x_t), ( \alpha_t A_t - \alpha_{t+1} A_{t+1}) m_{h,t}\Big \rangle \\
&\Big( \textit{By smoothness of }f \Big) \nonumber \\
&\leq L\Big \vert \Big \vert x_{t+1} - x_t\Big \vert \Big \vert^2 +\Big \vert \Big \vert \nabla f(x_t)\Big \vert \Big \vert_\infty\Big \vert \Big \vert \alpha_t A_t - \alpha_{t+1} A_{t+1}\Big \vert \Big \vert_1\Big \vert \Big \vert m_{h,t}\Big \vert \Big \vert_\infty \\
&\Big( \textit{By H\"older's inequality} \Big) \nonumber \\
&\leq L\Big \vert \Big \vert x_{t+1} - x_t\Big \vert \Big \vert^2 + \Big(N^2+NP\Big) \Big(\Big \vert \Big \vert \alpha_t A_t\Big \vert \Big \vert_1 -\Big \vert \Big \vert \alpha_{t+1} A_{t+1}\Big \vert \Big \vert_1 \Big) \\
&\Big( \textit{Since } \alpha_t \geq \alpha_{t+1} \geq 0, A_t \geq A_{t+1} \geq 0, \textit{element-wise} \Big)
\label{eq:bd15}
\end{align}
Perform telescope sum, we have 
\begin{align}
    \sum_{t=1}^{T-1}\Big \langle Q_t - Q_{t+1}, m_{h,t}\rangle \leq L \sum_{t=1}^{T-1}\Big \vert \Big \vert \alpha_t A_t m_{h,t}\Big \vert \Big \vert^2 + \Big(N^2+NP\Big) \Big(\Big \vert \Big \vert \alpha_1 A_1\Big \vert \Big \vert_1 -\Big \vert \Big \vert \alpha_T A_T\Big \vert \Big \vert_1 \Big)
    \label{eq:bd16}
\end{align}
We also have
\begin{align}
   \Big \langle Q_T, m_{h,T}\Big \rangle &=\Big \langle \alpha_T A_T \nabla f(x_T), m_{h,T} \Big \rangle =\Big \langle \nabla f(x_T), \alpha_T A_T m_{h,T}\Big \rangle \\
    &\leq L \frac{1-\beta_1}{\beta_1}\Big \vert \Big \vert  \alpha_T A_T m_{h,T}\Big \vert \Big \vert^2 + \frac{\beta_1}{4L(1-\beta_1)}\Big \vert \Big \vert \nabla f(x_T)\Big \vert \Big \vert^2 \\
    &\Big( \textit{By Young's inequality} \Big) \nonumber \\
    &\leq L \frac{1-\beta_1}{\beta_1}\Big \vert \Big \vert  \alpha_T A_T m_{h,T}\Big \vert \Big \vert^2 + \frac{\beta_1d}{4L (1-\beta_1)} N^2
    \label{eq:bd17}
\end{align}
Combine Eq.~\eqref{eq:bd14}, Eq.~\eqref{eq:bd16} and Eq.~\eqref{eq:bd17} into Eq.~\eqref{eq:sum_prod}, we have
\begin{align}
    \sum_{t=1}^T\Big \langle Q_t, g_{h,t}\Big \rangle &\leq L\Big \vert \Big \vert  \alpha_T A_T m_{h,T}\Big \vert \Big \vert^2 + \frac{\beta_1^2d}{4L (1-\beta_1)^2} N^2 \nonumber \\ 
    &+ f(x_1) - f(x_{T+1}) + \frac{L}{2} \sum_{t=1}^T\Big \vert \Big \vert \alpha_t A_t m_{h,t}\Big \vert \Big \vert^2 \nonumber \\
    &+ \frac{\beta_1}{1-\beta_1} L \sum_{t=1}^{T-1}\Big \vert \Big \vert \alpha_t A_t m_{h,t}\Big \vert \Big \vert^2 + \frac{\beta_1}{1-\beta_1} \Big(N^2+NP\Big) \Big(\Big \vert \Big \vert \alpha_1 A_1\Big \vert \Big \vert_1 -\Big \vert \Big \vert \alpha_T A_T\Big \vert \Big \vert_1 \Big) \\
    &\leq f(x_1) - f(x_{T+1}) + \Big( \frac{\beta_1}{1-\beta_1}+\frac{1}{2} \Big)L \sum_{t=1}^T\Big \vert \Big \vert \alpha_t A_t m_{h,t}\Big \vert \Big \vert^2 \nonumber \\
    &+ \frac{\beta_1^2d}{4L (1-\beta_1)^2}N^2 + \frac{\beta_1}{1-\beta_1}\Big(N^2+NP\Big) \Big \vert \Big \vert \alpha_1 A_1\Big \vert \Big \vert_1
    \label{eq:bd18}
\end{align}
Combine Eq.~\eqref{eq:prod_lower_bound} and Eq.~\eqref{eq:bd18}, we have
\begin{align}
 \sum_{t=1}^T\Big \langle \alpha_{t-1} A_{t-1} \nabla f(x_t), g_{h,t}\Big \rangle &- \Big(N^2+NP\Big) \Big(\Big \vert \Big \vert \alpha_0 A_0\Big \vert \Big \vert_1 -\Big \vert \Big \vert \alpha_T A_T\Big \vert \Big \vert_1 \Big) \leq    \sum_{t=1}^T\Big \langle Q_t, g_{h,t}\Big \rangle \nonumber \\
 &\leq f(x_1) - f(x_{T+1}) + \Big( \frac{\beta_1}{1-\beta_1}+\frac{1}{2} \Big)L \sum_{t=1}^T\Big \vert \Big \vert \alpha_t A_t m_{h,t}\Big \vert \Big \vert^2 \nonumber \\
    &+ \frac{\beta_1^2d}{4L (1-\beta_1)^2}N^2 + \frac{\beta_1}{1-\beta_1}\Big(N^2+NP\Big) \Big \vert \Big \vert \alpha_1 A_1\Big \vert \Big \vert_1
\end{align}
Hence we have
\begin{align}
    &\sum_{t=1}^T\Big \langle \alpha_{t-1} A_{t-1} \nabla f(x_t), g_{h,t}\Big \rangle \leq f(x_1) - f(x_{T+1}) + \Big( \frac{\beta_1}{1-\beta_1} + \frac{1}{2} \Big)L \sum_{t=1}^T\Big \vert \Big \vert \alpha_t A_t m_t\Big \vert \Big \vert^2 \nonumber \\
    &+ \frac{\beta_1^2d}{4L (1-\beta_1)^2}N^2 + \Big(\Big \vert \Big \vert \alpha_0 A_0\Big \vert \Big \vert_1+\frac{\beta_1}{1-\beta_1} \Big \vert \Big \vert \alpha_1 A_1\Big \vert \Big \vert_1\Big) \Big(N^2+NP\Big)\\
    &\leq f(x_1) - f^* + \Big( \frac{\beta_1}{1-\beta_1}+\frac{1}{2} \Big) L \alpha_0^2 \Big(N+P\Big)^2 C_r^2d \sum_{t=1}^T t^{-2k} \nonumber \\
    &+ \frac{\beta_1^2d}{4L (1-\beta_1)^2}N^2 + \Big(\Big \vert \Big \vert \alpha_0 A_0\Big \vert \Big \vert_1+\frac{\beta_1}{1-\beta_1} \Big \vert \Big \vert \alpha_1 A_1\Big \vert \Big \vert_1\Big) \Big(N^2+NP\Big)\\
    &\leq f(x_1) - f^* \nonumber \\
    &+ \Big(N+P\Big)^2  \Big[ \frac{\beta_1^2}{4L (1-\beta_1)^2} +  \Big \vert \Big \vert \alpha_0 H_0 \Big \vert \Big \vert_1 + \frac{\beta_1}{1-\beta_1} \Big \vert \Big \vert \alpha_1 H_1 \Big \vert \Big \vert_1 + \Big( \frac{\beta_1}{1-\beta_1}+\frac{1}{2} \Big) L\alpha_0^2C_u^2 \Big(1+ \int_{1}^{T} t^{-2p}\, \mathrm{d}t\Big) \Big] \\
    &\leq f(x_1)-f^* + \Big(N+P\Big)^2 \underbrace{ \Big[  \frac{\beta_1^2}{4L (1-\beta_1)^2} + \frac{1}{1-\beta_1} \alpha_0 C_r + \Big( \frac{\beta_1}{1-\beta_1}+\frac{1}{2} \Big) L\alpha_0^2C_u^2 \Big(1+ \int_{1}^{T} t^{-2p}\, \mathrm{d}t\Big) \Big]}_{J}
\end{align}
Take expectations on both sides, we have
\begin{align}
    \sum_{t=1}^T\Big \langle \alpha_{t-1} A_{t-1} \nabla f(x_t), \nabla f(x_t)+H_t\Big \rangle \leq \mathbb{E} f(x_1) - f^*+J \Big(N+P\Big)^2 \leq F - f^* + J \Big(N+P\Big)^2
    \label{eq:bd19}
\end{align}
Note that we have $\alpha_t$ decays monotonically with $t$, hence
\begin{align}
    \sum_{t=1}^T\Big \langle \alpha_{t-1} A_{t-1} \nabla f(x_t), \nabla f(x_t)+H_t\Big \rangle &\geq \alpha_0 T^{-k} \sum_{t=1}^T\Big \langle A_{t-1} \nabla f(x_t), \nabla f(x_t)+H_t\Big \rangle \\
    &\geq \alpha_0 T^{1-k} C_l \frac{1}{T}\sum_{t=1}^T\Big[\Big \vert \Big \vert \nabla f(x_t)\Big \vert \Big \vert^2+\Big \langle \nabla f(x_t), H_t\Big \rangle \Big]\\
    &\geq \alpha_0 T^{1-k} C_l \frac{1}{T}\sum_{t=1}^T\Big[\Big \vert \Big \vert \nabla f(x_t)\Big \vert \Big \vert^2-\Big \vert \Big \vert \nabla f(x_t)\Big \vert \Big \vert \Big \vert \Big \vert H_t\Big \vert \Big \vert  \Big]\\
    &\geq \alpha_0 T^{1-k} C_l \frac{1}{T}\sum_{t=1}^T\Big[\Big \vert \Big \vert \nabla f(x_t)\Big \vert \Big \vert^2-dNP_t\Big]
    \label{eq:bd20}
\end{align}
Combine Eq.~\eqref{eq:bd19} and Eq.~\eqref{eq:bd20}, assume $f$ is upper bounded by $M_f$, we have
\begin{align}
    \frac{1}{T} \sum_{t=1}^T\Big \vert \Big \vert \nabla f(x_t)\Big \vert \Big \vert^2 &\leq \frac{1}{\alpha_0 C_l} T^{k -1} \Big[ F - f^* + J\Big(N+P_1\Big)^2 \Big]+\frac{1}{T} \sum_{t=1}^T dNP_t\\
&= \frac{1}{\alpha_0 C_l} T^{k -1} \Big[ F - f^* + J\Big(N+\frac{1}{1-\beta_1}C_l^{-1}C_rN\Big)^2 \Big]\nonumber \\
&+\frac{1}{T} \Big[\sum_{t=2}^T dN\frac{2(t-1)^{-k}}{1-\beta_1}C_l^{-1}C_rN+\frac{1}{1-\beta_1}dC_l^{-1}C_rN^2\Big]\\
&\leq \frac{1}{\alpha_0 C_l} T^{k -1} \Big[ F - f^* + JN^2\Big(1+\frac{1}{1-\beta_1}C_l^{-1}C_r\Big)^2 \Big]\nonumber \\
&+\frac{1}{T} \Big[\frac{2(T-1)^{1-k}-2k}{(1-\beta_1)(1-k)}dC_l^{-1}C_rN^2+\frac{1}{1-\beta_1}dC_l^{-1}C_rN^2\Big]\\
&\leq \frac{1}{\alpha_0 C_l} T^{k -1} \Big[ F - f^* + JN^2\Big(1+\frac{1}{1-\beta_1}C_l^{-1}C_r\Big)^2 \Big]\nonumber \\
&+\frac{1}{T} \Big[\frac{2T^{1-k}}{(1-\beta_1)(1-k)}dC_l^{-1}C_rN^2+\frac{T^{1-k}}{(1-\beta_1)(1-k)}dC_l^{-1}C_rN^2\Big]\\
&\leq \frac{1}{\alpha_0 C_l} T^{k -1} \Big[ F - f^* + JN^2\Big(1+\frac{1}{1-\beta_1}C_l^{-1}C_r\Big)^2 \Big]\nonumber \\
&+3T^{-k}\frac{dC_l^{-1}C_rN^2}{(1-\beta_1)(1-k)}
\end{align}
\end{proof}

\section{Validation on Simple but Representative Functions}
In this section, we validate the performance of HVAdam for the functions in Figure 2 in paper. In every case, HVAdam chooses the optimal direction and gets the lowest loss in the end.\\

\textbf{Function 1}
\label{functions}
The loss function is $f_1(x,y)=4\sqrt{2}\vert x\vert+\frac{\sqrt{2}}{10}\vert y\vert$. From the optimization trajectories, we can find that the adaptive optimizers update faster than SGDM in the direction of $-\vec{j}$. After 1500 epochs, we can see that HVAdam achieves the lowest loss. In addition, the losses of AdaBelief and Adam are lower than the loss of SGDM. The result is consistent with the previous analysis. 

\textbf{Function 2},
The loss function is $f_2(x,y) = f_2(x,y)=4\vert x-y\vert+\vert\frac{x+y}{10}\vert$. From the optimization trajectories, we can find that HVAdam still updates faster than others in the direction of $-\vec{i}-\vec{j}$, while the other adaptive optimizers have no advantage compared to SGDM. After 1500 epochs, the losses of other adaptive optimizers are very close to the loss of SGDM, while the loss of HVAdam is much lower. The result is also consistent with the previous analysis and shows the effectiveness of our method.  

\textbf{Function 3},
The loss function is $f_3(x,y)=\ln(1+\text{Beale}(x,y))/10$.
Beale\cite{beale1955minimizing} is a commonly used function to test optimizer performance.
And we have further processed it according to \cite{zhuang2020adabelief}.
We use $f_3$ to show that $v_t$ can provide us with intuitive information that helps us to find a better updated direction, even if there is no direction onto which the projection of every step's gradient is not changed. From the optimization trajectories, we can find that HVAdam chooses the proper direction to update and still reaches the optimal point fastest.

\textbf{Function 4},
The loss function is $f_4(x,y)=\text{Rosenbrock}(x,y)$. Rosenbrock\cite{rosenbrock1960automatic} is also a commonly used function to test optimizer performance. Our optimizer also performs better than other optimizers, showing its advantages in a common situation.  

\paragraph{Case study of function in deep learning}
\label{sec:deeplr}
We validate HVAdam for the simple cases above and believe that these cases also exist broadly in deep learning. For example, due to the interaction between neurons, most networks behave following the form $(x\pm y)$\cite{zhuang2020adabelief}. And the absolute value function in $f_1$ and $f_2$ is similar to the ReLU activation \cite{glorot2011deep}. Additionally, the parameters' partial derivatives in deep learning usually differ greatly, implying that directional derivatives may differ greatly, as demonstrated by the first two representative loss functions above. In summary, since HVAdam outperforms the other optimizers in the four loss functions, and these functions largely cover the terms in generic loss functions, we expect HVAdam to perform well in most deep-learning problems.

\section{Details of experiments}
\subsection{Conﬁguration of optimizers}
In this section, we provide a detailed description of the hyperparameters used by different optimizers on various tasks. Our experiments are run with the Pytroch deep learning platform in an environment comprising an Intel Xeon Gold 5117 CPU, 128-GB RAM, and 24-GB NVIDIA GTX 3090 graphics processing unit.

For optimizers other than HVAdam, we adopt the recommended parameters for the identical experimental setup as indicated in the literature of AdaBelief\cite{zhuang2020adabelief}, Adai\cite{xie2022adaptive} and Lookaround\cite{zhang2023Lookaround}.

For HVAdam, the searching scheme of hyperparameter settings is concluded as follows: 
\begin{itemize}
    \item CNNs on CIFAR-10 and CIFAR-100\cite{krizhevsky2009learning}: we search learning rate from \{0.1,0.01,0.001,0.0001\}, search $\gamma$ from \{0,1,2,...,10\}, $\epsilon$ is set fixed $10^{-8}$, $\beta_1=0.9$, $\beta_2=0.999$.
    \item ViT on CIFAR-10 and CIFAR-100: we search the learning rate from \{0.001,0.0001,0.00001\}, $\gamma$ is set fixed 1, search $\epsilon$ from \{$10^{-12}$,$10^{-16}$\}, $\beta_1=0.9$, $\beta_2=0.999$.
    \item ImageNet\cite{deng2009imagenet}: we search the learning rate from \{0.004,0.001\}, $\gamma$ is set to 0.5, $\epsilon$ is set fixed $10^{-8}$, $\beta_1=0.9$, $\beta_2=0.999$.
    \item LSTM: we search learning rate from \{0.01,0.001,0.0001\}, $\gamma$ is set fixed 0.5.
    \item WGAN:\cite{deng2009imagenet}: we search the learning rate from \{0.01,0.001\}, search $\gamma$ from \{0,0.1,0.2,0.5\}, $\epsilon$ is set fixed $10^{-12}$, $\beta_1=0.5$, $\beta_2=0.999$.
    \item WGAN-GP:\cite{deng2009imagenet}: we search the learning rate from \{0.01,0.001\}, search $\gamma$ from \{0,0.1,0.2,0.5\}, $\epsilon$ is set fixed $10^{-12}$, $\beta_1=0.5$, $\beta_2=0.999$.
    \item SNGAN\cite{deng2009imagenet}: we search for the learning rate from \{0.01,0.001\}, search $\gamma$ from \{0,0.001,0.002,0.1,0.5\}, $\epsilon$ is set to$10^{-12}$, $\beta_1=0.5$, $\beta_2=0.999$.
\end{itemize}

\subsection{Conﬁguration of numerical experiments}
We set $lr=0.001$, $\beta_1=0.9$, $\beta_2=0.999$, $\epsilon=10^{-12}$ for Adam and AdaBelief.
We set $lr=0.001$, $\beta=0.9$ for SGDM.
We set $lr=0.001$, $\beta_1=0.9$, $\beta_2=0.999$, $\epsilon=10^{-12}$, $\gamma=0.5$ for HVAdam. And we choose $lr(\delta_{t_2},\widehat{\delta_{t_2}}):=\left\{
        \begin{array}{ll}
            \frac{\delta_{t_2}}{0.9}\cdot10 ,& \text{if }\widehat{\delta_{t_2}} \ge 0.1 \\
            0,   & \text{otherwise}
        \end{array}
    \right.
$.\\
The datasets used in the experiments are shown in Table.~\ref{table:datasets}.
\begin{table}[b]
\caption{The datasets used in experiments.}
\label{table:datasets}
\scalebox{0.73}{
\begin{tabular}{c|c|c|c|c|c|c|c|c|c}
\hline
Model               & VGG11               & ResNet34          & DenseNet121        & LSTM           & WGAN        & WGAN-GP           & SN-GAN         & ViT-B/16  &ResNet50                        \\ \hline
Dataset & CIFAR & CIFAR & CIFAR &  Penn TreeBank & CIFAR-10 & CIFAR-10 & CIFAR-10  &  CIFAR &ImageNet \\ \hline
\end{tabular}
}
\end{table}
Our model parameter settings are the same as those listed in the references in the main text. 
Other experiments are based on the official code for AdaBound, AdaBelief and Lookaround.\footnote{https://github.com/juntang-zhuang/Adabelief-Optimizer}\footnote{https://github.com/Ardcy/Lookaround/tree/main} The training and test data of image classification and natural language processing is shown in Fig.~\ref{fig:cifar} and Fig.~\ref{fig:LSTM}. And the images generated by GANs are shown in Fig.~\ref{fig:wgan} and Fig.~\ref{fig:wgangp}.

\begin{figure}
\begin{subfigure}[b]{0.33\textwidth}
\includegraphics[width=\linewidth]{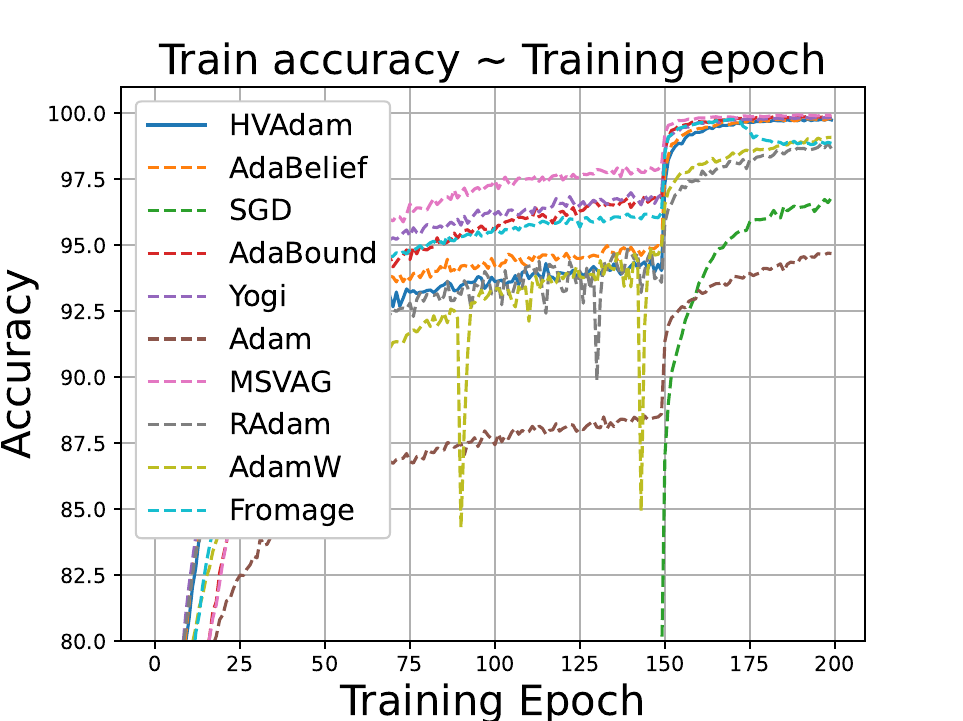}
\caption{\small{
VGG11
}}
\end{subfigure}
\begin{subfigure}[b]{0.32\textwidth}
\includegraphics[width=\linewidth]{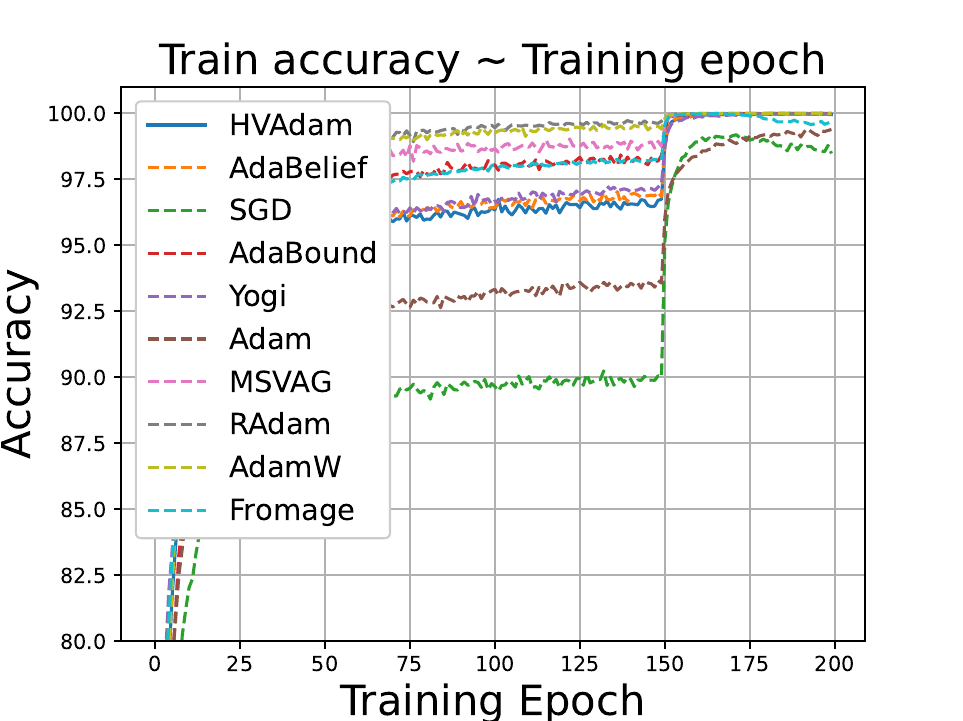}
\caption{\small{
ResNet34 
}}
\end{subfigure}
\begin{subfigure}[b]{0.32\textwidth}
\includegraphics[width=\linewidth]{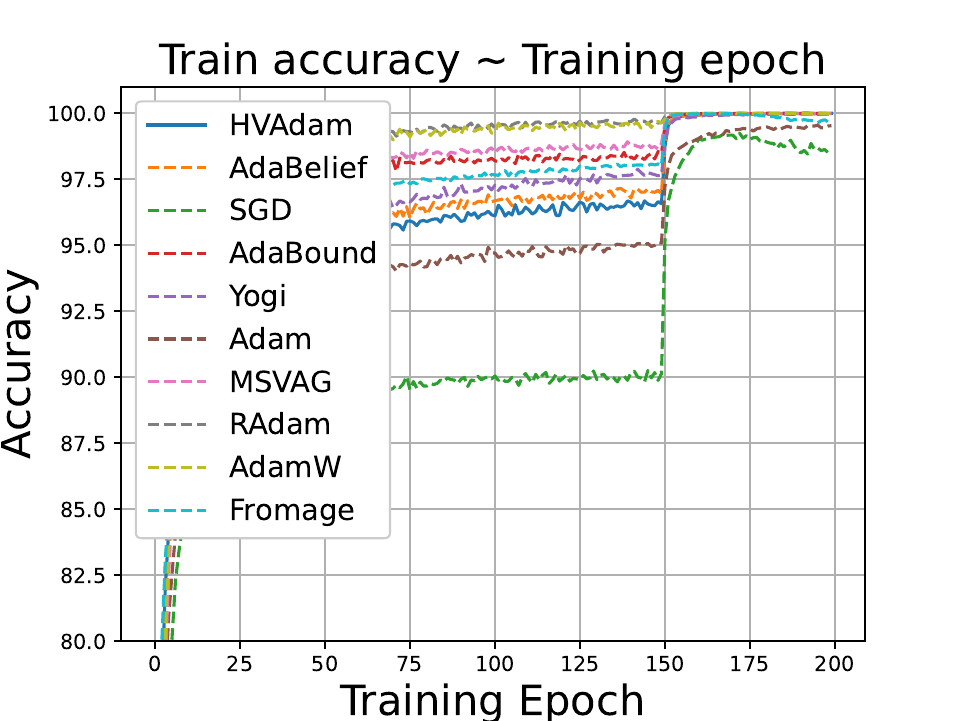}
\caption{\small{
DenseNet121
}}
\end{subfigure}
\\
\begin{subfigure}[b]{0.33\textwidth}
\includegraphics[width=\linewidth]{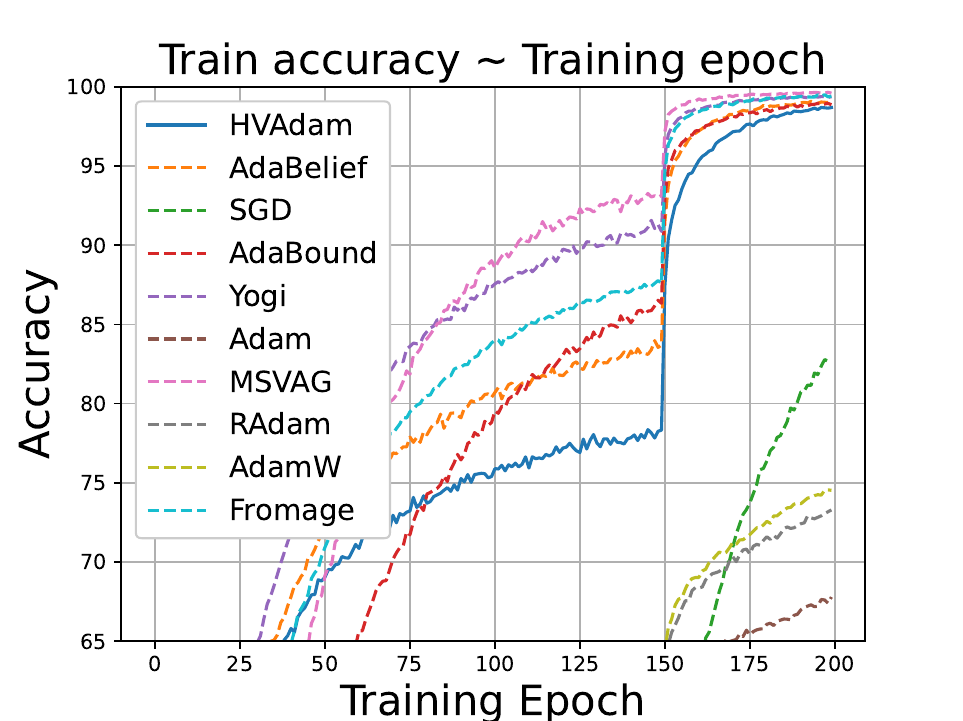}
\caption{\small{
VGG11
}}
\end{subfigure}
\begin{subfigure}[b]{0.32\textwidth}
\includegraphics[width=\linewidth]{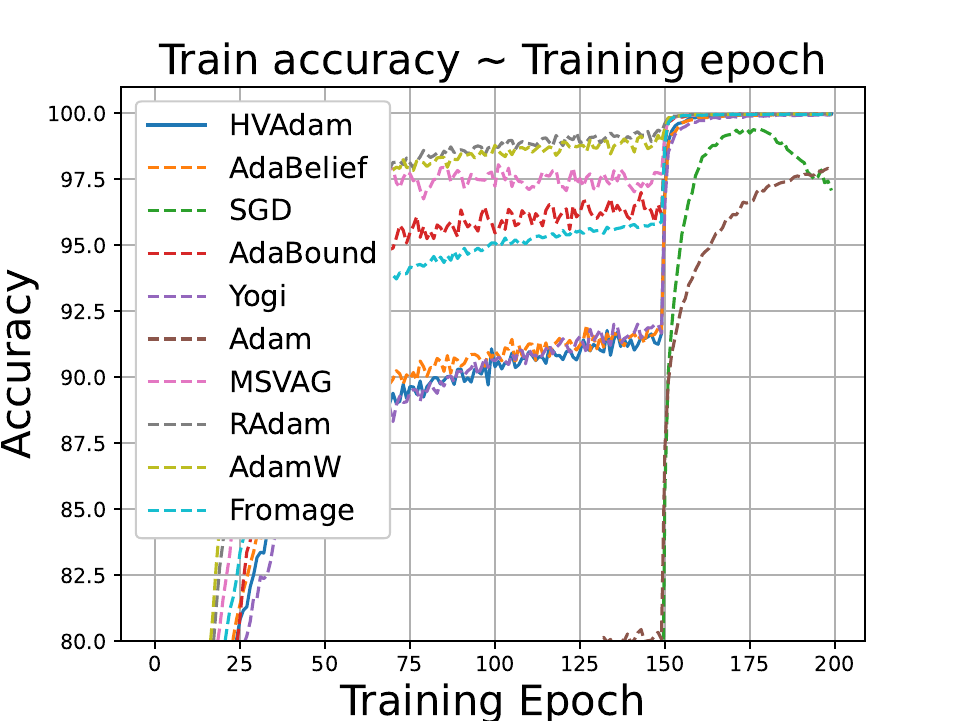}
\caption{\small{
ResNet34 
}}
\end{subfigure}
\begin{subfigure}[b]{0.32\textwidth}
\includegraphics[width=\linewidth]{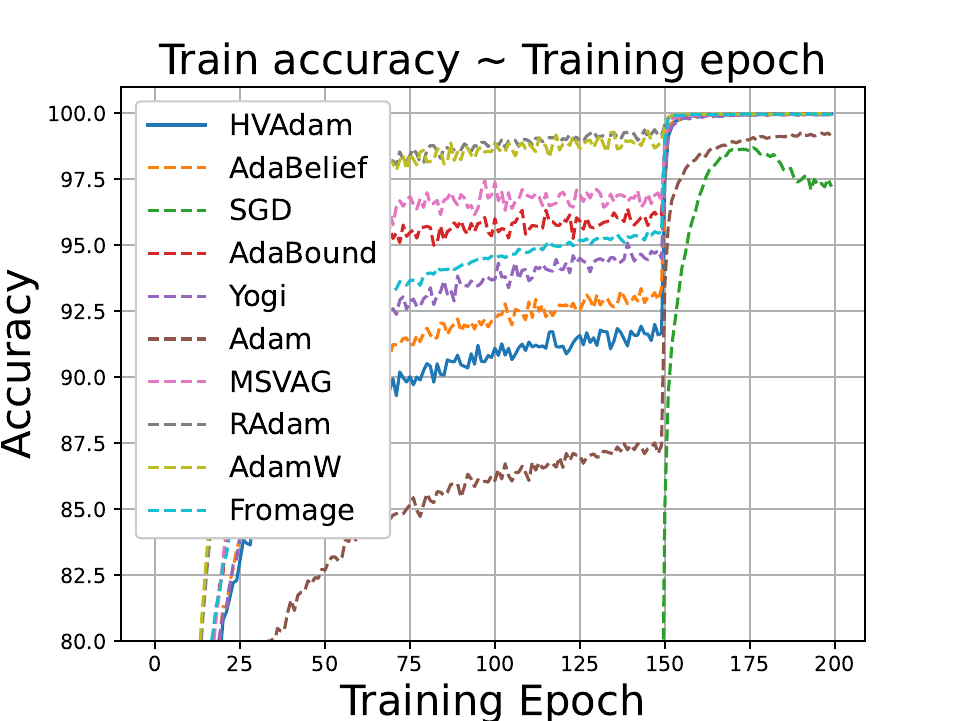}
\caption{\small{
DenseNet121 
}}
\end{subfigure}
\caption{Training accuracies with three models using different optimizers on CIFAR-10 and CIFAR-100.Top: training accuracies of different Network Models on CIFAR-10. Bottom: training accuracies of different Network Models on CIFAR-100}
\label{fig:cifar}
\end{figure}

\begin{figure}
    \begin{subfigure}[b]{0.33\textwidth}
    \includegraphics[width=\linewidth]{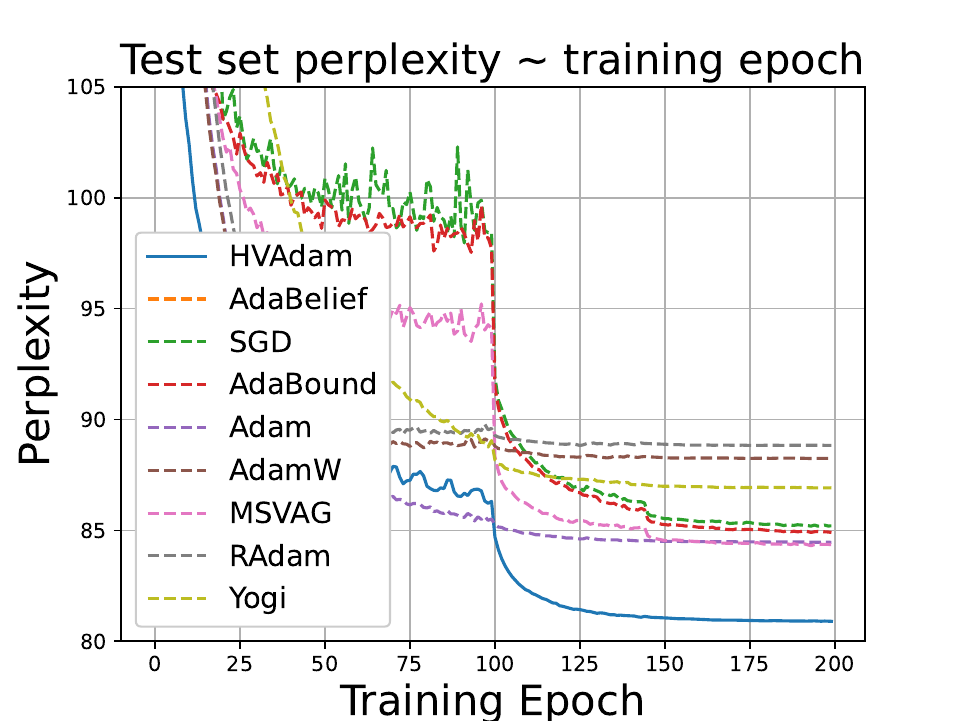}
    \end{subfigure}
    \begin{subfigure}[b]{0.32\textwidth}
    \includegraphics[width=\linewidth]{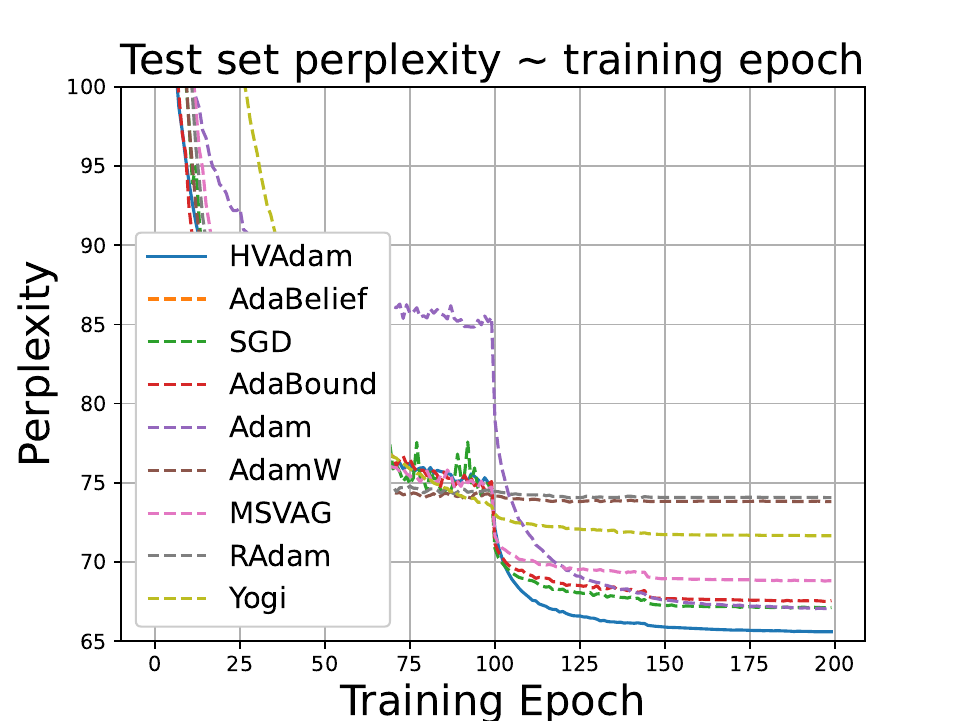}
    \end{subfigure}
    \begin{subfigure}[b]{0.32\textwidth}
    \includegraphics[width=\linewidth]{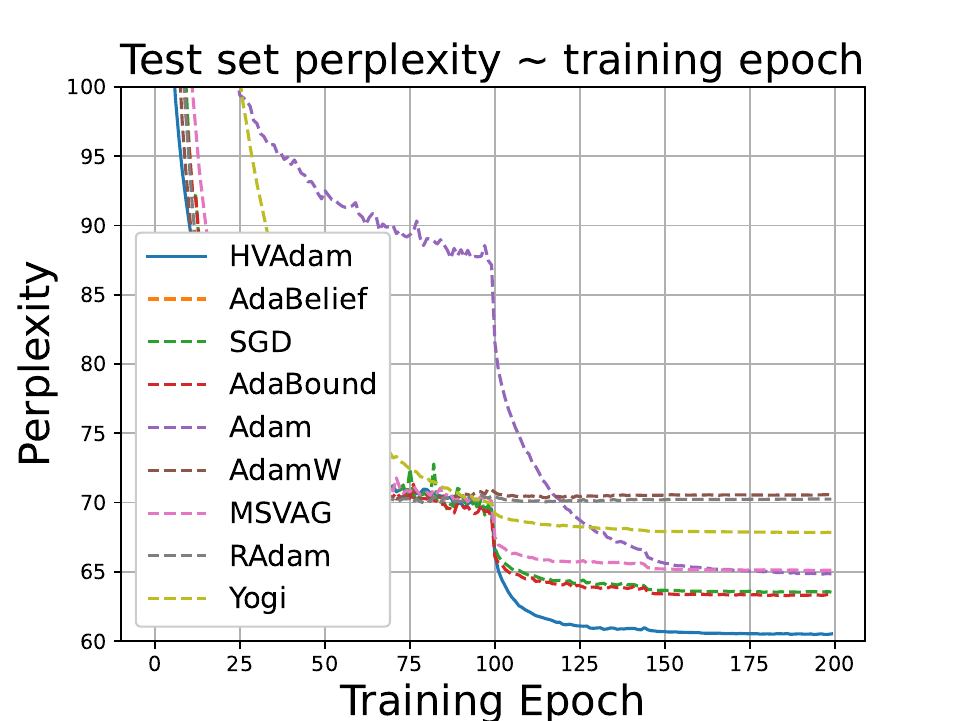}
    \end{subfigure}
    \begin{subfigure}[b]{0.33\textwidth}
    \includegraphics[width=\linewidth]{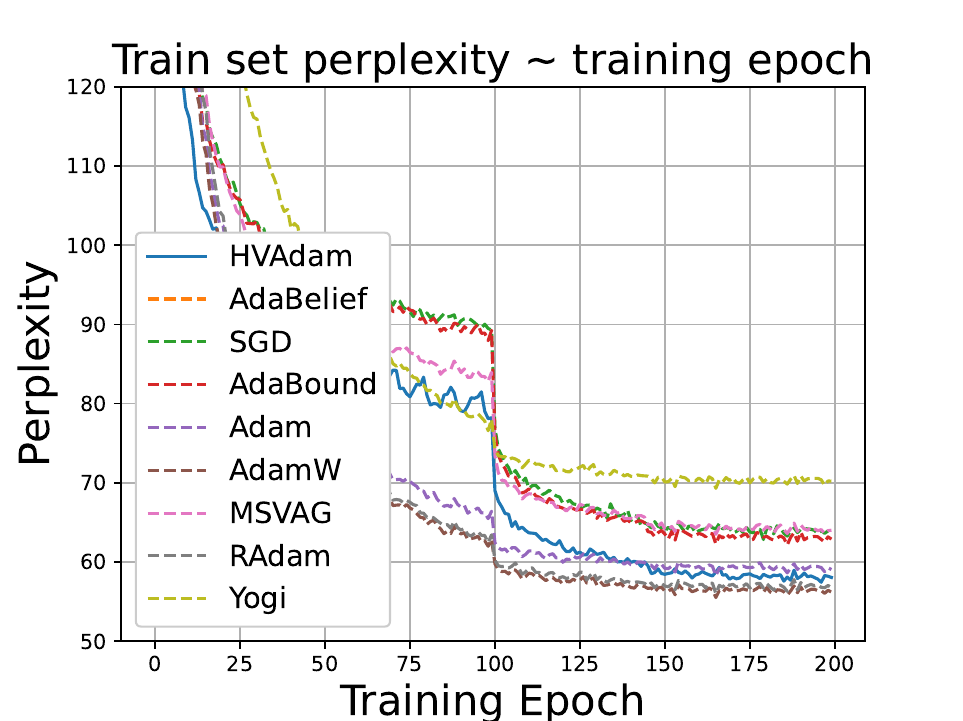}
    \end{subfigure}
    \begin{subfigure}[b]{0.32\textwidth}
    \includegraphics[width=\linewidth]{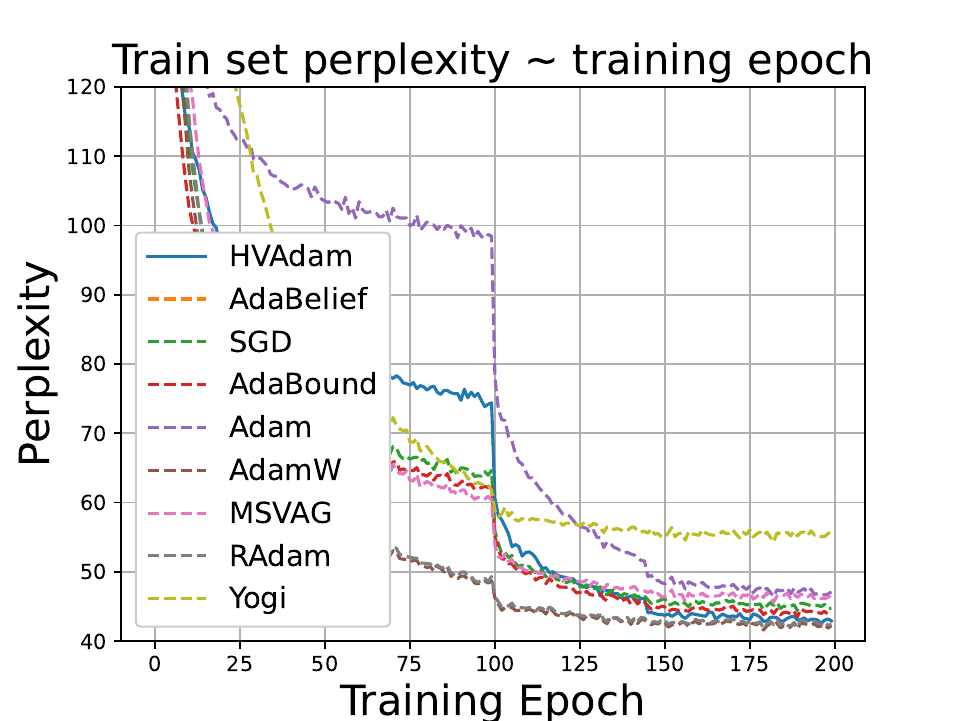}
    \end{subfigure}
    \begin{subfigure}[b]{0.32\textwidth}
    \includegraphics[width=\linewidth]{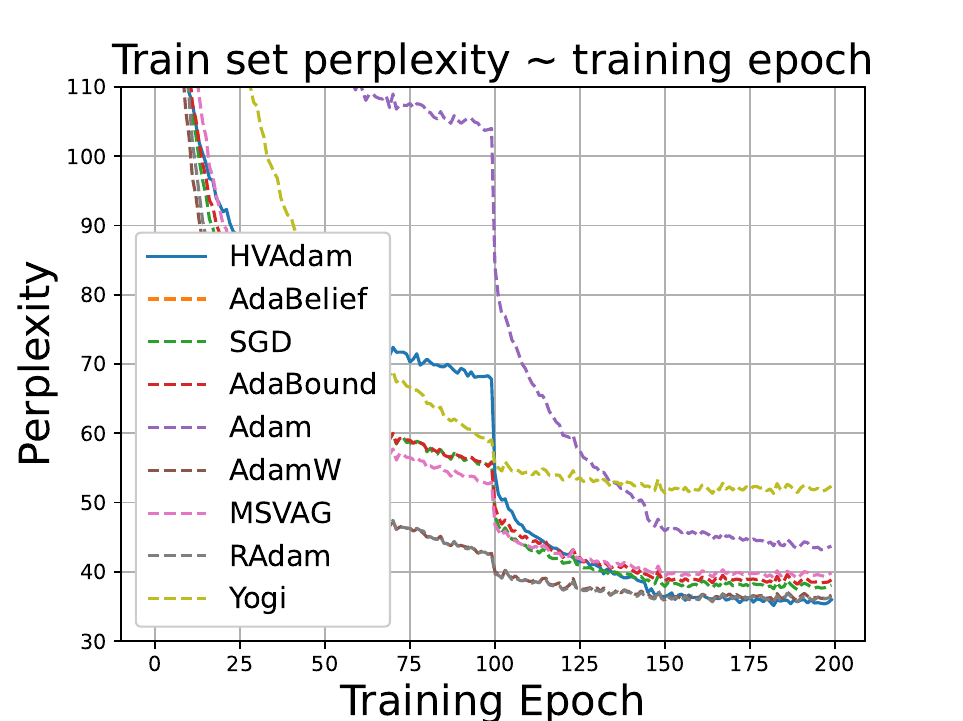}
    \end{subfigure}
    \caption{The test and training perplexity on Penn Treebank for 1,2,3-layer LSTM from left to right. \textbf{Lower} is better.}
     \label{fig:LSTM}
\end{figure}

\begin{figure}
    \centering
    \includegraphics[width=1.0\linewidth]{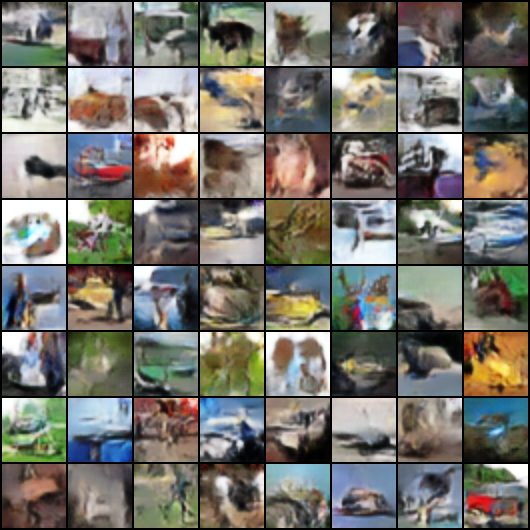}
    \caption{Generated images by the WGAN trained with HVAdam}
    \label{fig:wgan}
\end{figure}

\begin{figure}
    \centering
    \includegraphics[width=1.0\linewidth]{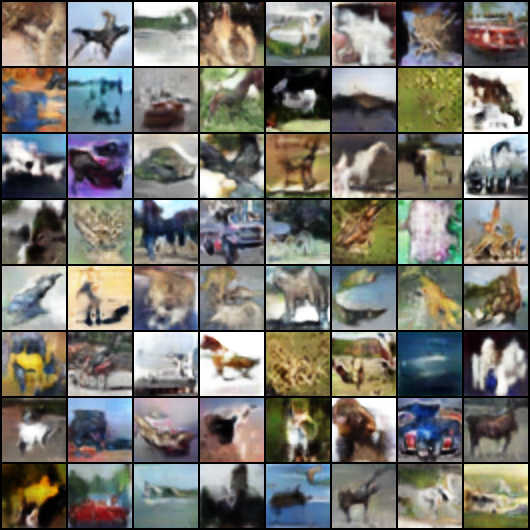}
    \caption{Generated images by the WGAN-GP trained with HVAdam}
    \label{fig:wgangp}
\end{figure}

\section{Additional experiments}
We add additional experiments in this section.
we show the comparison with Padam in Table~\ref{table:padam_compare} . The update of Padam can be written as $\theta_{t+1}=\theta_t - \alpha_1 m_t / a_t^p$. Padam also provides a different adjustment range of $\alpha_1$. It is easy to see when $p=0$, Padam is equivalent to SGD, and when $p=0.5$, Padam is equivalent to Adam. We compare with Padam to validate which adjustment strategy is better since we set $\alpha_2$ as 0. As a result, HVAdam outperforms Padam and demonstrates the advantages of our method.
\begin{table}[h]
\centering
\scalebox{0.73}{
\begin{tabular}{c|c|c|ccccc}
\hline
\multirow{2}{*}{} & \multirow{2}{*}{HVAdam} & \multirow{2}{*}{AdaBelief} & \multicolumn{5}{c}{Padam}                                         \\ \cline{4-8} 
                   &                           & \multicolumn{1}{c|}{} & \multicolumn{1}{c|}{p=1/2 (Adam)}                          & \multicolumn{1}{c|}{p=1/4}                    & \multicolumn{1}{c|}{p=1/8}                          
                 & \multicolumn{1}{c|}{p=1/16}               & p = 0 (SGD)\\ \hline
FID (WGAN)    & \textbf{65.48$\pm$ 1.79} & \multicolumn{1}{c|}{ \underline{82.85$\pm$ 2.21} } & \multicolumn{1}{c|}{106.38$\pm$9.76} & \multicolumn{1}{c|}{422.62$\pm$35.68} & \multicolumn{1}{c|}{330.44$\pm$26.62} & \multicolumn{1}{c|}{357.26$\pm$32.39} & 459.01$\pm$14.62 \\ \hline
FID (WGAN-GP)    & \textbf{58.76$\pm$ 5.29} & \multicolumn{1}{c|}{75.37$\pm$7.37} & \multicolumn{1}{c|}{ \underline{71.87$\pm$ 0.83}}  & \multicolumn{1}{c|}{152.34$\pm$17.49} & \multicolumn{1}{c|}{205.57$\pm$13.79} & \multicolumn{1}{c|}{228.40$\pm$18.24} & 236.99$\pm$7.26\\ \hline
\end{tabular}
}
\caption{Comparison of HVAdam, AdaBelief, and Padam. A Lower FID ($[\mu \pm \sigma]$) is better.}
\label{table:padam_compare}
\end{table}

We also do the experiments on diffusion model, which are based on the official code in \footnote{https://github.com/openai/improved-diffusion}. The results are shown in Tabel~\ref{table:diffusion}.
\begin{table}[h]
\centering
\begin{tabular}{c|c|c|c}
\hline
    & \textbf{HVAdam} & Adam & SGD \\ \hline
FID    & \textbf{11.68} & 15.09 & 82.29\\ \hline
\end{tabular}
\caption{Comparison of HVAdam, Adam, and SGD. A Lower FID ($[\mu \pm \sigma]$) is better.}
\label{table:diffusion}
\end{table}

\end{document}